\documentclass[english]{article}
\usepackage[T1]{fontenc}
\usepackage[latin9]{inputenc}
\usepackage{bm}
\usepackage{amsmath,mathtools}
\usepackage{amssymb}
\usepackage{geometry}
\geometry{verbose,tmargin=1in,bmargin=1in,lmargin=1in,rmargin=1in}
\usepackage[unicode=true,
 bookmarks=false,
 breaklinks=false,
 pdfborder={0 0 1},
 colorlinks=false]{hyperref}
\hypersetup{colorlinks,
linkcolor=red,
anchorcolor=blue,
citecolor=blue,
urlcolor=blue}
\usepackage{amsthm}
\usepackage{comment}
\usepackage{natbib}
\usepackage{booktabs}

\usepackage{graphicx}

\usepackage[linesnumbered,ruled,vlined]{algorithm2e}

\SetCommentSty{mycommfont}
\usepackage{algorithmic}

\usepackage{float}
\usepackage{multirow}
 
\usepackage{dsfont}
\usepackage{tcolorbox}
\usepackage{tabulary}
\usepackage{colortbl}

\usepackage{color}
\definecolor{yxc}{RGB}{255,0,0}
\definecolor{yjc}{RGB}{125,0,0}
\definecolor{ytw}{RGB}{255,69,0}
\definecolor{gen}{RGB}{0,0,200}
\definecolor{cc}{RGB}{231,117,0}

\allowdisplaybreaks

\DeclareMathOperator{\ind}{\mathds{1}}  

\newcommand{\defn}{\coloneqq}

\newcommand{\cov}{\mathsf{Cov}}

\newcommand{\cA}{\mathcal{A}}
\newcommand{\cB}{\mathcal{B}}
\newcommand{\cC}{\mathcal{C}}
\newcommand{\cD}{\mathcal{D}}
\newcommand{\cE}{\mathcal{E}}
\newcommand{\cF}{\mathcal{F}}
\newcommand{\cG}{\mathcal{G}}

\newcommand{\cN}{\mathcal{N}}

\newcommand{\bE}{\mathbb{E}}
\newcommand{\bP}{\mathbb{P}}
\newcommand{\bR}{\mathbb{R}}

\newcommand{\KL}{\mathsf{KL}}

\newcommand{\diff}{\,\mathrm{d}}

\newcommand{\numpf}[2]{\overset{(\mathrm{#1})}{#2}}
\newcommand{\ol}{\overline}

\newcommand{\wt}{\widetilde}
\newcommand{\clip}{\mathsf{clip}}
\newcommand{\data}{\mathsf{data}}
\newcommand{\sde}{\mathsf{SDE}}
\newcommand{\ode}{\mathsf{ODE}}
\newcommand{\deter}{\mathsf{det}}
\newcommand{\tr}{\mathsf{tr}}
\newcommand{\veps}{\varepsilon}
\newcommand{\TV}{\mathsf{TV}}
\newcommand{\setc}{\mathrm{c}}
\newcommand{\md}{\mathsf{mid}}
\newcommand{\acc}{\mathsf{acc}}
\newcommand{\score}{\mathsf{score}}
\newcommand{\distequal}{\overset{\mathrm d}=}
\newcommand{\aux}{\mathsf{aux}}

\title{Provable Acceleration for Diffusion Models \\under Minimal Assumptions}

\author{Gen Li\footnote{The authors contributed equally. Corresponding author: Changxiao Cai.} \thanks{Department of Statistics, The Chinese University of Hong Kong, Hong Kong; Email: \href{genli@cuhk.edu.hk}{genli@cuhk.edu.hk}.}
\and 
Changxiao Cai\footnotemark[1] \thanks{Department of Industrial and Operations Engineering, University of Michigan, Ann Arbor, USA; Email: \href{mailto:cxcai@umich.edu}{cxcai@umich.edu}.}}
\date{October 2024; ~~ Revised: February 2025}

\begin{document}

\theoremstyle{plain} 
\newtheorem{lemma}{\bf Lemma} 
\newtheorem{proposition}{\bf Proposition}
\newtheorem{theorem}{\bf Theorem}
\newtheorem{corollary}{\bf Corollary} 
\newtheorem{claim}{\bf Claim}

\theoremstyle{remark}
\newtheorem{assumption}{\bf Assumption} 
\newtheorem{definition}{\bf Definition} 
\newtheorem{condition}{\bf Condition}
\newtheorem{property}{\bf Property} 
\newtheorem{example}{\bf Example}
\newtheorem{fact}{\bf Fact}
\newtheorem{remark}{\bf Remark}

\maketitle

\begin{abstract}
Score-based diffusion models, while achieving minimax optimality for sampling, are often hampered by slow sampling speeds due to the high computational burden of score function evaluations.
Despite the recent remarkable empirical advances in speeding up the score-based samplers, theoretical understanding of acceleration techniques remains largely limited.
To bridge this gap, we propose a novel training-free acceleration scheme for stochastic samplers. Under minimal assumptions---namely, $L^2$-accurate score estimates and a finite second-moment condition on the target distribution---our accelerated sampler provably achieves $\varepsilon$-accuracy in total variation within $\widetilde{O}(d^{5/4}/\sqrt{\varepsilon})$ iterations, thereby significantly improving upon the $\widetilde{O}(d/\varepsilon)$ iteration complexity of standard score-based samplers for $\varepsilon\leq 1/\sqrt{d}$. Notably, our convergence theory does not rely on restrictive assumptions on the target distribution or higher-order score estimation guarantees.
\end{abstract}

\medskip

\noindent\textbf{Keywords:}  diffusion model, training-free acceleration, iteration complexity, DDPM, probability flow ODE

\tableofcontents{}


\section{Introduction}
\label{sec:intro}

Score-based generative models (SGMs), also referred to as diffusion models \citep{song2019generative,song2020score}, have emerged as a powerful framework for sampling from high-dimensional probability distributions, achieving remarkable success across diverse domains. Notable examples include image generation \citep{dhariwal2021diffusion,rombach2022high}, natural language processing \citep{austin2021structured,li2022diffusion}, medical imaging \citep{song2021solving,chung2022score}, and computational biology \citep{trippe2022diffusion,gruver2024protein}. Beyond these computer science and scientific applications, SGMs have also found important applications in the operations research domain by formulating sequential decision-making as generative sequence modeling. This novel perspective has led to impressive performance in planning \citep{janner2022planning,chi2023diffusion}, policy learning \citep{wang2022diffusion,chen2022offline}, and imitation learning \citep{pearce2023imitating,hansen2023idql}. We refer interested readers to \citet{yang2023diffusion,chen2024overview,tang2024score} for detailed surveys on recent advances in methods, applications, and theories of diffusion models.

Diffusion models involve two stochastic processes: a forward process and a reverse process.
The forward process progressively diffuses a sample from the target data distribution into pure noise (typically Gaussian noise). The reverse process, guided by the (Stein) score functions of the forward process, transforms pure noise into a sample from the data distribution, thereby achieving the goal of generative modeling. At the heart of constructing the reverse process lies score matching, the task of learning the score functions along the forward process, which is often accomplished via neural networks in practice \citep{hyvarinen2005estimation,vincent2011connection,song2020sliced}. 
The reverse process can be implemented through either stochastic differential equation (SDE)-based or ordinary differential equation (ODE)-based dynamics, exemplified by the Denoising Diffusion Probabilistic Model (DDPM) \citep{ho2020denoising} and the Denoising Diffusion Implicit Model (DDIM) \citep{song2020denoising}, respectively.
Recent theoretical developments have shown that score-based samplers, when equipped with accurate score estimates, can achieve comparable or superior iteration complexity to classical methods such as Langevin dynamics \citep{bakry2014analysis}---notably, without requiring structural assumptions such as log-concavity or even smoothness of the target distribution \citep{chen2022sampling,chen2023improved,benton2023linear,li2024d}.

While diffusion models have demonstrated impressive empirical performance and provably attain the minimax convergence rate of sampling for smooth densities \citep{oko2023diffusion,zhang2024minimax,dou2024optimal}, they face a significant computational challenge---generating high-quality outputs typically requires a large number of iterative steps. Since each iteration involves a neural network evaluation for the score function, the iteration nature makes them substantially slower than single-step samplers such as variational auto-encoders (VAEs) \citep{kingma2013auto} and generative adversarial networks (GANs) \citep{goodfellow2014generative}. This computational bottleneck highlights the pressing need to accelerate diffusion models without compromising their exceptional output quality.


To address this challenge, various acceleration strategies have been proposed, which can be broadly classified into two categories based on whether they require additional learning. Examples of ``training-based'' methods, such as distillation \citep{salimans2022progressive} and consistency models \citep{song2023consistency}, aim to reduce the computational burdens by adapting pre-trained models into related architectures.
While these approaches demonstrate significant empirical improvements, the costs incurred by additional training processes remain prohibitively high for large-scale models. In contrast, ``training-free'' methods leverage the pre-trained score functions and directly modify the sampling procedure, avoiding resorting to additional training. Hence, this approach offers universal applicability to the off-the-shelf pre-trained diffusion models. Some notable examples in this category include DPM-Solver \citep{lu2022dpm}, DPM-Solver++ \citep{lu2022dpm+}, Unipc \citep{zhao2024unipc}, which have achieved substantial empirical speedups.
However, theoretical understanding of training-free accelerated samplers remains inadequate.


Motivated by this, we investigate accelerating the convergence of the score-based samplers in a training-free manner.
Recent advances in both theory and practice have demonstrated that once $L^2$-accurate score estimates are available, score-based samplers can offer strong sampling guarantees, without requiring structural assumptions on the target distribution. Therefore, we focus on the general setting under minimal conditions---only assuming access to $L^2$-accurate score estimates and a finite second moment of the target distribution.
Given this theoretical focal point, we first present a brief overview of the state-of-the-art iteration complexity of the plain score-based samplers in this setting. Discussions about other sampling scenarios are deferred to Section~\ref{sec:related-work}.
Here and throughout, the iteration complexity refers to the number of iteration steps required to yield a distribution that is $\veps$ close to the target in total variation. 
In \citet{chen2022sampling}, the authors proved the DDPM sampler can achieve an iteration complexity of $\wt O(d^5/\veps^2)$, where $d$ is the dimension of the data. This iteration complexity was improved to $\wt O(d^2/\veps^2)$ by \citet{chen2023improved}, then to $\wt O(d/\veps^2)$ by \citet{benton2023linear}, and recently to $\wt O(d/\veps)$ by \citet{li2024d}. In light of this remarkable theoretical progress, a fundamental question naturally raises: can we develop a score-based sampler that converges faster than $\wt O(d/\veps)$?

While a variety of training-free accelerated sampling algorithms have been developed that achieve this goal, theoretical understanding of acceleration is far from mature. As we elaborate below, existing convergence theories often rely on additional assumptions on the target data distribution or estimation guarantees on the score function. A summary is presented in Table~\ref{table:comparison}.
\begin{itemize}
	\item \emph{Stringent distribution assumption.} One stream of works achieves acceleration by imposing structural assumptions on the target data distribution. For instance, \citet{li2024improved} introduced a DDPM-based sampler with an iteration complexity $\wt O(L d^{1/3}/\veps^{2/3})$ under the condition that the score function is $L$-Lipschitz. In addition, assuming the first $(p+1)$-th derivatives of the score estimates are bounded by $L$, \citet{huang2024convergence} presented a variant of the ODE sampler with an iteration complexity bound $O\big((Ld)^{1+1/p}/\veps^{1/p}\big)$. In \citet{huang2024reverse}, the authors developed a fast sampling algorithm based on a reverse transition kernel framework, achieving $\veps$-accuracy in $\wt O(L^4 d^2 /\veps^{2/p})$ iterations when the score functions are $p$-th order $L$-Lipschitz.
	\item \emph{High-order score estimation requirement.} Another strand of accelerated samplers is developed based on accurate higher-order estimates of the score function. For example, assuming access to reliable estimates of the Jacobian matrix of the score function, 
	\citet{li2024accelerating} proposed an ODE-based procedure with an iteration complexity of $\wt O(d^3/\sqrt{\veps})$. This accelerates the vanilla ODE-based sampler when given accurate Jacobian estimates, which has an iteration complexity of $\wt O(d/\veps)$ \citep{li2024sharp}.
	
\end{itemize}
As a result, this leads to a critical question:
\begin{center}
	\emph{Can we achieve provable acceleration for score-based samplers under minimal assumptions on the data distributions and score estimation?}
\end{center}

\begin{table}[t]
 \centering\label{table:comparison}%
 \begin{tabular}{|c|c|c|}
  \hline 
  \multirow{2}{*}{Sampler} & Iteration complexity & Additional assumptions \tabularnewline
  & (to achieve $\veps$ TV error) & ($s_t^\star(x)$: score function, $s_t(x)$: score estimator)  \tabularnewline
  \hline 
  Stochastic & \multirow{2}{*}{$Ld^{1/3}/\veps^{2/3}$}  & \multirow{2}{*}{$\|\nabla s_t^\star(x) \| \leq L$}\tabularnewline
  \citep{li2024improved} &  & \tabularnewline
  \hline 
  Deterministic & \multirow{2}{*}{$(Ld)^{1+1/p}/\veps^{1/p}$} & \multirow{2}{*}{$\|\nabla^{(p+1)} s_t(x) \| \leq L$} \tabularnewline
  \citep{huang2024convergence} &  &  \tabularnewline
  \hline 
  Stochastic & \multirow{2}{*}{$L^4 d^2/\veps^{2/p}$}  & \multirow{2}{*}{$\|\nabla^{(p)} s_t^\star(x) \| \leq L$}\tabularnewline
  \citep{huang2024reverse} & & \tabularnewline
  \hline 
  \multirow{1}{*}{Deterministic} & \multirow{2}{*}{$d^3/\sqrt{\veps}$} & 
  \multirow{2}{*}{$\int_{\bR^d} \|J_{s_{t}}(x)- J_{s_{t}^{\star}}(x)\| p_{X_{t}}(x) \diff x$ small}
  \tabularnewline
  \citep{li2024sharp} &  & \tabularnewline
  \hline 
  {\cellcolor{lightgray}Stochastic}  & {\cellcolor{lightgray}}  & {\cellcolor{lightgray}}\tabularnewline
  {\cellcolor{lightgray}\textbf{(this paper)}} & {\cellcolor{lightgray}\multirow{-2}{*}{$d^{5/4}/\sqrt{\veps}$}} & {\cellcolor{lightgray}\multirow{-2}{*}{None}}  \tabularnewline
  \hline 
 \end{tabular}\caption{Comparison with prior iteration complexities better than $d/\veps$ (ignoring log factors). 
  Here $J_{f}: \mathbb{R}^d \to \mathbb{R}^{d\times d}$ denotes the Jacobian matrix of a function $f:\mathbb{R}^d\to\mathbb{R}^d$.}
\end{table}

\subsection{Contributions}
Encouragingly, our results provide an affirmative answer to the above question. We develop a novel sampling scheme that achieves an iteration complexity of $\wt O(d^{5/4}/\sqrt{\veps})$, significantly improving upon the $\wt O(d/\veps)$ iteration complexity of standard score-based samplers for $\varepsilon \leq 1/\sqrt{d}$.
This acceleration substantially reduces the computational burden required to achieve minimax optimal sampling.

Our sampler incorporates a momentum term into the sampling update rule by leveraging higher-order approximations to the probability flow ODE. 
Notably, our convergence theory replies \emph{solely} on access to $L^2$-accurate score estimates and a finite second-order moment condition on the target data distribution. This illustrates that $L^2$-accurate first-order score estimates are sufficient to achieve provable sampling acceleration, without the need for higher-order score estimates or additional smoothness assumptions on the target distributions.

While our sampler shares similar intuitions with accelerated ODE-based samplers in prior works \citep{lu2022dpm+,li2024accelerating}, in contrast to their fully deterministic strategies, it introduces additional randomness to mitigate potentially large worst-case errors in approximating the probability flow ODE. This randomization effectively averages out such errors, ultimately leading to provable acceleration under minimal assumptions.
Technically, we achieve this by developing a novel analysis framework that rigorously characterizes the smoothness property of the score functions induced by the probability flow ODE and quantifies the higher-order approximation errors up to low-probability events.
To the best of our knowledge, our result provides the first provable acceleration for score-based samplers that need minimal score estimation requirements and accommodates a broad class of target data distributions.


\subsection{Other related work}
\label{sec:related-work}

\paragraph*{Convergence theory for diffusion models.}
Early convergence guarantees for SDE-based samplers were either qualitative \citep{de2021diffusion,liu2022let,pidstrigach2022score}, replied on $L^\infty$-accurate score estimates \citep{de2021diffusion,albergo2023stochastic}, or exhibited exponential dependence \citep{de2022convergence,block2020generative}.
\citet{lee2022convergence} established the first polynomial iteration complexity given $L^2$-accurate score estimates, albeit assuming a log-Sobelev inequality on the target distribution. \citet{chen2022sampling,lee2023convergence} later relaxed this assumption by requiring Lipschitz scores and bounded support/moment conditions on the target distribution. 
For ODE-based samplers, \citet{chen2023restoration} provided the first convergence guarantee, though without explicit polynomial dependencies and requiring exact score estimates. \citet{chen2024probability} improved upon this by studying variants of the ODE that incorporate additional stochastic corrector steps. \citet{li2023towards} established an iteration complexity of $\wt O(d^2/\veps)$ for ODE-based samplers assuming accurate Jacobian estimates of scores, which was later improved to $\wt O(d/\veps)$ \citep{li2024sharp}. Recent work has also explored convergence in 2-Wasserstein distance \citep{gao2024convergence,tang2024contractive}. 

\paragraph*{Training-free acceleration schemes.}
Training-free accelerated samplers typically leverage efficient numerical methods for solving reverse SDE/ODEs. For ODE-based samplers, researchers have exploited semi-linear structures using higher-order ODE solvers \citep{lu2022dpm,lu2022dpm+}, exponential integrators \citep{zhang2022fast}, and predictor-corrector frameworks \citep{zhao2024unipc}. Acceleration for SDE-based samplers, though less explored due to the inherent complexity of solving SDEs, has progressed through stochastic Improved Euler's method \citep{jolicoeur2021gotta}, stochastic Adams method \citep{xue2024sa}, and stochastic Runge-Kutta methods \citep{wu2024stochastic}.
In addition to resorting to efficient ODE/SDE solvers, alternative acceleration approaches include parallel sampling \citep{chen2024accelerating,gupta2024faster} and exploitation of low-dimensional structures underlying the target distributions \citep{li2024adapting,huang2024denoising,azangulov2024convergence}.

\paragraph*{Other theory for diffusion models.}
Beyond convergence analysis, another line of work focused on the sample complexity of score estimation.
\citet{block2020generative} provided a sample complexity bound in terms of Rademacher complexity, and  \citet{oko2023diffusion,chen2023score} established the sample complexity using neural networks to estimate scores.
From the perspective of nonparametric statistics, \citet{wibisono2024optimal,zhang2024minimax,dou2024optimal} proposed kernel-based methods to achieve optimal score estimation.
The minimax optimality of diffusion models was then established for various target density classes, including Besov \citep{oko2023diffusion}, Sobolev \citep{zhang2024minimax}, and H\"older spaces \citep{dou2024optimal}.
Sample complexity reduction through low-dimensional data structures has also been investigated in \cite{chen2023score,wang2024diffusion}.
In addition, \citet{han2024neural} established optimization guarantees for score matching using two-neural networks trained by gradient descent.
In addition to SGMs, convergence of flow-based generative modeling has also been studied in \citet{cheng2024convergence,xu2024normalizing}.

\subsection{Notation}

\label{subsec:Notations}
For any integer $N>0$, denote by $[N]:=\{1,2,\cdots,N\}$. For any matrix $A$, we use $\|A\|$, $\tr(A)$, and $\deter(A)$ to denote its spectral norm, trace, and determinant, respectively.
For two probability distributions $P, Q$, $\mathsf{KL}(P\,\|\,Q)=\int \log(\frac{\mathrm d P}{\mathrm d Q}) \diff P$ stands for the KL divergence and $\TV(P,Q)=\frac12\int |\mathrm d P - \mathrm d Q|$ represents the total variation. For random vectors $X,Y$ with probability density functions $p_X,p_Y$, we use the notations $\KL(X\,\|\,Y)=\KL(p_X\,\|\,p_Y)$ and $\TV(X,Y) = \TV(p_X,p_Y)$ interchangeably. Throughout the paper, we slightly abuse notation by using $p_X$ to denote both the distribution and the density of $X$.
Let $\mathds{1}\{ \cdot \}$ denote the indicator function. For any event $\cE$, we denote $\bE_{\cE}[\cdot] \defn \bE\big[\cdot \ind\{\cE\}\big]$.

For any two functions $f(n),g(n) > 0$, $f(n)\lesssim g(n)$ or $f(n)=O\big(g(n)\big)$ means $f(n)\leq Cg(n)$ for some absolute constant $C>0$; $f(n)\gtrsim g(n)$ or $f(n)=\Omega\big(g(n)\big)$ indicates $f(n)\geq C'g(n)$ for some absolute constant $C'>0$; $f(n)\asymp g(n)$ or $f(n)=\Theta\big(g(n)\big)$ represents that $Cf(n)\leq g(n)\leq C'f(n)$ for some absolute constants $C' > C > 0$. The notations $\wt{O}(\cdot)$, $\wt{\Omega}(\cdot)$, and $\wt{\Theta}(\cdot)$ hide logarithmic factors.  In addition, $f(n)=o(g(n))$ denotes $\limsup_{n\rightarrow\infty}f(n)/g(n)=0$ and $f(n)=\omega(g(n))$ means $\liminf_{n\rightarrow\infty}f(n)/g(n)=\infty$.

\subsection{Organization}
The rest of the paper is organized as follows. Section~\ref{sec:problem} reviews the background of SGMs and introduces our problem setup. Section~\ref{sec:result} presents our proposed accelerated sampler along with its theoretical guarantees. The convergence analysis is detailed in Section~\ref{sec:analysis}; with all technical proofs and lemmas deferred to the appendix. While our primary focus is on theory, we provide numerical experiments on a toy example in the appendix to validate our convergence results. Finally, we conclude with a discussion of future directions in Section~\ref{sec:discussion}. 
\section{Problem formulation}
\label{sec:problem}

In this section, we provide a brief introduction to SGMs and introduce the assumptions for our algorithm and theory.

\subsection{Preliminaries}
\label{sec:prelim}

\paragraph*{Forward process.}
Starting from the target data distribution $X_0 \sim p_{\mathsf{data}}$ in $\bR^d$, the forward process evolves as follows:
\begin{align}
\label{eq:forward}
X_t =\sqrt{\alpha_t}X_{t-1} + \sqrt{1-\alpha_t} W_t, \quad t=1,2,\dots,T,
\end{align}
where $\alpha_1,\dots,\alpha_T\in(0,1)$ are the learning rates and $W_1,\dots,W_T\overset{\mathrm{i.i.d.}}{\sim}\mathcal{N}(0,I_d)$ are standard Gaussian random vectors in $\bR^d$.
For continence of notation, let us denote
\begin{align}
\label{eq:overline-alpha}
\overline{\alpha}_t:=\prod_{k=1}^t\alpha_k,\quad t=1,2,\dots,T.
\end{align}
This allows us to express
\begin{align}
	X_t =\sqrt{\ol \alpha_t}X_{0} + \sqrt{1-\ol\alpha_t} \,\ol W_t,\quad t=1,2,\dots,T, \label{eq:forward-dist}
\end{align}
where $\ol W_t\sim\mathcal{N}(0,I_d)$ is a standard Gaussian random vector independent of $X_{0}$.
%
In particular, $p_{X_T}$ is approximately a standard multivariate normal distribution when $\ol\alpha_T$ is sufficiently small.
The continuum limit of the forward process \eqref{eq:forward} can be modeled by the following SDE:
\begin{align}
\label{eq:forward-SDE}
\mathrm{d}X_{t} = -\frac{1}{2}\beta_t X_{t}\,\mathrm{d}t + \sqrt{\beta_t} \,\mathrm{d}B_{t}, \quad X_0 \sim p_{\data}; \quad t\in[0,T]
\end{align}
for some function $\beta_t:[0,T]\rightarrow \bR$, where $(B_t)_{t\in[0,T]}$ denotes a standard Brownian motion in $\bR^d$.

\paragraph*{Reverse process.}
The core of SGMs lies in constructing the time-reversal of the forward process.
Classical results in SDE \citep{anderson1982reverse,haussmann1986time} establish that for a solution $(X_t)_{t\in[0,T]}$ to the forward SDE \eqref{eq:forward-SDE}, the time reversal process $(Y^\sde_t)_{t\in[0,T]}$, defined by $Y^\sde_t \defn X_{T-t}$, satisfies the following reverse SDE: 
\begin{align}
	\label{eq:reverse-SDE}
	\mathrm{d}Y_t^\sde = \frac12\beta_{T-t}\Big( Y_t^\sde + 2\nabla \log p_{X_{T-t}}\big(Y_t^\sde\big)\Big)\diff t + \sqrt{\beta_{T-t}}\diff B_t, \quad Y_0^\sde \sim p_{X_T}; \quad t\in[0,T].
\end{align}
Here, $p_{X_t}$ denotes the marginal distribution of $X_t$ in \eqref{eq:forward-SDE}, and the gradient of the logarithm of the probability density, $\nabla\log p_{X_{t}}(x)$, is known as the \emph{score function} of $p_{X_t}(x)$, where the gradient is with respect to $x$. 
\begin{definition}\label{def-score} The score function of $p_{X_t}$, denoted by $s^\star_t(\cdot): \mathbb{R}^d \rightarrow \mathbb{R}^d$, is defined as
\begin{align}
s_t^\star(x) &:= \nabla \log p_{X_t}(x) 
,\quad \forall x\in\bR^d.
\end{align}
\end{definition}


Alternatively, there also exists a deterministic process $(Y^\ode_t)_{t\in[0,T]}$ that, when initialized at $Y^\ode_0 \sim p_{X_T}$, shares the same marginal distributions as \eqref{eq:reverse-SDE}, i.e., $Y^\ode_t \overset{\mathrm d}{=} Y^\sde_t$ for all $t\in[0,T]$. This process, referred to as \emph{probability flow ODE} \citep{song2020score}, is characterized by:
\begin{align}
	\label{eq:prob-flow-ode}
	\mathrm{d}Y_t^\ode = \frac12\beta_{T-t}\Big( Y_t^\ode + \nabla \log p_{X_{T-t}}\big(Y_t^\ode\big)\Big)  \diff t, \quad Y_0^\ode \sim p_{X_T}; \quad t\in[0,T].
\end{align}
We remark that the two processes $(Y^\sde_t)_t$ and $(Y^\ode_t)_t$ have identical marginal distributions but distinct joint or path-wise distributions.

Notably, the reverse processes---both the SDE \eqref{eq:reverse-SDE} and ODE \eqref{eq:prob-flow-ode}---are fully characterized by the score functions of the forward process. This pivotal role makes them the cornerstone of successful generative modeling.

\subsection{Assumptions}

\paragraph*{Score function estimation.}

In practical applications, the true score functions $(s_t^\star)_{t\in[T]}$ are unknown and must be learned based on samples drawn from the target distribution. The estimation of the score functions, known as score matching \citep{hyvarinen2005estimation,vincent2011connection}, is typically accomplished by minimizing the $L^2(p_{X_t})$ loss via neural networks \citep{song2020score}. Given these score estimates $(s_t)_{t\in[T]}$, we can then start from $Y\sim\cN(0,I_d)$ and implement a discretized reverse process. 

This approach motivates our analysis under the assumption of $L^2$-accurate score estimates across all time steps.
\begin{assumption}\label{assu:score-error}
The score estimates $(s_t)_{t\in[T]}$ for the score functions $(s_t^\star)_{t\in[T]}$ satisfy
\begin{align} \label{eq:score-error}
\varepsilon_{\mathsf{score}}^2 = \frac{1}{T}\sum_{t = 1}^T \mathbb{E}_{X_t\sim p_{X_t}}\Big[\big\|s_t(X_t) - s^\star_t(X_t)\big\|_2^2\Big]
=: \frac{1}{T}\sum_{t = 1}^T \varepsilon_t^2.
\end{align}
\end{assumption}

\paragraph*{Target data distribution.}
Next, we impose a mild assumption on the target data distribution.
\begin{assumption}\label{assu:distribution}
The target data distribution has a bounded second moment in the sense that
\begin{align}
\mathbb{E}\big[\|X_0\|_2^2\big] < T^{C_R}
\end{align}
for arbitrarily large constant $C_R > 0$.
\end{assumption}
In short, this assumption requires the second moment of the target data distribution to be at most polynomially large in the iteration number $T$.
As $T$ typically grows polynomially in the dimension $d$, this assumption allows the second moment to be exceedingly large, given that the exponent $C_R$ can be arbitrarily large.
In particular, any distribution with constant-order second moments naturally falls under this requirement.
We express the condition in terms of $T$ in order to express the convergence guarantees in cleaner and more concise manner.
This is among the weakest assumptions imposed in sampling analysis, one that is typically satisfied by empirical data in practical sampling applications.

\section{Main results}
\label{sec:result}

In this section, we introduce an accelerated score-based sampler and present the theoretical guarantees on its convergence rate.

\subsection{Accelerated sampler}

\paragraph*{Learning rate schedule.}
Let us first introduce the learning rate schedule $(\alpha_t)_{t\in[T]}$ for the proposed sampler. Recall the definition of $\ol\alpha_t\defn \prod_{1\leq k \leq t} \alpha_k$ in \eqref{eq:overline-alpha} and the distribution characterization in \eqref{eq:forward-dist} that $X_t \sim \cN\big(\sqrt{\ol \alpha_t}\,x, (1-\ol\alpha_t)I_d\big)$ given $X_0=x$. We can specify the learning rates $(\alpha_t)_{t\in[T]}$ through $(\ol\alpha_t)_{t\in[T]}$ by the following recursive relationship:
\begin{align}
\label{eq:learning-rate}
\overline{\alpha}_{T} = \frac{1}{T^{C_0}},
\qquad\text{and}\qquad
\overline{\alpha}_{t-1} = \overline{\alpha}_{t} + C_1\frac{\log T}{T} \overline{\alpha}_{t}(1-\overline{\alpha}_{t}), \quad t = T,\dots,2,
\end{align}
where $C_0, C_1 > 0$ are sufficiently large absolute constants with $C_1/C_0$ large enough. 
As demonstrated in Lemma~\ref{lemma:step-size}, these choices ensure
\begin{align*}
	1 - \alpha_1 \leq T^{-C_1/4}\qquad \text{and}\qquad 1 - \alpha_t \lesssim \frac{\log T}{T},\quad t\geq 2.
\end{align*}
\begin{remark}
Our results remain valid for a broader class of learning rate schedules satisfying
\begin{align*}
\alpha_1 = 1-\frac{1}{\mathsf{poly}(T)},\qquad \overline{\alpha}_{T} = \frac{1}{\mathsf{poly}(T)},
\qquad\text{and}\qquad
\frac{1-\alpha_t}{1-\overline{\alpha}_{t}} \asymp \frac{\log T}{T}, \quad t = T,\dots,2.
\end{align*}
These requirements ensure that $1-\overline{\alpha}_{1}$ is sufficiently small ($p_{X_1}$ is close to $p_{X_0}$) and that $\overline{\alpha}_{T}$ is small ($p_{X_T}$ is nearly Gaussian), which is realized by the chosen ratio $(1-\alpha_t)/(1-\ol\alpha_t)\asymp \log T/T$.
More generally, our analysis framework can accommodate arbitrary learning rate designs and the resulting convergence rate depends on how the learning rate $1-\alpha_t$ compares to $1-\ol\alpha_t$.
\end{remark}


\paragraph*{Sampling procedure.}
With the learning rate schedule in hand, we are now ready to present our acceleration procedure. 

Initialized at $Y_{T} \sim \mathcal{N}(0, I_d)$, the sampler employs the update rule as follows. 
Working backward from $t=T,\dots,2$, we first compute an intermediate point $Y^\md_t\equiv Y^\md_t(Y_t)$ based on the current iterate $Y_t$ and noise $Z^\md_t$:
\begin{subequations}
\label{eq:sampler}
\begin{align}
Y^\md_t  &\defn \frac{1}{\sqrt{\alpha_{t}}}\Big(Y_{t} + \frac{1-\alpha_t}{2\alpha_t}s_t(Y_t)\Big) + (1-\alpha_t)Z^\md_t, \label{eq:sampler-mid}
\end{align}
where $Z_t^\md\overset{\mathrm{i.i.d.}}{\sim} \mathcal{N}(0, I_d)$, $t\geq 2$ are standard Gaussian random vectors in $\bR^d$. Next, using the points $Y_t,Y^\md_t$ and noise $Z^\md_t,Z_t$, we generate the next iterate $Y_{t-1}\equiv Y_{t-1}(Y_t,Y^\md_t)$ by
\begin{align}
Y_{t-1} &\defn \frac{1}{\sqrt{\alpha_t}}\bigg(Y_{t} + (1-\alpha_t)\Big(s_t(Y_t) 
+ \clip_t\Big\{\alpha_{t}^{3/2}s_{t-1}(Y^\md_t) - s_t\big(Y_t + (1-\alpha_t)Z^\md_t\big)\Big\}\Big) 
+ \sigma_tZ_{t}\bigg) \label{eq:sampler-final}.
\end{align}
\end{subequations}
Here, $Z_t\overset{\mathrm{i.i.d.}}{\sim} \mathcal{N}(0, I_d)$, $t\geq 2$ are standard Gaussian random vectors independent of $(Z_t^\md)_{t=2}^T$, the variance parameter $\sigma_t$ is chosen as $\sigma_t^2 \defn \alpha_t - 1/(3-2\alpha_t)$, and $\clip_t\{\cdot\}:\bR^d \rightarrow \bR^d$ is a thresholding function given by
\begin{align}
	\clip_t\{x\} := x\,\ind\bigg\{\|x\|_2 \leq C_{\clip} (1-\alpha_t)\bigg(\frac{d\log T}{1-\overline{\alpha}_t}\bigg)^{3/2}\bigg\}, \label{def:clip}
\end{align}
for some absolute constant $C_{\clip} > 0$. The last iterate $Y_1$ serves as the sample generated by our sampler.

In short, our proposed sampler first employs a DDPM-type update rule to generate a mid-point $Y^\md_t$. This intermediate step allows to incorporate an extra term $$\clip_t\Big\{\alpha_{t}^{3/2}s_{t-1}(Y^\md_t) - s_t\big(Y_t + (1-\alpha_t)Z^\md_t\big)\Big\},$$ into the subsequent update for $Y_{t-1}$. 
This additional term acts as a refined second-order approximation for the score function, effectively reducing the discretization error and leading to a faster convergence rate.  To mitigate potentially large approximation errors in worst-case scenarios, we introduce a thresholding procedure---a technique also used in practical applications \citep{saharia2022photorealistic}. 

While this idea of leveraging a second-order approximation shares similarities with existing ODE-based accelerated samplers (e.g.\ \citet{lu2022dpm}), our method distinguishes itself by injecting additional random noise in each iteration. This stochastic component effectively reduces the second-order discretization error on average, thereby enabling faster sampling speeds without the  second-order accuracy assumptions on the Jacobians of scores required in prior works \citep{li2024accelerating}. 
Moreover, even though our sampler introduced random noise, it draws on insights from discretizing a probability flow ODE---a connection we will on in Section~\ref{sec:analysis} when we discuss the rationale of the acceleration scheme.


\begin{remark}
	Instead of running the sampling procedure all the way to $t=0$, we stop the update at $t=1$ and take $p_{X_1}$ as the new target distribution. 
	Since $X_1 = \sqrt{\alpha_1}X_0+\sqrt{1-\alpha_1}Z_1$ with $Z_1\sim\cN(0,I_d)$ and the learning rate $\alpha_1$ is chosen such that $1-\alpha_1=o(1)$, the distributions $p_{X_1}$ and $p_{X_0}$ are exceedingly close.
	This is often referred to \emph{early stopping}, as the score functions can blow up as $t\rightarrow 0$ for non-smooth target data distributions, rendering  non-trivial guarantees in total variation or KL divergence unachievable.
	 The early stopping technique is widely employed in both real-world applications \citep{song2020score} and theoretical analysis \citep{chen2022sampling,benton2023linear}.
\end{remark}

\subsection{Convergence guarantee}

We now present the convergence theory for our proposed sampler, with proof outline deferred to Section~\ref{sec:analysis}.
\begin{theorem} \label{thm:main}
Suppose that Assumptions \ref{assu:score-error} and \ref{assu:distribution} hold.  
The output $Y_1$ of the sampler \eqref{eq:sampler} with the learning rate schedule \eqref{eq:learning-rate} satisfies
\begin{align}\label{eq:TV-main}
\mathsf{TV}\left(p_{X_1}, p_{Y_{1}}\right) 
\le \sqrt{\mathsf{KL}\left(p_{X_1}\parallel p_{Y_{1}}\right)} 
\le C \bigg( \frac{d^{5/2}\log^{5} T}{T^{2}} + \bigg(1+\frac{d^{3/2}\log^3 T}{T}\bigg)  \varepsilon_{\mathsf{score}}\sqrt{\log T}\bigg),
\end{align}
for some absolute constant $C > 0$.
\end{theorem}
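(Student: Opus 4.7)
The plan is to bound $\KL(p_{X_1}\|p_{Y_1})$ by a standard chain-rule / data-processing decomposition and then control the per-step KL contribution by exploiting the second-order structure of the update \eqref{eq:sampler}. Concretely, I would write
\[
\KL(p_{X_1}\|p_{Y_1}) \;\le\; \sum_{t=2}^{T} \mathbb{E}_{Y_t\sim p_{X_t}}\!\Big[\KL\big(p_{X_{t-1}\mid X_t}(\cdot\mid Y_t)\,\big\|\,p_{Y_{t-1}\mid Y_t}(\cdot\mid Y_t)\big)\Big] + \text{init. mismatch},
\]
where the initialization error $\KL(p_{X_T}\|\cN(0,I_d))$ is negligible by the choice $\ol\alpha_T = T^{-C_0}$ and Assumption \ref{assu:distribution}. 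The true reverse conditional $p_{X_{t-1}\mid X_t}$ is, by Tweedie / Bayes, close to a Gaussian centered at a posterior mean expressible through $s_t^\star$; I would expand this mean to second order in $(1-\alpha_t)$. Matching this expansion against the sampler's conditional mean then reveals that the role of the correction term $\alpha_t^{3/2}s_{t-1}(Y_t^\md)-s_t(Y_t+(1-\alpha_t)Z_t^\md)$ is precisely to reproduce the $O((1-\alpha_t)^2)$ term that plain DDPM misses.

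The central computation is to analyze, conditionally on $Y_t$, the quantity
\[
\Delta_t \;\defn\; \alpha_t^{3/2}\, s_{t-1}(Y_t^\md) - s_t\!\big(Y_t + (1-\alpha_t)Z_t^\md\big),
\]
viewed as a finite-difference probe of the score field along the shared noise direction $Z_t^\md$. Because $Y_t^\md$ and $Y_t+(1-\alpha_t)Z_t^\md$ differ only by a deterministic drift of size $O(1-\alpha_t)$, averaging over $Z_t^\md$ extracts (without invoking classical score smoothness) exactly the Jacobian-of-score action on the drift predicted by the heat semigroup identity $\partial_t p_{X_t} = \tfrac12\beta_t\,\nabla\cdot(xp_{X_t} + \nabla p_{X_t})$. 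This is the stochastic analogue of Heun's method and is what gives the extra factor of $1-\alpha_t$ in the per-step bias, translating into the per-step KL improvement of order $T^{-2}$ relative to vanilla DDPM. I would express these score-difference moments via Gaussian integration-by-parts (Stein/Tweedie identities on $p_{X_t}$ viewed as a Gaussian smoothing of $p_{\data}$), which is the workhorse tool in the $\wt O(d/\veps)$ analysis of \citet{li2024d}; this is how one avoids assuming score Lipschitzness.

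To then aggregate per-step errors I would decompose the discrepancy between the sampler's conditional Gaussian and the true reverse conditional into (i) a \emph{discretization bias} from the second-order Taylor remainder, (ii) a \emph{score-estimation error} involving $s_t - s_t^\star$ at both $Y_t$ and $Y_t^\md$, and (iii) a \emph{clipping error} from $\clip_t$. The discretization bias is the delicate one: using the explicit schedule \eqref{eq:learning-rate}, which gives $1-\alpha_t\asymp\tfrac{\log T}{T}(1-\ol\alpha_t)$, repeated application of Tweedie plus the second moment of the heat-flow Jacobian should yield an aggregate contribution of $\wt O(d^{5/2}/T^2)$ in $\sqrt{\KL}$. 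The score-estimation error is pushed through by $L^2$ control along the forward process and Cauchy--Schwarz, producing the $\veps_\score\sqrt{\log T}$ and the cross term $d^{3/2}\veps_\score^{1/2}/T$.

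The main obstacle will be item (iii), together with moving score estimates between nearby points. The clip threshold in \eqref{def:clip} is tuned to the typical magnitude of $\Delta_t$ on a high-probability event, but we must show that (a) on a suitably defined typical set, clipping is inactive so the second-order cancellation truly occurs; and (b) the probability of the atypical set is small enough, measured against $p_{X_t}$, that its contribution to KL is dominated. For (a) I would introduce a concentration event controlling $\|s_t^\star(X_t)\|$, $\|Z_t^\md\|$, and the Jacobian-score contracted by $Z_t^\md$, using only moment bounds (Assumption \ref{assu:distribution}) combined with the heat-flow identities; the polylogarithmic slack and the constant $C_{\clip}$ are chosen to absorb $d\log T$-type factors. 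For (b) I would transfer $L^2$-accurate score control at time $t$ to $L^2$ control evaluated at $Y_t^\md$ via a change-of-measure argument, paying only a mild Radon--Nikodym factor because $Y_t^\md$ is close in distribution to $X_{t-1}$ when we substitute $X_t$ for $Y_t$. Piecing these ingredients together and squaring yields the bound \eqref{eq:TV-main} after applying Pinsker.
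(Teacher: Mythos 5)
Your high-level architecture---per-step KL decomposition via the chain rule and data processing, matching the sampler's conditional mean to second order, a high-probability event on which $\clip_t$ is inactive, and a density-ratio change of measure to move $L^2$ score control to the perturbed evaluation points $Y_t^\md$ and $Y_t+(1-\alpha_t)Z_t^\md$---mirrors Steps~1--3 and the roles of Lemmas~\ref{lem:outlier}--\ref{lem:estimation}. But there is a genuine gap in which conditional you compare against. You propose to Tweedie/Bayes-expand the \emph{true} reverse conditional $p_{X_{t-1}\mid X_t}$ and treat it as ``close to a Gaussian.'' Under Assumption~\ref{assu:distribution} alone, $p_{X_{t-1}\mid X_t=x_t}$ is a Gaussian tilt of $p_{X_{t-1}}$ that inherits whatever irregularity $p_{\data}$ has; it is not Gaussian, its KL to the sampler's Gaussian has no tractable form, and quantifying the non-Gaussianity is precisely the step the paper is engineered to avoid. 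The missing device is Lemma~\ref{lem:discretization}: via the probability-flow ODE it produces a deterministic map $\Phi_t$ with $\frac{1}{\sqrt{\alpha_t}}\big(\Phi_t(X_t)+\sigma_t Z_t\big)\distequal X_{t-1}$, so the auxiliary chain $X^\aux_{t-1}\defn\frac{1}{\sqrt{\alpha_t}}\big(\Phi_t(X^\aux_t)+\sigma_t Z_t\big)$ has the same marginals as the forward process yet has \emph{exactly Gaussian} reverse transitions. The chain rule in \eqref{eq:error-decomposition} is applied to this auxiliary joint, giving a closed-form Gaussian-to-Gaussian per-step KL that collapses to a conditional-mean discrepancy, namely $\|\Psi_t(X_t)-\clip_t\{\cdot\}\|_2^2$ with $\Psi_t$ as in \eqref{eq:psi-defn}. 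Without this substitution your per-step decomposition has no usable counterpart.

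A secondary issue: appealing to ``averaging over $Z_t^\md$'' is not enough. The per-step KL controls $\mathbb{E}\big\|\Psi_t(X_t)-\clip_t\{\cdot\}\big\|_2^2$, i.e., a second moment, not $\|\mathbb{E}[\cdot]\|_2$; the shared-noise cancellation must therefore be established in mean square, which is the content of Lemma~\ref{lem:random} and produces the $(d\log T)^5\big(\frac{1-\alpha_t}{1-\ol\alpha_t}\big)^4$ contribution. It also requires conditioning on $Z_t^\md$ inside the chain-rule step, so that the discrepancy is measured with the same realized $Z_t^\md$ on both sides. Your sketch treats this as a mean computation and would not recover that scaling.
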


In a word, Theorem~\ref{thm:main} demonstrates the sampling quality our proposed acceleration scheme, measured in the total variation distance, is determined by two principal factors. The first term in the error bound \eqref{eq:TV-main} reflects the discretization error incurred by approximating the continuous reverse process, while the second term arises from the estimation error of the score functions. When the number of iterations $T$ is sufficiently large, the score matching error term dominates and the error bound in \eqref{eq:TV-main} becomes $\wt O(\veps_\score)$. 

We now elaborate on the important implications of Theorem~\ref{thm:main}.
\begin{enumerate}
	\item \emph{Iteration complexity.} 
	Given a score estimation error $\veps_\score$, for any desired accuracy $\veps \geq \veps_\score\sqrt{\log T}$, the proposed sampler provably achieves $\TV(p_{X_1},p_{Y_1})\lesssim \veps$ in
	\begin{align*}
		\wt O\bigg(\frac{d^{5/4}}{\sqrt{\veps}}+d^{3/2}\bigg)
	\end{align*}
	iterations, where the $\wt O(d^{3/2})$ term can be interpreted as a burn-in cost.
	By contrast, the state-of-the-art iteration complexity for plain score-based samplers scales as $\wt O(d/\veps)$ \citep{li2024d,li2024sharp}. Therefore, our algorithm achieves a significant acceleration in the high-accuracy regime where $\veps \lesssim 1/\sqrt{d}$, specifically by a factor of 
	$$\wt O\bigg(\frac{d^{1/4}}{\sqrt{\veps}}\bigg).$$
	It is worth noting that existing score matching error guarantees for neural networks \citep{oko2023diffusion,cui2023analysis,cole2024score} and kernel-based estimators \citep{wibisono2024optimal,zhang2024minimax,dou2024optimal} satisfy $\veps_\score=o(1)$ as the training sample size grows. Hence, the high-accuracy regime $\veps_\score\sqrt{\log T} \leq \veps \lesssim 1/\sqrt{d}$ captures the most important and practically relevant setting.

	\item \emph{Mild target distribution assumption.} 
	The established convergence theory only imposes a finite second moment condition on the target data distribution. It does not rely on structural assumptions commonly required for acceleration sampling analysis in the literature, such as Lipschitz/smooth score functions \citep{huang2024convergence,huang2024reverse,li2024improved}. We achieve this by rigorously analyzing the probability flow ODE and demonstrating that it implicitly enforces these desirable properties, except on events of low probability. This theoretical framework ensures the broad applicability of the proposed sampler across a diverse family of data distributions.

	\item \emph{Minimal score estimation requirement.}
	The developed method provably achieves a faster convergence rate using only first-order score function estimates, illustrating that acceleration is feasible without smooth score estimates or higher-order score estimation guarantees (e.g., accurate estimates of the Jacobian matrices of score functions). 
	In essence, we show that the additional random noise in our sampler implicitly ``averages out'' second-order discretization errors, thus circumventing the higher-order score estimation requirements imposed by prior works \citep{li2024accelerating}.
	This reduced requirement for score matching confirms the practicality of the proposed acceleration scheme.

	\item \emph{Error metric.} Theorem~\ref{thm:main} establishes sampling error guarantees of $p_{Y_1}$ with respect to $p_{X_1}$ rather than the target distribution $p_{X_0}$, a metric widely used in the literature \citep{benton2023linear,li2024d}. The key observation is that ${X_1}={\sqrt{\alpha_1}X_0}+\sqrt{1-\alpha_1}Z_1$ deviates only slightly from ${X_0}$, provided that the learning rate $1-\alpha_1$ is vanishingly small. Consequently, $p_{X_1}$ and $p_{X_0}$ remain exceedingly close, making $\TV(p_{X_1},p_{Y_1})$ a valid and practical measure of error. It is worth noting that since $p_{X_1}$ is the convolution of $p_{X_0}$ with a Gaussian distribution, it inherits a smoothness property that allows our analysis and theory to accommodate target distributions whose  score functions may not be Lipschitz.

\end{enumerate}

Finally, we remark that recent works show that score-based diffusion models, when equipped with $L^2$-minimax optimal score estimators (with respect to $\veps_\score$), achieve the minimax rate of sampling for smooth densities \citep{oko2023diffusion,zhang2024minimax,dou2024optimal}. Our acceleration framework thus improves the computational efficiency of theses minimax optimal samplers, reducing the computational cost of attaining minimax optimal sampling performance.


\section{Analysis}
\label{sec:analysis}

In this section, we outline the proof strategy for Theorem~\ref{thm:main}. The detailed proofs are provided in the appendices.

\subsection{Preliminaries}

Before proceeding with the details, we first summarize several key properties that will be useful for the proof.

\paragraph*{Properties of the learning rates.}

To begin with, Lemma~\ref{lemma:step-size} collects several important properties of the learning rates $(\alpha_t)_{t\in[T]}$ chosen in \eqref{eq:learning-rate}.
The proof can be found in Appendix~\ref{sec:pf-lemma:step-size}.
\begin{lemma}
\label{lemma:step-size}
The learning rates $(\alpha_t)_{t\in[T]}$ specified in \eqref{eq:learning-rate} satisfy that for all $t=2,\dots,T$:
\begin{subequations}
	\begin{align}
	1-\alpha_t &\leq C_1 \frac{\log T}{T}\label{eq:alpha-t-lb}; \\
	\frac{1-\alpha_t}{1-\ol\alpha_t} &\leq C_1\frac{\log T}{T} \label{eq:1-alpha-1-olalpha}; \\
	\frac{1-\ol\alpha_t}{1-\ol\alpha_{t-1}} &\leq 1+ 2C_1\frac{\log T}{T} \label{eq:1-olalpha-O1}; 
	\end{align}
	where $C_0,C_1$ are defined in \eqref{eq:learning-rate}. In addition, $\alpha_1$ satisfies
	\begin{align}
		1-\alpha_1 & \leq \frac{1}{T^{C_1/4}}\label{eq:alpha-1-lb}.
	\end{align}
\end{subequations}
\end{lemma}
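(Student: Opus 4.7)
The plan is to express $\alpha_t$ through the running product $\ol\alpha_t$ via $\alpha_t = \ol\alpha_t/\ol\alpha_{t-1}$, and then read off each claim directly from the recursion in \eqref{eq:learning-rate}. Rewriting that recursion as $\ol\alpha_{t-1} = \ol\alpha_t\bigl(1 + (C_1\log T/T)(1-\ol\alpha_t)\bigr)$ yields $1-\alpha_t = \frac{(C_1\log T/T)(1-\ol\alpha_t)}{1+(C_1\log T/T)(1-\ol\alpha_t)}$. Since $\ol\alpha_t\in(0,1)$ and the denominator is at least $1$, this immediately implies $1-\alpha_t \leq (C_1\log T/T)(1-\ol\alpha_t) \leq C_1\log T/T$, from which both (a) and (b) fall out with no additional work.

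For claim (c), I would rewrite the same recursion in the equivalent dual form $1-\ol\alpha_{t-1} = (1-\ol\alpha_t)\bigl(1-(C_1\log T/T)\ol\alpha_t\bigr)$, which gives $\frac{1-\ol\alpha_t}{1-\ol\alpha_{t-1}} = \bigl(1-(C_1\log T/T)\ol\alpha_t\bigr)^{-1}$. Since the standing assumption $T > T_0$ forces $C_1\log T/T \leq 1/2$, the elementary inequality $1/(1-x) \leq 1 + 2x$ on $[0,1/2]$ closes out the bound.

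Claim (d) is the main obstacle: it asserts that $1-\alpha_1 = 1-\ol\alpha_1$ (using $\ol\alpha_0 = 1$) is of polynomial order $T^{-C_1/4}$, much stronger than the $\wt O(1/T)$ estimate in (a). My strategy is a two-phase analysis of $u_t \defn 1-\ol\alpha_t$, which satisfies $u_{t-1} = u_t\bigl(1-(C_1\log T/T)(1-u_t)\bigr)$ and is therefore monotonically non-increasing as $t$ decreases from $T$ to $1$. In \emph{Phase 1}, while $\ol\alpha_t \leq 1/2$, the recursion gives backward exponential growth $\ol\alpha_{t-1} \geq \ol\alpha_t\bigl(1 + C_1\log T/(2T)\bigr)$, so that starting from $\ol\alpha_T = T^{-C_0}$ the value of $\ol\alpha_t$ crosses $1/2$ at some time $t_\star$ with $T-t_\star \lesssim C_0 T/C_1$. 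In \emph{Phase 2}, for all $t \leq t_\star$ one has $u_{t-1} \leq u_t\bigl(1-C_1\log T/(2T)\bigr)$, i.e.\ geometric contraction per step; iterating over the remaining $t_\star - 1 \gtrsim T(1-C_0/C_1)$ steps produces $u_1 \leq \tfrac12\exp\bigl(-(C_1/2)(1-C_0/C_1)\log T\bigr) = \tfrac12 T^{-(C_1-C_0)/2}$. Choosing $C_1/C_0$ sufficiently large (for instance $C_1 \geq 2C_0$, well within the freedom granted by \eqref{eq:learning-rate}) yields the claimed $T^{-C_1/4}$ bound.

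The hardest step will be calibrating the transition between the two phases cleanly—specifically, establishing the $O(C_0 T/C_1)$ length of Phase 1 requires a careful discrete logarithmic-summation argument for the exponential-growth recursion while $\ol\alpha_t$ is still close to $0$ (where the growth factor $1 + C_1\log T/(2T)$ is a valid lower bound but the recursion itself is logistic-type rather than purely multiplicative). Once Phase 1 is bounded, Phase 2 reduces to a straightforward geometric-product estimate, and the hypothesis in \eqref{eq:learning-rate} that $C_1/C_0$ be sufficiently large is precisely what guarantees that Phase 2 occupies a large enough fraction of $\{1,\dots,T\}$ to drive $u_1$ down to the polynomial rate $T^{-C_1/4}$.
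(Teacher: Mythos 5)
Your argument for the first three bounds is essentially the paper's: you rearrange the recursion \eqref{eq:learning-rate} into $\ol\alpha_{t-1}=\ol\alpha_t\bigl(1+(C_1\log T/T)(1-\ol\alpha_t)\bigr)$ and its ``dual'' form $1-\ol\alpha_{t-1}=(1-\ol\alpha_t)\bigl(1-(C_1\log T/T)\ol\alpha_t\bigr)$, and (a)--(c) then drop out from elementary estimates (the paper computes $\frac{1-\alpha_t}{1-\ol\alpha_t}=C_1\alpha_t\log T/T$ directly, which is the same algebra in a different order). For (d) you also have the right idea---a two-phase argument driven by the logistic nature of the recursion, with backward exponential growth while $\ol\alpha_t<1/2$ and then geometric contraction of $1-\ol\alpha_t$ once $\ol\alpha_t\ge 1/2$---but the paper sidesteps the very step you flag as hardest (calibrating the crossing time $t_\star$). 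Instead of locating $t_\star$, the paper fixes the cutoff at $T/2$ and proves $\ol\alpha_{T/2}\ge 1/2$ by contradiction: if $\ol\alpha_t<1/2$ for all $t\ge T/2$, then the uniform growth factor $1+C_1\log T/(2T)$ applies over $T/2$ steps, forcing $\ol\alpha_{T/2}\ge T^{-C_0}(1+C_1\log T/(2T))^{T/2}>1/2$ once $C_1/C_0$ is large, a contradiction. Then Phase 2 runs over the fixed range $1\le t\le T/2$ with no bookkeeping about where the crossing actually occurs, yielding $1-\ol\alpha_1\le(1-C_1\log T/(2T))^{T/2-1}\le T^{-C_1/4}$. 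This fixed-cutoff-plus-contradiction device is a strict simplification of your plan: it removes the ``careful discrete logarithmic-summation'' you anticipated needing, and you may want to adopt it. One minor caution: your back-of-envelope gives the Phase~1 length as $\lesssim C_0T/C_1$ with unstated constant, and your resulting exponent $(C_1-C_0)/2$ and threshold $C_1\ge 2C_0$ depend on that constant; the paper's fixed cutoff makes the requisite largeness of $C_1/C_0$ explicit without this dependence.
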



\paragraph*{Distance between $p_{X_T}$ and $p_{Y_T}$.}

Recall that $X_{T}\overset{\mathrm d}{=}\sqrt{\ol \alpha_T}X_{0} + \sqrt{1-\ol\alpha_T} Z$ with $Z\sim \cN(0,I_d)$ independent of $X_{0}$ and that $Y_{T}\sim \cN(0,I_d)$. 
It is well-known that the forward process mixes exponentially fast towards the standard normal distribution, as formalized by the following lemma.
The proof is deferred to Appendix~\ref{sub:proof_of_lemma_ref_lemma_mixing}.
\begin{lemma}\label{lemma:mixing}
Suppose Assumption~\ref{assu:distribution} holds. Then the KL divergence between $p_{X_{T}}$ and $p_{Y_{T}}$ satisfies
	\begin{align}
	\KL\big(p_{X_{T}}\,\|\,p_{Y_{T}}\big) & \lesssim T^{-10},
	\label{eq:KL-step-T}
\end{align}
as long as $C_0$ chosen in the learning rates schedule \eqref{eq:learning-rate} is large enough.
\end{lemma}

\paragraph{Preparation.}
Let $T_0 \geq 2$ be the largest integer such that $C_1 (d \log^{5/2} T)/T \geq 1/2$ where $C_1$ is defined in \eqref{eq:learning-rate}. If $T\leq T_0$, we have $(d^{5/2}\log^5 T)/T^2 > \max\{\sqrt{d}/(2C_1)^{2},d^{5/2}\log^5(2)/4\}$ and hence the result of Theorem~\ref{thm:main} naturally holds when the absolute constant $C$ is chosen large enough. 
Therefore, in the rest of the proof, we assume that $T > T_0$ so that $(C_1 d \log^2 T)/T \leq 1/2$.

We are now prepared to present the proof strategy for Theorem~\ref{thm:main}, organized into three main steps.
\subsection{Step 1: constructing auxiliary processes}
To begin with, we introduce several auxiliary processes which serve analytical purposes only and are not implemented in the practical sampling process. They allow us to compare our proposed sampler against idealized or modified processes in a controlled manner.
\paragraph{Sequences $(Y_{t}^{\star,\md})_{t=2}^T$ and $(Y_{t}^{\star})_{t=1}^{T-1}$ constructed using true scores.}
We first define two reverse processes---$(Y_{t}^{\star,\md})_{t=2}^T$ and $(Y^\star_t)_{t=1}^{T-1}$---that evolve backward in time using the forward process $(X_t)_{t=2}^T$ along with the true score functions $\{s_t^\star\}_{t=2}^T$.  These processes help us measure the purely second-order discretization error (up to some low probability event), independent of any score estimation error.
Concretely, for each $t=T,\cdots,2$, we define
\begin{subequations}
\label{eq:Y-star-defn}
\begin{align}
Y_{t}^{\star,\md} &\defn \frac{1}{\sqrt{\alpha_{t}}}\Big(X_{t} + \frac{1-\alpha_t}{2\alpha_t}s_t^\star(X_t)\Big), \label{eq:Y-star-defn-1/2} \\
Y_{t-1}^{\star} &\defn X_{t} + (1-\alpha_t)\Big(s_t^\star(X_t) 
+ \Big\{\alpha_{t}^{3/2}s_{t-1}^\star\big(Y_{t}^{\star,\md}\big) - s_t^\star(X_t)\Big\}\ind\{X_t\in\cE_t\}\Big) \label{eq:Y-star-defn-1}.
\end{align}
\end{subequations}
Here $\mathcal{E}_{t}$ is the set of ``typical'' points under the distribution $p_{X_t}$:
\begin{align}
\mathcal{E}_{t} &:= \Big\{ x\in\bR^d:-\log p_{X_{t}}(x)\leq C_2 d\log T\quad\text{and}\quad\|x\|_{2}\leq C_3 \Big(\sqrt{\overline{\alpha}_{t}}T^{2C_{R}}+\sqrt{(1-\overline{\alpha}_{t})d\log T}\Big)\Big\}\label{eq:def-E-t},
\end{align}
for some sufficiently large absolute constants $C_2,C_3>0$. In words, the set $\cE_t$ excludes points that are either extremely unlikely (having tiny density) or have exceedingly large norm.
Intuitively, the practical sampling process $\{(Y_{t}^{\md},Y_{t})\}_t$ approximates these auxiliary processes $\{(Y_{t}^{\star,\md},Y_{t}^{\star})\}_t$, provided that the score estimation error is small.

\paragraph{Sequence $(X^\aux_t)_{t=1}^T$ constructed using probability flow ODE.}
Next, we introduce an auxiliary forward process $(X^\aux_t)_{t=1}^T$ that evolves backward through a probability flow ODE and has the same marginal distributions as the original forward process $(X_t)_{t=1}^T$. 
This equivalence crucially allows us to replace $(X_t)$ with $(X_t^\aux)$ when bounding $\TV(p_{X_1},p_{Y_1})$, which is more tractable thanks to the special structure of $(X^\aux_t)$.

To motivate the construction, we first establish the following lemma, which demonstrates that for each $t>1$, $X_{t-1}$ in the forward process is distributed identically to a deterministic function of $X_t$ plus some independent Gaussian noise. The proof can be found in Appendix~\ref{sec:pf-lem:discretization}.
\begin{lemma} \label{lem:discretization}
For each $t=2,\dots,T$, there exists a mapping $\Phi_t(\cdot): \mathbb{R}^{d} \to \mathbb{R}^{d}$ satisfying
\begin{align}
\frac{1}{\sqrt{\alpha_t}}\big(\Phi_t(X_t) + \sigma_t Z_t\big) \overset{ \diff }{=} X_{t-1} \label{eq:phi=d=X-t-1},
\end{align}
where we recall $Z_t \sim \mathcal{N}(0, I_d)$ is independent of $X_t$ and $\sigma_t^2 = \alpha_t - 1/(3-2\alpha_t)$.
\end{lemma}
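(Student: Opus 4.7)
The plan is an existence argument via transport maps, exploiting that the law of $\sqrt{\alpha_t}\,X_{t-1}$ factors as a Gaussian convolution from which one can peel off an independent Gaussian of variance exactly $\sigma_t^2 I_d$. Concretely, applying \eqref{eq:forward-dist} to $X_{t-1}$ and using $\ol\alpha_{t-1}=\ol\alpha_t/\alpha_t$, I would write
\begin{align*}
\sqrt{\alpha_t}\,X_{t-1} \;\distequal\; \sqrt{\ol\alpha_t}\,X_0 \;+\; \sqrt{\alpha_t-\ol\alpha_t}\,\xi,\qquad \xi\sim\cN(0,I_d),\ \xi\perp X_0,
\end{align*}
so $\mathrm{Law}\big(\sqrt{\alpha_t}\,X_{t-1}\big)=\mathrm{Law}\big(\sqrt{\ol\alpha_t}\,X_0\big)\ast\cN\big(0,(\alpha_t-\ol\alpha_t)I_d\big)$.

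The elementary prerequisite is the inequality $\sigma_t^2\le\alpha_t-\ol\alpha_t$. Substituting $\sigma_t^2=\alpha_t-(3-2\alpha_t)^{-1}$, this reduces to $\ol\alpha_t(3-2\alpha_t)\le 1$, i.e.\ $2\,\ol\alpha_t(1-\alpha_t)\le 1-\ol\alpha_t$, which follows from Lemma~\ref{lemma:step-size}\eqref{eq:1-alpha-1-olalpha} since $\ol\alpha_t\le 1$ and $(1-\alpha_t)/(1-\ol\alpha_t)\le C_1\log T/T\le 1/2$ in the regime $T>T_0$ fixed at the start of the analysis. Setting $\tau_t\defn\sqrt{\alpha_t-\ol\alpha_t-\sigma_t^2}\ge 0$ and splitting the Gaussian as $\sqrt{\alpha_t-\ol\alpha_t}\,\xi\distequal\tau_t\xi'+\sigma_t Z_t$ with $\xi',Z_t\sim\cN(0,I_d)$ mutually independent (and independent of $X_0$), one obtains
\begin{align*}
\sqrt{\alpha_t}\,X_{t-1} \;\distequal\; A+\sigma_t Z_t,\qquad A\defn \sqrt{\ol\alpha_t}\,X_0+\tau_t\,\xi'.
\end{align*}

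To finish, observe that $p_{X_t}$ is absolutely continuous with full support on $\bR^d$ (it is the convolution of $\mathrm{Law}(\sqrt{\ol\alpha_t}\,X_0)$ with $\cN(0,(1-\ol\alpha_t)I_d)$), so by standard existence results for Borel transport maps from a nonatomic source to an arbitrary Borel target on $\bR^d$---e.g., the Knothe--Rosenblatt rearrangement, or the Borel-isomorphism construction---there exists a Borel map $\Phi_t:\bR^d\to\bR^d$ with $\Phi_t(X_t)\distequal A$. Taking $Z_t\sim\cN(0,I_d)$ independent of $X_t$ then gives $\Phi_t(X_t)+\sigma_t Z_t\distequal A+\sigma_t Z_t\distequal\sqrt{\alpha_t}\,X_{t-1}$, and dividing by $\sqrt{\alpha_t}$ yields \eqref{eq:phi=d=X-t-1}.

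The main conceptual point---and the only real obstacle---is recognizing that the (non-Gaussian) conditional law of $X_{t-1}\mid X_t$ is irrelevant: the lemma asserts only equality of marginal distributions, which permits the decoupling into an independent Gaussian component $\sigma_t Z_t$ and a transport image $\Phi_t(X_t)$. Once this viewpoint is adopted, the quantitative content collapses to the variance check $\sigma_t^2\le\alpha_t-\ol\alpha_t$ handled via Lemma~\ref{lemma:step-size}, and the remainder of the argument---Gaussian splitting plus existence of a measurable transport map between absolutely continuous laws---is classical.
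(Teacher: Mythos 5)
Your argument is logically sound as a proof of the literal existence claim: the variance check $\sigma_t^2 = \alpha_t - (3-2\alpha_t)^{-1} \le \alpha_t - \ol\alpha_t$ is equivalent to $\ol\alpha_t(3-2\alpha_t)\le 1$, which you correctly reduce and justify via Lemma~\ref{lemma:step-size}, and the Gaussian splitting plus existence of a Borel transport map between $p_{X_t}$ (nonatomic) and the law of $A$ is classical. So as a blind proof of the sentence that was asked, there is no logical gap.

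However, you have taken a genuinely different route from the paper, and the difference matters. The paper does not invoke an abstract transport map; it \emph{constructs} $\Phi_t$ explicitly as the probability-flow-ODE map from time $\tau_t = 1-\ol\alpha_t$ to $\tau_{t-1} = 1-\ol\alpha_t(3-2\alpha_t)$ (see \eqref{eq:x-star-tau-integral}--\eqref{eq:phi-construction}), then verifies the distributional identity by algebra. This explicit form is not incidental. The very next step of the analysis defines $\Psi_t$ in \eqref{eq:psi-defn} directly from $\Phi_t$, and Lemma~\ref{lem:outlier} proves quantitative bounds (e.g.\ \eqref{eq:psi-Y-star-dist-strong}) by expanding $\Phi_t$ as an ODE integral and differentiating $\theta^\star(\tau)$ twice in $\tau$. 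A Knothe--Rosenblatt or Borel-isomorphism map pushing $p_{X_t}$ to the law of $A$ would satisfy \eqref{eq:phi=d=X-t-1} but would not obey the smoothness and proximity-to-$Y_{t-1}^\star$ estimates of Lemma~\ref{lem:outlier}, so the subsequent KL decomposition would collapse. In short: your approach correctly settles the stated claim at the cost of discarding the structure that the claim exists to provide; the paper's approach is more work but produces the \emph{particular} $\Phi_t$ that the rest of the proof needs. If you wanted to salvage the abstract viewpoint, you would still have to identify your transport map with the ODE flow, at which point you have reproduced the paper's construction.
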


\begin{remark}
	As will become clear in the proof, the mapping $\Phi_t$ is constructed by leveraging the probability flow ODE \eqref{eq:prob-flow-ode} associated with the forward process $(X_t)$, effectively serving as a first-order discretization of the ODE. It is noteworthy that $\Phi_t$ is designed to ensure $\Phi_t(X_t)$ \emph{plus} noise recovers the distribution of $X_{t-1}$. This is in contrast to the fully deterministic strategies where $\Phi_t(X_t)$ alone is to designed to match $X_{t-1}$ in distribution, as used in prior acceleration work \citep{lu2022dpm,li2024accelerating}. This injection of additional random noise is key to reducing the worst-case second-order approximation errors, thereby yielding a faster convergence rate.
\end{remark}

With these mappings $\{\Phi_t\}_{t=2}^T$, we now define the auxiliary process $(X^\aux_t)_{t\in[T]}$ recursively:
\begin{align}
X^\aux_{T} = X_{T}, \quad \text{and}\quad X^\aux_{t-1} \defn \frac{1}{\sqrt{\alpha_t}}\big(\Phi_t(X^\aux_t) + \sigma_t Z_t\big),\quad t = 2,\dots,T. \label{eq:X_aux}
\end{align}
By Lemma~\ref{lem:discretization}, we have $X^\aux_t \distequal X_t$ for all $t\in[T]$. This {distributional equivalence} underpins the subsequent decomposition of $\mathsf{TV}(p_{X_{1}}, p_{Y_{1}})$.
%

\paragraph{Error decomposition.}

Combining the auxiliary process $(X^\aux_t)_{t\in[T]}$ with our proposed sampler $(Y_t)_{t\in[T]}$ allows us to bound the total variation distance between $p_{X_1}$ and $p_{Y_1}$ as follows:
\begin{align} 
\mathsf{TV}^{2}\big(p_{X_{1}}, p_{Y_{1}}\big)
& \numpf{i}{=} \mathsf{TV}^{2}\big(p_{X^\aux_{1}}, p_{Y_{1}}\big) 
\numpf{ii}{\leq}\frac{1}{2}\mathsf{KL}\big(p_{X^\aux_{1}}\,\|\,p_{Y_{1}}\big)
\numpf{iii}{\leq} \frac{1}{2}\mathsf{KL}\big(p_{X^\aux_{1},\ldots,X^\aux_{T}}\,\|\,p_{Y_{1},\ldots,Y_{T}}\big)\nonumber \\
 & \numpf{iv}{=}\frac{1}{2}\mathsf{KL}\big(p_{X^\aux_{T}}\,\|\,p_{Y_{T}}\big)+\frac{1}{2}\sum_{t=2}^{T}\mathbb{E}_{x_{t}\sim p_{X^\aux_{t}}}\Big[\mathsf{KL}\big(p_{X^\aux_{t-1}\mid X^\aux_{t}}(\,\cdot\mid x_{t})\,\|\,p_{Y_{t-1}|Y_{t}}(\,\cdot\mid x_{t})\big)\Big] \notag\\
 & \numpf{v}{\leq} O(T^{-10}) + \frac12\sum_{t=2}^{T}\mathbb{E}_{x_{t}\sim p_{X_{t}}}\Big[\mathsf{KL}\big(p_{X^\aux_{t-1}\mid X^\aux_{t}}(\,\cdot\mid x_{t})\,\|\,p_{Y_{t-1}|Y_{t}}(\,\cdot\mid x_{t})\big)\Big].
 \label{eq:error-decomposition}
\end{align}
Here (i) holds as $X^\aux_1 \distequal X_1$; (ii) applies Pinsker's inequality; (iii) arises from the data-processing inequality; (iv) results from the chain rule of the KL divergence; (v) applies \eqref{eq:KL-step-T} in Lemma~\ref{lemma:mixing} and $X^\aux_t \distequal X_t$ for all $t$. 

Therefore, bounding $\mathsf{TV}(p_{X_{1}}, p_{Y_{1}})$ reduces to controlling
\begin{align*}
	\mathbb{E}_{x_{t}\sim p_{X_{t}}}\Big[\mathsf{KL}\big(p_{X^\aux_{t-1}\mid X^\aux_{t}}(\,\cdot\mid x_{t})\,\|\,p_{Y_{t-1}\mid Y_{t}}(\,\cdot\mid x_{t})\big)\Big]
\end{align*} for each $t=2,\dots,T$. 
As we will see in Step 2, the explicit dependence of $X^\aux_{t-1}$ on $X^\aux_{t}$ in the auxiliary process allows us to show that the above KL divergence---and thus $\mathsf{TV}(p_{X_{1}}, p_{Y_{1}})$---is governed by the second-order approximation error of our sampler.

\subsection{Step 2: controlling KL divergence between $p_{X^\aux_{t-1}\mid X^\aux_{t}}$ and $p_{Y_{t-1}\mid Y_{t}}$}

In this section, let us fix an arbitrary $2\leq t\leq T$ and focus on $\mathbb{E}_{x_{t}\sim p_{X_{t}}}\big[\mathsf{KL}\big(p_{X^\aux_{t-1}\mid X^\aux_{t}}(\,\cdot\mid x_{t})\,\|\,p_{Y_{t-1}\mid Y_{t}}(\,\cdot\mid x_{t})\big)\big]$. 

\paragraph{Step 2.1: relating KL divergence to second-order approximation error.}
Recall the definition of $\Phi(\cdot)$ in \eqref{eq:phi=d=X-t-1}, which can be interpreted as a first-order discretization of the probability ODE \eqref{eq:prob-flow-ode}. We now define another mapping $\Psi_t(\cdot):\bR^d \rightarrow \bR^d$ by
\begin{align}
\label{eq:psi-defn}
	\Psi_t(x) \defn \frac{1}{1-\alpha_t} \big(\Phi_t(x)-x-(1-\alpha_t)s^\star_t(x)\big).
\end{align}
To provide some intuition, $\Psi_t$ represents a second-order correction term. Our proposed sampler is designed to construct an approximation of $\Psi_t$ in order to to reduce the discretization error and, consequently, accelerate sampling. 

In what follows, we will relate $\mathsf{KL}\big(p_{X^\aux_{t-1}\mid X^\aux_{t}}(\,\cdot\mid x_{t})\,\|\,p_{Y_{t-1}\mid Y_{t}}(\,\cdot\mid x_{t})\big)$ to how well our sampler's update matches this ``ideal'' update given by $\Psi_t$.
Specifically, for any $x_t\in\bR^d$, by the data processing inequality and the independence between $Z^\md_t $ and $(X^\aux_{t},Y_{t})$, we can derive
\begin{align*}
&\mathsf{KL}\Big(p_{X^\aux_{t-1}\mid X^\aux_{t}}(\,\cdot\mid x_{t})\,\Vert\,p_{Y_{t-1}\mid Y_{t}}(\,\cdot\mid x_{t})\Big) \leq  \mathsf{KL}\Big(p_{X^\aux_{t-1},Z^\md_t \mid X^\aux_{t}}(\,\cdot\mid x_{t})\,\Vert\,p_{Y_{t-1},Z^\md_t \mid Y_{t}}(\,\cdot\mid x_{t})\Big)\\
&\qquad= \int_{z_t^\md}p_{Z^\md_t}(z_t^\md)\int_{x_{t-1}} p_{X^\aux_{t-1}\mid X^\aux_{t},Z^\md_t}(x_{t-1}\mid x_{t},z_t^\md)\log \frac{p_{X^\aux_{t-1}\mid X^\aux_{t},Z^\md_t}(x_{t-1}\mid x_{t},z_t^\md)}{p_{Y_{t-1}\mid Y_{t},Z^\md_t}(x_{t-1}\mid x_{t},z_t^\md)} \diff x_{t-1} \diff z_t^\md \\
&\qquad= \mathbb{E}_{z^\md_t  \sim p_{Z^\md_t}}\Big[\mathsf{KL}\Big(p_{X^\aux_{t-1}\mid X^\aux_{t},Z^\md_t }(\,\cdot\mid x_{t},z^\md_t \big)\,\Vert\,p_{Y_{t-1}|Y_{t},Z^\md_t }\big(\,\cdot\mid x_{t},z^\md_t \big)\Big)\Big].
\end{align*}
Next, recall the definitions of $Y_{t-1}$ and $X^\aux_{t-1}$ in \eqref{eq:sampler} and \eqref{eq:X_aux}, respectively. By construction, $X^\aux_{t-1}$ (resp.~$Y_{t-1}$) is normally distributed conditioned on $X^\aux_t=x_t$ and $Z^\md_t=z^\md_t$ (resp.~$Y_t=x_t$ and $Z^\md_t=z^\md_t$), where the randomness arises from $Z_t$. Exploiting their expressions and invoking the KL divergence formula for normal distributions, we can continue the above derivation as
\begin{align*}
&\mathbb{E}_{z^\md_t  \sim p_{Z^\md_t}}\Big[\mathsf{KL}\Big(p_{X^\aux_{t-1}\mid X^\aux_{t},Z^\md_t }(\,\cdot\mid x_{t},z^\md_t \big)\,\Vert\,p_{Y_{t-1}|Y_{t},Z^\md_t }\big(\,\cdot\mid x_{t},z^\md_t \big)\Big)\Big] \\
&\qquad=\frac{1}{2\sigma_t^2}
\mathbb{E}\bigg[\Big\| \Phi_t(X_t) - X_t - (1-\alpha_t)s_t(X_t)    \nonumber \\
& \qquad\qquad\qquad\quad - (1-\alpha_t)  \clip_t\Big\{\alpha_{t}^{3/2}s_{t-1}\big(Y_{t}^{\md}(X_t)\big) - s_t\big(X_t + (1-\alpha_t)Z^\md_t \big)\Big\} \Big\|_2^2  \,\Big|\,X_t = x_{t}\bigg],\\
&\qquad\numpf{i}{=}\frac{(1-\alpha_t)^2}{2\sigma_t^2}
\mathbb{E}\bigg[\Big\| \Psi_t(X_t) - \clip_t\Big\{\alpha_{t}^{3/2}s_{t-1}\big(Y_{t}^{\md}(X_t)\big) - s_t\big(X_t + (1-\alpha_t)Z^\md_t \big)\Big\} \nonumber \\
& \qquad \qquad \qquad \qquad \qquad  + s_t^\star(X_t) - s_t(X_t)\Big\|_2^2 \,\Big|\, X_t = x_{t}\bigg], \nonumber \\
& \qquad \numpf{ii}{\lesssim} (1-\alpha_t) \mathbb{E}\bigg[\Big\| \clip_t\Big\{\alpha_{t}^{3/2}s_{t-1}\big(Y_{t}^{\md}(X_t)\big) - s_t\big(X_t + (1-\alpha_t)Z^\md_t \big)\Big\}-\Psi_t(X_t) \Big\|_2^2 \,\Big|\, X_t = x_t\bigg] \nonumber \\
& \qquad \quad  + (1-\alpha_t) \big\| s_t^\star(x_t) - s_t(x_t)\big\|_2^2,
\end{align*}
where (i) plugs in the expression of $\Psi_t(\cdot)$ in \eqref{eq:psi-defn}; (ii) holds because of $1-\alpha_t \lesssim \log T/T=o(1)$ in \eqref{eq:alpha-t-lb} from Lemma~\ref{lemma:step-size} and thus $\sigma_t^2 =(1-\alpha_t) + {4(1-\alpha_t)^2}/{(3-2\alpha_t)} = \big(1+o(1)\big) (1-\alpha_t)$.
As a remark, the notation $Y_{t}^{\md}(X_t)$ emphasizes that $Y_{t}^{\md}(X_t)$ is computed based on $X_t$ according to \eqref{eq:sampler-mid} here.

Taking the expectation over $x_t\sim p_{X_t}$, we arrive at the key inequality
\begin{align}
&\frac{1}{1-\alpha_t}\mathbb{E}_{x_{t}\sim p_{X_{t}}}\big[\mathsf{KL}\big(p_{X^\aux_{t-1}\mid X^\aux_{t}}(\,\cdot\mid x_{t})\,\|\,p_{Y_{t-1}\mid Y_{t}}(\,\cdot\mid x_{t})\big)\big] \nonumber  \\
& \qquad\lesssim
\mathbb{E}\bigg[\Big\| \clip_t\Big\{\alpha_{t}^{3/2}s_{t-1}\big(Y_{t}^{\md}(X_t)\big) - s_t\big(X_t + (1-\alpha_t)Z^\md_t \big)\Big\}-\Psi_t(X_t) \Big\|_2^2 \bigg]   + \bE\Big[ \big\|s_t^\star(X_t) - s_t(X_t)\big\|_2^2 \Big] \nonumber \\
& \qquad\leq
\mathbb{E}\bigg[\Big\| \clip_t\Big\{\alpha_{t}^{3/2}s_{t-1}\big(Y_{t}^{\md}(X_t)\big) - s_t\big(X_t + (1-\alpha_t)Z^\md_t \big)\Big\}-\Psi_t(X_t) \Big\|_2^2 \bigg] +  \veps_t^2 \label{eq:KL-temp},
\end{align}
where the last step arises from Assumption~\ref{assu:score-error} on score estimation.


This result reveals that the KL divergence between the ideal backward distribution $p_{X^\aux_{t-1}\mid X^\aux_{t}}$ and our sample's backward distribution $p_{Y_{t-1}\mid Y_{t}}$ is governed by two factors: the discrepancy between the true second-order term encoded by $\Psi_t$ and those realized by our algorithm, and the error in score matching.
In what follows, we will focus on controlling the second-order approximation error term in \eqref{eq:KL-temp}.

\paragraph{Step 2.2: bounding second-order approximation error based on typical points.}
Recalling the typical set $\cE_t$ defined in \eqref{eq:def-E-t}, we denote the event
\begin{align}
 \cA_t \defn \big\{ X_t \in \cE_t\big\}.
\end{align}
Let us first examine the behavior of the second-order correction term $\Psi_t(X_t)$ when $X_t$ lies in this typical set $\cE_t$, which is formalized by the following lemma.
Here, we remind the reader of the notation that for any event $\cE$, we denote $\bE_{\cE}[\cdot] \defn \bE\big[\cdot \ind\{\cE\}\big]$. The proof is provided in Appendix~\ref{sec:pf-lem:outlier}.
\begin{lemma} \label{lem:outlier}
The mapping $\Psi_t(\cdot)$ defined in \eqref{eq:psi-defn} satisfies
\begin{align}
\mathbb{E}_{\cA_t}\bigg[\Big\|\alpha_{t}^{3/2}s_{t-1}^\star\big(Y_{t}^{\star,\md}\big) - s_t^\star(X_t) - \Psi_t(X_t) \Big\|_2^2 \bigg] \lesssim (1-\alpha_t)^4\bigg(\frac{d}{1-\overline{\alpha}_t}\bigg)^5. \label{eq:psi-Y-star-dist}
\end{align}
Moreover, one has
\begin{align}
	\cA_t \subset \bigg\{\big\|\Psi_t(X_t)\big\|_2 \leq C_4 (1-\alpha_t)\bigg(\frac{d\log T}{1-\overline{\alpha}_t}\bigg)^{3/2}\bigg\} , \label{eq:E-subset-Psi}
\end{align}
for some absolute constant $C_4 > 0$, and
\begin{align}
\mathbb{E}_{\cA_t^\setc}\bigg[\big\|\Psi_t(X_t)\big\|_2^2 + (1-\alpha_t)^2\bigg(\frac{d\log T}{1-\overline{\alpha}_t}\bigg)^{3}\bigg] 
\lesssim \frac{(1-\alpha_t)^2}{T^{10}} \bigg(\frac{d}{1-\ol{\alpha}_t}\bigg)^3. \label{eq:psi-E-c-ub}
\end{align}
\end{lemma}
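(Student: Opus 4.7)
My first step is to extract from the construction of $\Phi_t$ in Lemma~\ref{lem:discretization} a workable expression for $\Psi_t$. Since $X_t=\sqrt{\alpha_t}X_{t-1}+\sqrt{1-\alpha_t}W_t$, Tweedie's formula gives $\mathbb{E}[X_{t-1}\mid X_t=x]=\frac{1}{\sqrt{\alpha_t}}(x+(1-\alpha_t)s_t^\star(x))$, so the residual $\Phi_t(x)-x-(1-\alpha_t)s_t^\star(x)$ is driven by the deviation of the conditional law of $X_{t-1}\mid X_t$ from the isotropic Gaussian absorbed into the noise term of \eqref{eq:phi=d=X-t-1}. A second application of Tweedie (equivalently a Gaussian integration by parts on $p_{X_{t-1}\mid X_t}$) rewrites this deviation in terms of $\nabla s_t^\star(x)$ and higher-order log-density quantities, yielding a formula of the form $\Psi_t(x)=(1-\alpha_t)\mathcal{D}_t[s_t^\star](x)+O((1-\alpha_t)^2)$ for some explicit differential operator $\mathcal{D}_t$ involving $\nabla s_t^\star$.

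\paragraph{Magnitude and tail bounds.}
With this representation in hand, \eqref{eq:E-subset-Psi} reduces to pointwise bounds on $\|s_t^\star\|$ and $\|\nabla s_t^\star\|$ over $\cE_t$. The Gaussian-convolution structure of $p_{X_t}$ yields $\|s_t^\star(x)\|\lesssim(\|x\|+\sqrt{\ol\alpha_t}\mathbb{E}\|X_0\|)/(1-\ol\alpha_t)$ and $\|\nabla s_t^\star(x)\|\lesssim(-\log p_{X_t}(x)+d)/(1-\ol\alpha_t)$; on $\cE_t$ both evaluate to the required powers of $(d\log T)/(1-\ol\alpha_t)$, and multiplying by $1-\alpha_t$ recovers the claimed size $(1-\alpha_t)(d\log T/(1-\ol\alpha_t))^{3/2}$. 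For \eqref{eq:psi-E-c-ub}, I plan to show $\mathbb{P}(\cA_t^\setc)\lesssim T^{-C}$ for arbitrarily large $C$ by a union bound: concentration of $-\log p_{X_t}(X_t)$ around the differential entropy $O(d\log T)$ handles the likelihood condition, while Assumption~\ref{assu:distribution} combined with Gaussian concentration of the added noise handles the norm condition. A polynomial-in-$T$ moment estimate $\mathbb{E}\|\Psi_t(X_t)\|^{2q}$ followed by H\"older's inequality then converts this probability bound into the tail bound at the required rate.

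\paragraph{Main Taylor expansion.}
The central claim \eqref{eq:psi-Y-star-dist} is attacked via a second-order Taylor expansion of $s_{t-1}^\star$ around $X_t/\sqrt{\alpha_t}$, using $Y_t^{\star,\md}-X_t/\sqrt{\alpha_t}=\frac{1-\alpha_t}{2\alpha_t^{3/2}}s_t^\star(X_t)$:
\begin{align*}
\alpha_t^{3/2}s_{t-1}^\star\bigl(Y_t^{\star,\md}\bigr)=\alpha_t^{3/2}s_{t-1}^\star\bigl(X_t/\sqrt{\alpha_t}\bigr)+\tfrac{1-\alpha_t}{2}\nabla s_{t-1}^\star\bigl(X_t/\sqrt{\alpha_t}\bigr)s_t^\star(X_t)+R_t.
\end{align*}
I then exploit the identity $p_{X_t}(x)=\alpha_t^{-d/2}\bigl(p_{X_{t-1}}\ast\mathcal{N}(0,\tfrac{1-\alpha_t}{\alpha_t}I_d)\bigr)(x/\sqrt{\alpha_t})$ to compare $\alpha_t^{3/2}s_{t-1}^\star(X_t/\sqrt{\alpha_t})$ with $\alpha_t^2 s_t^\star(X_t)$ and $\nabla s_{t-1}^\star(X_t/\sqrt{\alpha_t})$ with $\alpha_t\nabla s_t^\star(X_t)$, each discrepancy being of order $1-\alpha_t$ in operator norm by a one-step convolution argument. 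Substituting and canceling against the first-paragraph formula for $\Psi_t$ leaves only a third-order score-derivative residual, which on $\cA_t$ is bounded in squared norm by $(1-\alpha_t)^6(d/(1-\ol\alpha_t))^5$ as claimed.

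\paragraph{Main obstacle.}
The principal difficulty is precise bookkeeping: in the absence of any smoothness on $p_{\data}$, all derivative bounds on scores come solely from Gaussian convolution and scale as $(1-\ol\alpha_t)^{-k/2}$ for the $k$-th log-density derivative, so that a single slack estimate would cost an extra factor of $(1-\ol\alpha_t)^{-1/2}$ and break the target rate. In particular, the third-order remainder $R_t$ must be cancelled almost exactly against a corresponding third-order contribution arising from the $s_{t-1}^\star$-vs-$s_t^\star$ comparison, rather than being bounded naively. A secondary technical hurdle is obtaining high-order moments of $\|\Psi_t(X_t)\|$ sufficient to close out \eqref{eq:psi-E-c-ub}, which relies on integrability of powers of log-density derivatives of $p_{X_t}$ given only $\mathbb{E}\|X_0\|^2<T^{C_R}$.
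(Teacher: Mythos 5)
There is a genuine gap. Your plan for the central claim \eqref{eq:psi-Y-star-dist} correctly identifies what the decisive step must be — a cancellation of the second-order terms between the Taylor expansion of $s_{t-1}^\star$ at $Y_t^{\star,\md}$, the convolution comparison of $s_{t-1}^\star$ with $s_t^\star$, and the (unspecified) formula for $\Psi_t$, so that only a third-order residual survives — but you never carry it out, and you explicitly flag it as the ``main obstacle.'' The formula $\Psi_t(x)=(1-\alpha_t)\mathcal{D}_t[s_t^\star](x)+O((1-\alpha_t)^2)$ is stated as a target, not derived; the $O((1-\alpha_t)^2)$ term in that formula is exactly what must cancel the second-order pieces from the other two expansions, so leaving it implicit is leaving the proof incomplete. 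Absent that cancellation, a naive bound on the discrepancies you call ``of order $1-\alpha_t$'' produces a residual of order $(1-\alpha_t)^2(d/(1-\ol\alpha_t))^{3/2}$, whose square is far weaker than the required $(1-\alpha_t)^6(d/(1-\ol\alpha_t))^5$. Your later parts have the right flavor — bounding $\|\Psi_t\|$ on $\cE_t$ by pointwise score/Jacobian estimates, and closing \eqref{eq:psi-E-c-ub} via $\mathbb{P}(\cA_t^\setc)\lesssim T^{-C}$ plus H\"older — but they also presuppose the explicit $\Psi_t$ representation you have not supplied.

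For comparison, the paper avoids the bookkeeping entirely by moving to the probability flow ODE and writing $\Phi_t$ and $Y_{t-1}^\star$ as, respectively, the exact ODE flow map and a Heun/second-order Runge--Kutta predictor over the time interval $[\tau_t,\tau_{t-1}]$. The local truncation error of the Heun step is then obtained by three applications of the fundamental theorem of calculus: $\varphi_t^\acc-\varphi_t^\star$ is an iterated triple integral of $\frac{\diff^2}{\diff\tau^2}\frac{\theta^\star(\tau)}{(1-\tau)^{3/2}}$ plus a correction from evaluating $\tilde s^\star$ at the predicted versus the exact midpoint (controlled via the Jacobian bound \eqref{eq:J-l2norm-ub}). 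The $(1-\alpha_t)^3$ cancellation is thus structural — no term-by-term matching — and the moment bounds in Lemma~\ref{lemma:u1-u2-l2norm} close the estimate. The same integral representation, specialized to one level of integration (eq.~\eqref{eq:Psi-expression-integral}), gives a closed-form for $\Psi_t$ directly, which is what powers \eqref{eq:E-subset-Psi} and \eqref{eq:psi-E-c-ub} via \eqref{eq:u1-l2norm-high-prob-ub} and a fourth-moment Cauchy--Schwarz. Your approach could in principle be made to work — it amounts to reproving the Heun local truncation estimate by direct Taylor expansion — but as written it asserts the hard step rather than proving it.
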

\begin{remark}
	Through a careful analysis, we are able to characterize the intrinsic higher-order smoothness properties of true scores in establishing \eqref{eq:KL-temp}. This allows us to pin down the second-order approximation error of the proposed sampler without imposing additional higher-order smooth assumptions on the true scores or score estimators required by prior works \citep{li2024accelerating,huang2024convergence,huang2024reverse,li2024improved}.
\end{remark}

In a word, Lemma~\ref{lem:outlier} tells us that on the event $\cA_t$, the second-order correction term $\Psi_t(X_t)$ is close to the true score functions evaluated at the auxiliary process $Y_{t}^{\star,\md}$ constructed in \eqref{eq:Y-star-defn} in Step 1 and the forward process $X_t$. In addition, the $\ell_2$ norm $\|\Psi_t(X_t)\|_2$ is bounded above on $\cA_t$, which directly motivates our choice of the thresholding function $\clip_t(\cdot)$ in \eqref{def:clip}. Off the typical set, the probability of those points is small and $\|\Psi_t(X_t)\|_2$ remains reasonably bounded.
By comparing the terms in \eqref{eq:KL-temp} with the left-hand side of \eqref{eq:psi-Y-star-dist}, one can expect that the right-hand side of \eqref{eq:psi-Y-star-dist} serves as the dominant term in the upper bound for the KL divergence \eqref{eq:KL-temp}.

To proceed, we must address a subtlety: the expression \eqref{eq:KL-temp} involves the score estimators evaluated at random variables $Y_{t}^{\md}(X_t)$ and $X_t + (1-\alpha_t)Z^\md_t$. However, the assumption on score matching (Assumption~\ref{assu:score-error}) only guarantees the estimation error of the score estimators when applied to the forward process $(X_t)$. To address this issue, we can apply a change of measure. Specifically, we define the sets $\cF_{t,1},\cF_{t,2},\cF_{t,3}$:
\begin{subequations}\label{def:set-F}
\begin{align}
\cF_{t,1} &\defn \bigg\{x\in\bR^d\colon  p_{X_t + (1-\alpha_t)Z^\md_t }(x) \leq 2\,p_{X_{t}}(x) \bigg\};\\
\cF_{t,2} &\defn \bigg\{x\in\bR^d \colon p_{Y^\md_t(X_t)}(x) \leq 4\,p_{X_{t-1}}(x) \bigg\};\\
\cF_{t,3} &\defn \bigg\{x\in\bR^d \colon p_{Y_{t}^{\star,\md} + (1-\alpha_t)Z^\md_t }(x) \leq 2\,p_{X_{t-1}}(x)\bigg\},
\end{align}
\end{subequations}
and then define the event $\cB_t$:
\begin{align}
\label{eq:event-pdf-ratio}
 \cB_t \defn \Big\{ X_t + (1-\alpha_t)Z^\md_t\in\cF_{t,1},\,Y_{t}^{\md}(X_t)\in\cF_{t,2},\,Y_{t}^{\star,\md} + (1-\alpha_t)Z^\md_t \in\cF_{t,3} \Big\}.
\end{align}
In other words, these sets contain points where the densities of the auxiliary random variables $X_t + (1-\alpha_t)Z^\md_t$, $Y^\md_t(X_t)$, and $Y_{t}^{\star,\md} + (1-\alpha_t)Z^\md_t$ are upper bounded by a constant factor relative to those of the forward process $(X_t)$. Hence, on the event $\cB_t$, we can use the score matching assumption effectively by performing a change of measure.

With the above preparation in place, we now decompose the first term of \eqref{eq:KL-temp} into contributions from ``typical'' points (those in $\cA_t\cap\cB_t$) versus the complementary set, using the bound of $\|\Psi_t(X_t)\|_2$ on $\cA_t$. Concretely, we derive:
\begin{align}
&\mathbb{E}\bigg[\Big\|\clip_t\Big\{\alpha_{t}^{3/2}s_{t-1}\big(Y_{t}^{\md}\big) - s_t\big(X_t + (1-\alpha_t)Z^\md_t \big)\Big\}-\Psi_t(X_t)\Big\|_2^2\bigg] \nonumber \\
& \qquad \numpf{i}{\lesssim} \mathbb{E}_{\cA_t\cap\cB_t}\bigg[\Big\|\alpha_{t}^{3/2}s^\star_{t-1}\big(Y_{t}^{\md}\big) - s^\star_t\big(X_t + (1-\alpha_t)Z^\md_t \big) - \Psi_t(X_t)\Big\|_2^2\bigg] \notag\\
& \qquad \quad +\mathbb{E}_{\cA_t\cap\cB_t}\bigg[\Big\|\alpha_{t}^{3/2}s_{t-1}\big(Y_{t}^{\md}\big)  - \alpha_{t}^{3/2}s_{t-1}^\star\big(Y_{t}^{\star,\md}\big)- s_t\big(X_t + (1-\alpha_t)Z^\md_t \big) + s_t^\star(X_t) \Big\|_2^2 \bigg] \notag\\
& \qquad\quad + \mathbb{P}\big(\cA_t\cap\cB_t^\setc\big)(1-\alpha_t)^2\bigg(\frac{d\log T}{1-\overline{\alpha}_t}\bigg)^{3}
+ \mathbb{E}_{\cA_t^\setc}\bigg[\big\|\Psi_t(X_t)\big\|_2^2+(1-\alpha_t)^2\bigg(\frac{d\log T}{1-\ol{\alpha}_t}\bigg)^3 \bigg] \notag \\ 
& \qquad \numpf{ii}{\lesssim} \mathbb{E}_{\cA_t\cap\cB_t}\bigg[\Big\|\alpha_{t}^{3/2}s_{t-1}\big(Y_{t}^{\md}\big)  - \alpha_{t}^{3/2}s_{t-1}^\star\big(Y_{t}^{\star,\md}\big)- s_t\big(X_t + (1-\alpha_t)Z^\md_t \big) + s_t^\star(X_t) \Big\|_2^2 \bigg] \notag\\
& \qquad\quad + \mathbb{P}\big(\cB_t^\setc\big)(1-\alpha_t)^2\bigg(\frac{d\log T}{1-\overline{\alpha}_t}\bigg)^{3}
+ (1-\alpha_t)^4\bigg(\frac{d}{1-\overline{\alpha}_t}\bigg)^5 + \frac{(1-\alpha_t)^2}{T^{10}} \bigg(\frac{d}{1-\ol{\alpha}_t}\bigg)^3,\label{eq:clip-Psi-dist}
\end{align}
where (i) uses the bound of $\|\Psi_t(X_t)\|_2$ on the event $\cA_t$ in \eqref{eq:E-subset-Psi} and our choice of the thresholding function \eqref{def:clip} ensuring $\sup_{x\in\bR^d}\big\| \clip_t\{x\}\big\|_2 \lesssim (1-\alpha_t)\big(\frac{d\log T}{1-\overline{\alpha}_t}\big)^{3/2}$; (ii) invokes \eqref{eq:psi-Y-star-dist} and \eqref{eq:psi-E-c-ub} in Lemma~\ref{lem:outlier}.

Therefore, it remains to control the first two quantities in the above decomposition. To this end, the second term in \eqref{eq:clip-Psi-dist} is controlled by Lemma~\ref{lem:TV-crude} below. The proof is deferred to Appendix~\ref{sec:proof-lem:TV-crude}.
	\begin{lemma} \label{lem:TV-crude}
	Under Assumption~\ref{assu:score-error} on score matching, we can bound
	\begin{align}
	\mathbb{P}\big(\cB_t^{\mathrm{c}}\big) &\lesssim \bigg(\frac{(1-\alpha_t)d\log T}{1-\overline{\alpha}_t}\bigg)^2 + \frac1{T^2}+\veps_t^2\log T. \label{eq:clip-Psi-dist-term2}
	\end{align}
	\end{lemma}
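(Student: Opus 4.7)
The plan is to bound $\mathbb{P}(\cB_t^{\mathrm{c}})$ by a union bound over the three density-ratio conditions in the definition \eqref{def:set-F} of $\cF_t$, and then to convert each resulting probability into a total-variation distance. For any event of the form $\mathbb{P}_{U\sim p_U}(p_U(U)/p_V(U)\geq 2)$, the elementary observation that $p_U\leq 2(p_U-p_V)$ on the level set $\{p_U\geq 2p_V\}$ immediately yields
\begin{equation*}
\mathbb{P}_{U\sim p_U}\big(p_U(U)/p_V(U)\geq 2\big)\;\leq\;2\,\mathsf{TV}(p_U,p_V).
\end{equation*}
This reduces the lemma to bounding the three total variation distances $\mathsf{TV}(p_{Y^\md_t(X_t)}, p_{X_{t-1}})$, $\mathsf{TV}(p_{X_t+(1-\alpha_t)Z^\md_t}, p_{X_t})$, and $\mathsf{TV}(p_{Y_t^{\star,\md}+(1-\alpha_t)Z^\md_t}, p_{X_{t-1}})$, each by the target quantity $\big((1-\alpha_t)d\log T/(1-\overline{\alpha}_t)\big)^2+\varepsilon_t$.

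The middle quantity is the easiest: conditional on $X_0$, both $X_t$ and $X_t+(1-\alpha_t)Z^\md_t$ are isotropic Gaussians with common mean $\sqrt{\overline{\alpha}_t}X_0$ and variances differing by only $(1-\alpha_t)^2$, so the explicit KL formula combined with Pinsker's and Jensen's inequalities (to uncondition $X_0$) gives $\mathsf{TV}\lesssim\sqrt{d}(1-\alpha_t)^2/(1-\overline{\alpha}_t)$, which is dominated by the target in view of Lemma~\ref{lemma:step-size}. For the first distance I would apply a triangle inequality through $p_{Y_t^{\star,\md}+(1-\alpha_t)Z^\md_t}$: conditional on $X_t$, both $Y^\md_t$ and $Y_t^{\star,\md}+(1-\alpha_t)Z^\md_t$ are Gaussians with common variance $(1-\alpha_t)^2 I$ whose means differ by $\frac{1-\alpha_t}{2\alpha_t^{3/2}}(s_t(X_t)-s_t^\star(X_t))$; the Gaussian TV is then $\lesssim\|s_t(X_t)-s_t^\star(X_t)\|$, whose expectation is at most $\varepsilon_t$ by Cauchy--Schwarz and Assumption~\ref{assu:score-error}. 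The residual piece of this triangle inequality coincides with the third distance above.

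The main obstacle is therefore the third distance, which compares a half-step deterministic update using true scores (smoothed by a $(1-\alpha_t)$-scale Gaussian) against the exact one-step marginal of the forward chain. Tweedie's identity gives $\mathbb{E}[X_{t-1}\mid X_t]=\frac{1}{\sqrt{\alpha_t}}(X_t+(1-\alpha_t)s_t^\star(X_t))$, which differs from the mid-step mean $\frac{1}{\sqrt{\alpha_t}}(X_t+\frac{1-\alpha_t}{2\alpha_t}s_t^\star(X_t))$ by an $O(1-\alpha_t)\cdot s_t^\star(X_t)$ offset, and the effective conditional variances also differ at leading order in $(1-\alpha_t)$. I would condition on $X_0$, represent both $X_{t-1}$ and $Y_t^{\star,\md}+(1-\alpha_t)Z^\md_t$ as $X_0$-indexed Gaussian-like mixtures of comparable marginal variance $\asymp 1-\overline{\alpha}_{t-1}$, and invoke the score concentration $\|s_t^\star(X_t)\|\lesssim\sqrt{d\log T/(1-\overline{\alpha}_t)}$ valid on $\cE_t$ (with the complement absorbed through its tail bounds built into the definition of $\cE_t$ and Assumption~\ref{assu:distribution}) to carry out a second-order expansion of the log-density ratio. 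The delicate part will be tracking that the first-order shifts in both mean and variance cancel after Gaussian convolution over $X_0$, so that the residual TV scales as the square $\big((1-\alpha_t)d\log T/(1-\overline{\alpha}_t)\big)^2$ rather than the naive first-order $\sqrt{d}(1-\alpha_t)/(1-\overline{\alpha}_t)$.
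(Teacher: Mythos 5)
Your overall structure matches the paper's: you observe that $\mathbb{P}_{U\sim p_U}\big(p_U(U)\geq 2p_V(U)\big)\lesssim \mathsf{TV}(p_U,p_V)$, union-bound over the three density-ratio conditions in $\cF_t$, and reduce the lemma to the three TV distances you list. Your treatment of the middle distance (Gaussian KL with differing variance, then Pinsker, giving $\sqrt{d}(1-\alpha_t)^2/(1-\overline{\alpha}_t)$) and your treatment of the first distance (triangle through $p_{Y_t^{\star,\md}+(1-\alpha_t)Z^\md_t}$, Gaussian means differing by the score error, giving $\varepsilon_t$) are exactly the paper's.

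The genuine gap is the third distance, $\mathsf{TV}\big(Y_t^{\star,\md}+(1-\alpha_t)Z^\md_t,\,X_{t-1}\big)$. You correctly identify that this is the crux and that naive first-order accounting would only give $\sqrt{d}(1-\alpha_t)/(1-\overline{\alpha}_t)$ rather than the required quadratic $\big((1-\alpha_t)d\log T/(1-\overline{\alpha}_t)\big)^2$, but you do not actually establish the bound — you end with a promissory note (``The delicate part will be tracking that the first-order shifts in both mean and variance cancel after Gaussian convolution over $X_0$''). The paper handles this differently and more cleanly: it first applies the triangle and data-processing inequalities to peel off the extra $(1-\alpha_t)Z^\md_t$ noise, reducing the problem to $\mathsf{TV}\big(\tfrac{1}{\sqrt{\alpha_t}}(X_t+\tfrac{1-\alpha_t}{2\alpha_t}s_t^\star(X_t)),\,X_{t-1}\big)$ plus a residual $\mathsf{TV}(X_{t-1}+(1-\alpha_t)Z^\md_t,X_{t-1})$ of the same easy Gaussian form as the middle term, and then invokes (an adaptation of) Lemma~4 of \citet{li2024sharp} to obtain the quadratic bound for the deterministic half-step map. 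Your proposed route of conditioning on $X_0$ and expanding the log-density ratio of the two noise-convolved mixtures does not obviously simplify, since $Y_t^{\star,\md}+(1-\alpha_t)Z^\md_t$ given $X_0$ is not Gaussian (it is an $X_t$-mixture through a nonlinear score map), and the claimed cancellation of first-order terms under the convolution over $X_0$ is precisely the computation that needs to be carried out. Without that step, the proof does not deliver the quadratic discretization term, which is what makes the lemma strong enough for the acceleration result.
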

Additionally, the first quantity in \eqref{eq:clip-Psi-dist} is controlled in Lemma~\ref{lemma:2-order-approx-error} below. The proof is provided in Appendix~\ref{sub:proof_of_lemma_ref_lemma_2_order_approx_error}.
\begin{lemma}\label{lemma:2-order-approx-error}
Under Assumption~\ref{assu:score-error} on score matching, one has
\begin{align}
& \mathbb{E}_{\cA_t\cap\cB_t}\bigg[\Big\|\alpha_{t}^{3/2}s_{t-1}\big(Y_{t}^{\md}\big)  - \alpha_{t}^{3/2}s_{t-1}^\star\big(Y_{t}^{\star,\md}\big)- s_t\big(X_t + (1-\alpha_t)Z^\md_t \big) + s_t^\star(X_t) \Big\|_2^2 \bigg] \notag\\
&\qquad \lesssim \frac{1}{1-\ol\alpha_t} \bigg(\frac{1-\alpha_t}{1-\overline{\alpha}_t}\bigg)^4(d\log T)^5
+ \varepsilon_{t-1}^2 +  \varepsilon_t^2 +\frac{1}{T^{10}}\frac{d}{1-\ol\alpha_{t}},
 \label{eq:clip-Psi-dist-term3}
\end{align}
\end{lemma}

Taking \eqref{eq:clip-Psi-dist-term2} and \eqref{eq:clip-Psi-dist-term3} collectively with the previous decomposition \eqref{eq:clip-Psi-dist}, straightforward calculation yields the following  bound for the second-order approximation error in \eqref{eq:KL-temp}:
\begin{align}
&(1-\alpha_t) \mathbb{E} \bigg[\Big\|\clip_t\Big\{\alpha_{t}^{3/2}s_{t-1}\big(Y_{t}^{\md}(X_t)\big) - s_t\big(X_t + (1-\alpha_t)Z^\md_t \big)\Big\}
- \Psi_t(X_t)\Big\|_2^2\bigg] \nonumber \notag\\
	&\qquad \lesssim \bigg(\frac{(1-\alpha_t)d\log T}{1-\overline{\alpha}_t}\bigg)^5
+ (1-\alpha_t)\big(\veps_{t-1}^2 + \veps_t^2\big) +\frac{1}{T^{10}}\frac{(1-\alpha_t)d}{1-\ol\alpha_{t}} \nonumber \\
& \qquad \quad + \frac1{T^2}\bigg(\frac{(1-\alpha_t)d\log T}{1-\overline{\alpha}_t}\bigg)^{3}+\veps_t^2\log T\bigg(\frac{(1-\alpha_t)d\log T}{1-\overline{\alpha}_t}\bigg)^{3} + \frac{1}{T^{10}} \bigg(\frac{(1-\alpha)d}{1-\ol{\alpha}_t}\bigg)^3 \nonumber \\
	& \qquad \lesssim \bigg(\frac{d\log^{2} T}{T}\bigg)^5 + \bigg(\frac{d\log^2 T}{T}\bigg)^3 \veps_t^2\log T + \frac{\log T}{T}\big(\veps_{t-1}^2 + \veps_t^2\big) \label{eq:2nd-order-approx}
\end{align}
where 
the last line follows from $1-\alpha_t\leq (1-\alpha_t)/(1-\ol\alpha_t) \lesssim \log T/T$ from \eqref{eq:alpha-t-lb}--\eqref{eq:1-alpha-1-olalpha} in Lemma~\ref{lemma:step-size} and $d \log^2 T = o(T)$.

\paragraph{Step 2.3: obtaining bounds for KL divergence.}
Plugging the second-order approximation error bound \eqref{eq:2nd-order-approx} into \eqref{eq:KL-temp} allows us to control the KL divergence as
\begin{align}
	&\bE_{x_t\sim p_{X_t}} \Big[\mathsf{KL}\big(p_{X^\aux_{t-1}\mid X^\aux_{t}}(\,\cdot\mid x_{t})\,\|\,p_{Y_{t-1}\mid Y_{t}}(\,\cdot\mid x_{t})\big)\Big] \notag\\
	& \qquad \lesssim \bigg(\frac{d\log^{2} T}{T}\bigg)^5 + \bigg(\frac{d\log^2 T}{T}\bigg)^3 \veps_t^2\log T + \frac{\log T}{T}\big(\veps_{t-1}^2 + \veps_t^2\big)  + (1-\alpha_t)\veps_t^2 \notag \\
	& \qquad \lesssim \bigg(\frac{d\log^{2} T}{T}\bigg)^5 + \bigg(\frac{d\log^2 T}{T}\bigg)^3 \veps_t^2\log T + \frac{\log T}{T}\big(\veps_{t-1}^2 + \veps_t^2\big) \label{eq:cond-KL-ub},
\end{align}
where the last holds as $1-\alpha_t \lesssim \log T/T$ from \eqref{eq:alpha-t-lb} in Lemma~\ref{lemma:step-size}.
\subsection{Step 3: putting all pieces together}
To finish up, substituting the KL divergence bounds \eqref{eq:KL-step-T} and \eqref{eq:cond-KL-ub} into the total variation bound \eqref{eq:error-decomposition}, we arrive at
\begin{align*}
	\TV(p_{X_1},\,p_{Y_1}) &\lesssim \frac{1}{T^{10}} +  \frac{d^{5/2}\log^5 T}{T^2} + \frac{d^{3/2} \log^{3} T}{T} \sqrt{\frac{\log T}{T}\sum_{t=1}^T \veps_t^2} + \sqrt{\frac{\log T}{T}\sum_{t=1}^T \veps^2_t} \\
	& \asymp \frac{d^{5/2}\log^5 T}{T^2} + \bigg(1+\frac{d^{3/2} \log^3 T}{T}\bigg) \veps_\score \sqrt{\log T},
\end{align*}
where the last step arises from the definition of $\veps_\score$ in \eqref{eq:score-error}.
This completes the proof of Theorem~\ref{thm:main}.	

\section{Discussion}
\label{sec:discussion}

This work uncovers the feasibility of provable acceleration of score-based samplers under minimal assumptions: namely, $L^2$-accurate score estimates and a finite second moment of the target distribution. We have proposed a training-free accelerated sampler that attains an iteration complexity of $\wt O(d^{5/4}/\sqrt{\veps})$, establishing a theoretical foundation for efficient sampling speedups.

Several important directions remain for future investigation. One pressing issue is to sharpen our convergence theory with respect to the data dimension $d$ through a more refined analysis.
Moving beyond the second-order approximation developed in this work, another promising direction is to explore whether higher-order ODE approximations can yield improved iteration complexity.
Finally, for target distributions exhibiting low-dimensional structures, developing specialized acceleration schemes that exploit these intrinsic properties presents another important avenue for sampling efficiency.

\section*{Acknowledgements}

Gen Li is supported in part by the Chinese University of Hong Kong
Direct Grant for Research.

\bibliographystyle{apalike}
\bibliography{bibfileDF}

\appendix

\section{Numerical experiments}
\label{sec:numerical}

In this section, we present numerical evidence to support our theoretical claims. Due to our theoretical focus, we use a toy example to validate the claimed accelerated convergence rate.

In order to quantify the distance between the target distribution and the distribution generated by our sampler, we choose the Gaussian distribution as the target, which admits a closed form expression of the KL divergence.
Since our primary objective is to evaluate the convergence rate of the proposed acceleration scheme, we implement the sampler using the true scores.

Specifically, we consider a $d$-dimensional Gaussian distribution with independent coordinates. The last $d-k$ coordinates are fixed at zero, while for each $i\in[k]$, the $i$-th coordinate is distributed according to $\cN(0,\sigma_i^2)$ with $\sigma_i^2\sim\mathsf{Unif}[0,10]$. The learning rate schedule is set according to \eqref{eq:learning-rate}.

Figure~\ref{fig:KL} displays the KL divergence---our measure of sampling error---versus the total iterations $T$ for various parameter settings. For comparison, we also plot the performance the original DDPM sampling algorithm \citep{ho2020denoising}. Moreover, we include a reference line corresponding to the convergence rate $\Theta(\mathsf{poly}\log T /T^4)$, where we focus on the $T$-dependence and the $T^{-4}$ scaling is guided by Theorem~\ref{thm:main}.

The results clearly show that our proposed acceleration scheme achieves a substantially faster convergence rate than the original DDPM. In particular, the observed convergence rate aligns closely with the reference line if we set $\mathsf{poly}\log T$ to $\log^4 T$. This validates our theory that the sampling error, when measured in the KL divergence, scale as $T^{-4}$ up to logarithmic factors (since the slope in the figure is roughly $-4$), which in turn implies a total variation distance $\wt O(T^{-2})$. 

Finally, we note that our current convergence guarantee exhibits $\log^{10} T$ dependency, which is higher than what we observe in practice. This discrepancy indicates that the logarithmic dependency in our bounds is likely to be suboptimal. To sharpen the log factors needs much more efforts, which will be left for future work.

\begin{figure}[t]
\centering
\begin{tabular}{ccc}
\includegraphics[width=0.33\textwidth]{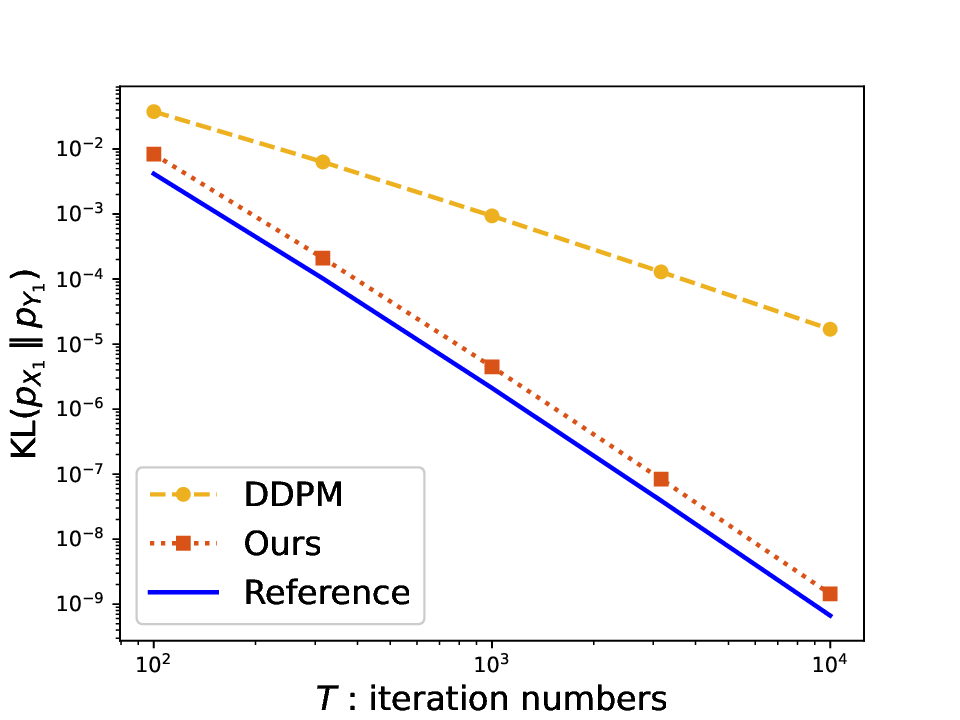} 
& \includegraphics[width=0.33\textwidth]{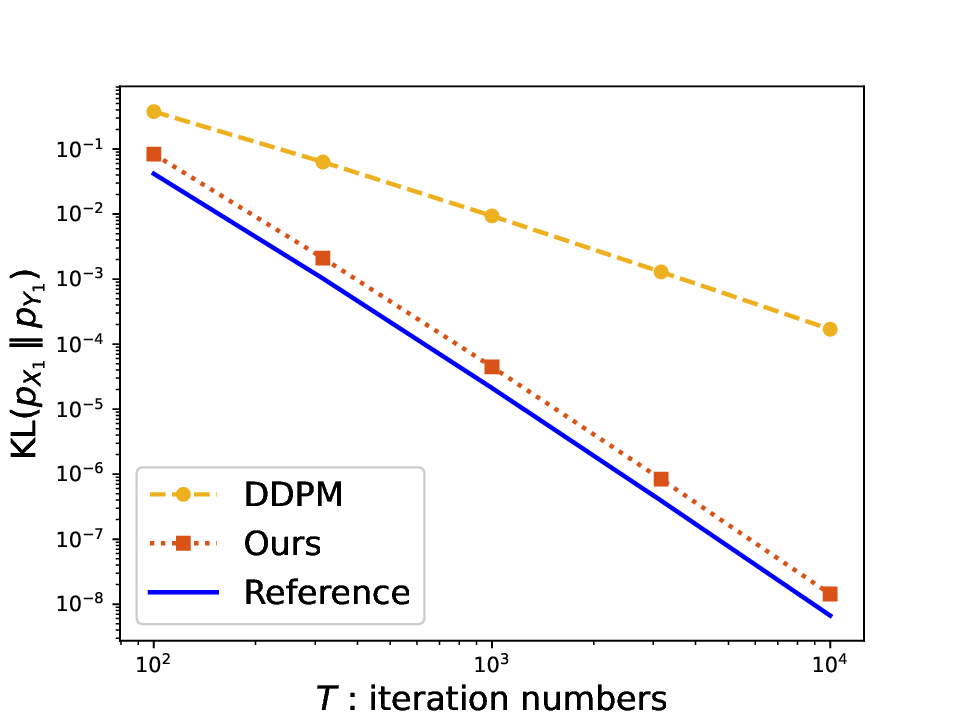}
& \includegraphics[width=0.33\textwidth]{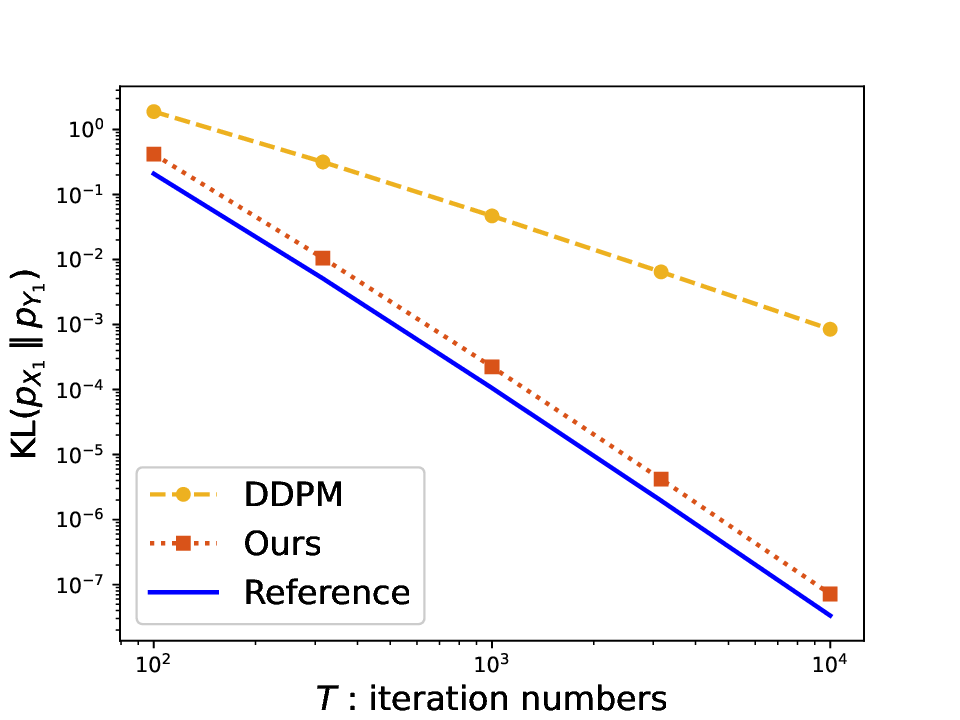}\tabularnewline
(a) & (b)& (c)\tabularnewline
\end{tabular}
\caption{Sampling error of the proposed acceleration method, DDPM, and fitted rate $T\mapsto\Theta(\log^4 T /T^4)$: (a) $k=10,d=10$; (b) $k=10,d=100$; (c) $k=100,d=500$.\label{fig:KL}}
\end{figure}
\section{Proof of lemmas for Theorem~\ref{thm:main}}

\subsection{Proof of Lemma~\ref{lem:discretization}}
\label{sec:pf-lem:discretization}

We begin by introducing the following SDE:
\begin{align}
	\mathrm d \wt{X}_\tau = - \frac{1}{2(1-\tau)} \wt X_\tau \diff \tau + \frac{1}{\sqrt{1-\tau}} \diff B_\tau,\quad \wt X_0 \sim p_{\mathsf{data}}; \quad \tau\in[0,1).
\end{align}
It is straightforward to verify that the solution to this SDE satisfies
\begin{align}\label{eq:def-Xtau}
\wt{X}_\tau \overset{\mathrm d}{=} \sqrt{1-\tau}X_0 + \sqrt{\tau}Z,
\end{align}
where $Z\sim \mathcal{N}(0, I_d)$ is a standard Gaussian random vector in $\bR^d$ independent of $X_0$. In particular, we know that $X_t \distequal \wt{X}_{1-\ol\alpha_t}$ for all $t\in[T]$ by \eqref{eq:forward-dist}.

For any $\tau\in(0,1)$, we denote by $\tilde{s}^\star(\cdot,\tau):\bR^d \rightarrow \bR^d$ the score function of $p_{\wt{X}_\tau}$, i.e.
\begin{align}
\tilde{s}^\star(x,\tau) &:= \nabla \log p_{\wt{X}_\tau}(x) = - \frac{1}{\tau} \bE \big[\wt{X}_\tau - \sqrt{1-\tau} \wt X_0 \mid \wt{X}_\tau = x\big],\quad \forall x\in\bR^d. \label{def-score-cont}
\end{align}
where the last expression can be derived by standard properties of Gaussian random vectors.

As discussed in \eqref{eq:forward-SDE} and \eqref{eq:prob-flow-ode} from Section~\ref{sec:prelim}, when initialized at $x_{\tau_0}^\star\sim p_{\wt X_{\tau_0}}$ for any $\tau_0\in(0,1)$, the process $(x_\tau^\star)_{\tau\in(\tau_0,1)}$ that solves the following probability flow ODE, 
\begin{align}
\label{eq:ODE-star}
\frac{\diff}{\diff \tau} \frac{x_{\tau}^{\star}}{\sqrt{1-\tau}} = -\frac{\tilde{s}^\star(x^\star_\tau,\tau)}{2(1-\tau)^{3/2}},
\end{align}
has the same marginal distributions as $(\wt X_\tau)_{\tau\in(\tau_0,1)}$. This further implies that  for all $t\in[T]$: $$X_t \distequal \wt{X}_{1-\ol\alpha_t} \distequal x_{1-\ol\alpha_t}^\star.$$

Finally, let us introduce the function $\theta^\star(\cdot):(0,1)\rightarrow \bR^d$ by
\begin{align}
	\label{eq:s-star-defn}
	 \theta^\star(\tau) & \defn \tilde{s}^\star(x^\star_\tau,\tau) = - \frac{1}{\tau} \bE \big[\wt{X}_\tau - \sqrt{1-\tau} \wt X_0 \mid \wt{X}_\tau = x_\tau^\star\big].
\end{align}

Equipped with the notations, let us explicitly construct the map $\Phi_t(\cdot):\bR^d \rightarrow \bR^d$.
Fix an arbitrary $t=2,\dots,T$. Let us denote 
\begin{align}
	\tau_t \defn 1-\overline{\alpha}_t \quad \text{and} \quad \tau_{t-1} \defn 1-\overline{\alpha}_t(3-2\alpha_t). \label{eq:tau-t-defn}
\end{align}
Given an arbitrary $x\in\bR^d$, the ODE \eqref{eq:ODE-star} allows us to define the function $\varphi^\star_t(\cdot):\bR^d \rightarrow \bR^d$ by
\begin{align}
\label{eq:x-star-tau-integral}
\frac{\varphi^\star_t(x)}{\sqrt{1-\tau_{t-1}}}
\defn \frac{x}{\sqrt{1-\tau_{t}}} -  \frac12 \int_{\tau_t}^{\tau_{t-1}}\frac{\theta^\star(\tau)}{(1-\tau)^{3/2}} \diff \tau.
\end{align}
We can subsequently define $\Phi_t(\cdot)$ as
\begin{align}
\label{eq:phi-construction}
\Phi_t(x) \defn \sqrt{\frac{1-\tau_t}{1-\tau_{t-1}}} \varphi^\star_t(x) = x-  \frac12 \sqrt{1-\tau_t}\int_{\tau_t}^{\tau_{t-1}}\frac{\theta^\star(\tau)}{(1-\tau)^{3/2}} \diff \tau, \quad \forall x\in\bR^d.
\end{align}

Note that the probability flow ODE \eqref{eq:ODE-star} ensures that if $x_{\tau_{t}}^{\star} \sim p_{\wt{X}_{\tau_{t}}}$, then $x_{\tau_{t-1}}^{\star}$ satisfies $x_{\tau_{t-1}}^{\star} \sim p_{\wt{X}_{\tau_{t-1}}}$.
Therefore, it is straightforward to verify that
\begin{align*}
\frac{1}{\sqrt{\alpha_t}}\big(\Phi_t(X_t) + \sigma_t Z_t\big) 
& \overset{\mathrm d}{=} \frac{1}{\sqrt{\alpha_t}}\big(\Phi_t(\wt{X}_{\tau_t}) + \sigma_t Z_t\big) 
= \frac{1}{\sqrt{\alpha_t}}\bigg(\sqrt{\frac{1-\tau_t}{1-\tau_{t-1}}} \wt{X}_{\tau_{t-1}} + \sigma_t Z_t\bigg) \\
&\overset{\mathrm d}{=} \frac{1}{\sqrt{\alpha_t}}\bigg(\sqrt{1-\tau_t}X_0 + \sqrt{\frac{(1-\tau_t)\tau_{t-1}}{1-\tau_{t-1}}}Z + \sigma_t Z_t\bigg)
\overset{\mathrm d}{=} X_{t-1}.
\end{align*}
Here, the first steps holds since $X_t \overset{\mathrm d}{=} \wt{X}_{\tau_{t}}$ by our construction and $\tau_t = 1-\ol\alpha_t$; the second step arises from our construction of the maps $\varphi^\star_t$ and $\Phi_t$ in \eqref{eq:x-star-tau-integral}--\eqref{eq:phi-construction}; the third step uses $\wt{X}_{\tau_{t-1}} \overset{\mathrm d}{=} \sqrt{1-\tau_{t-1}}X_0 + \tau_{t-1}Z$; the last step is true because of \eqref{eq:forward-dist}, $(1-\tau_t)/\alpha_t = \ol{\alpha}_t/\alpha_t = \ol{\alpha}_{t-1}$, and
\begin{align*}
	\frac{(1-\tau_t)\tau_{t-1}}{1-\tau_{t-1}} + \sigma_t^2 = \frac{\overline{\alpha}_t\big(1-\overline{\alpha}_t(3-2\alpha_t)\big)}{\overline{\alpha}_t(3-2\alpha_t)} + \alpha_t - \frac{1}{3-2\alpha_t} = -\ol{\alpha}_t+\alpha_t = \alpha_t (1-\ol{\alpha}_{t-1}).
\end{align*}
This establishes \eqref{eq:phi=d=X-t-1}. 

\subsection{Proof of Lemma~\ref{lem:outlier}}
\label{sec:pf-lem:outlier}
\paragraph{Proof of Claim \eqref{eq:psi-Y-star-dist}.}
We shall establish an equivalent statement:
\begin{align}
\label{eq:psi-Y-star-dist-strong}
	\mathbb{E}_{\cA_t}\Big[\big\|\Phi_t(X_t) - Y_{t-1}^\star \big\|_2^2 \Big] \lesssim (1-\alpha_t)^6\bigg(\frac{d}{1-\overline{\alpha}_t}\bigg)^5.
\end{align}
In view of the definitions of $\Psi_t$ and $Y_{t-1}^\star$ in \eqref{eq:psi-defn}, and \eqref{eq:Y-star-defn-1}, respectively, one has
\begin{align*}
 	Y_{t-1}^\star - \Phi_t(X_t) & = X_{t} + (1-\alpha_t)\Big(s_t^\star(X_t) 
+ \Big\{\alpha_{t}^{3/2}s_{t-1}^\star\big(Y_{t}^{\star,\md}\big) - s_t^\star(X_t)\Big\}\ind\{X_t\in\cE_t\}\Big)   \\ 
 	& \quad -\big( (1-\alpha_t)\Psi_t(X_t) +X_t+(1-\alpha_t)s^\star_t(X_t) \big) \\ 
 	& = (1-\alpha_t)\bigg(\Big\{\alpha_{t}^{3/2}s_{t-1}^\star\big(Y_{t}^{\star,\md}\big) - s_t^\star(X_t)\Big\}\ind\{X_t\in\cE_t\}-\Psi_t(X_t)\bigg).
\end{align*} 
Hence, \eqref{eq:psi-Y-star-dist} follows as an immediate consequence of \eqref{eq:psi-Y-star-dist-strong}. Therefore, we shall focus on proving \eqref{eq:psi-Y-star-dist-strong}. 

Towards this, let us fix an arbitrary $2\leq t\leq T$ and denote
\begin{align}
	\tau_t^\md \defn 1-\overline{\alpha}_{t-1}.
\end{align}
For any $x\in\bR^d$, we define $\varphi^\md_t(\cdot):\bR^d\rightarrow \bR^d$ by
\begin{align}
\frac{\varphi^\md_t(x)}{\sqrt{1-\tau_t^\md}}
& \defn \frac{x}{\sqrt{1-\tau_{t}}} - \frac{1}{2}(\tau_t^\md - \tau_t)\frac{\theta^\star(\tau_t)}{(1-\tau_t)^{3/2}} 
\label{eq:x-tau-minus-defn},
\end{align}
and $\varphi^\acc_t(\cdot):\bR^d\rightarrow \bR^d$ as
\begin{align}
\frac{\varphi^\acc_t(x)}{\sqrt{1-\tau_{t-1}}}
&\defn \frac{x}{\sqrt{1-\tau_{t}}} 
- \frac{1}{2}\int_{\tau_t}^{\tau_{t-1}}
\bigg[\frac{\theta^\star(\tau_t)}{(1-\tau_t)^{3/2}} 
+ \frac{\tau - \tau_t}{\tau_t^\md - \tau_t}\bigg(\frac{\tilde{s}^\star\big(\varphi^\md_t(x),{\tau_t^\md}\big)}{(1-\tau_t^\md)^{3/2}} - \frac{\theta^\star(\tau_t)}{(1-\tau_t)^{3/2}}\bigg)\bigg] \diff \tau \label{eq:x-tau-t-1-defn} \\
&= \frac{x}{\sqrt{1-\tau_{t}}} 
- \frac{1}{2}(\tau_{t-1}-\tau_t)
\bigg[\frac{\theta^\star({\tau_t})}{(1-\tau_t)^{3/2}} 
+\frac12\frac{\tau_{t-1} - \tau_t}{\tau_t^\md - \tau_t}\bigg(\frac{\tilde{s}^\star\big(\varphi^\md_t(x),{\tau_t^\md}\big)}{(1-\tau_t^\md)^{3/2}} - \frac{\theta^\star({\tau_t})}{(1-\tau_t)^{3/2}}\bigg)\bigg] \label{eq:x-tau-t-1-expression}.
\end{align}
In particular, these allow us to express $Y_{t-1}^{\star}$ as
\begin{align}
	Y_{t-1}^{\star}= \sqrt{\frac{1-\tau_t}{1-\tau_{t-1}}} \varphi^\acc_t(X_t). \label{eq:Y-t-1-alt}
\end{align}
Combined with the definition of $\Phi_t$ in \eqref{eq:phi-construction}, this suggests that it suffices to control the distance between $\varphi^\star_t(x_{\tau_t}^\star)$ and $\varphi^\acc_t(x_{\tau_t}^\star)$.

To this end, by \eqref{eq:x-star-tau-integral} and \eqref{eq:x-tau-t-1-defn}, we know that
\begin{align}
 \frac{\varphi^\acc_t(x_{\tau_t}^\star)-\varphi^\star_t(x_{\tau_t}^\star)}{\sqrt{1-\tau_{t-1}}} 
	& = \frac{1}{2}\int_{\tau_t}^{\tau_{t-1}}\bigg[\frac{\theta^\star(\tau)}{(1-\tau)^{3/2}} - \frac{\theta^\star(\tau_t)}{(1-\tau_t)^{3/2}}
		- \frac{\tau - \tau_t}{\tau_t^\md - \tau_t}
		\bigg(\frac{\tilde{s}^\star\big(\varphi^\md_t(x_{\tau_t}^\star),{\tau_t^\md}\big)}{(1-\tau_t^\md)^{3/2}} - \frac{\theta^\star(\tau_t)}{(1-\tau_t)^{3/2}}\bigg)\bigg] \diff \tau \nonumber \\
	& = \frac{1}{2}\int_{\tau_t}^{\tau_{t-1}}\bigg[\frac{\theta^\star(\tau)}{(1-\tau)^{3/2}}
	- \frac{\theta^\star(\tau_t)}{(1-\tau_t)^{3/2}} - \frac{\tau - \tau_t}{\tau_t^\md - \tau_t} \bigg(\frac{\theta^\star(\tau_t^\md)}{(1-\tau_t^\md)^{3/2}}
	 - \frac{\theta^\star(\tau_t)}{(1-\tau_t)^{3/2}}\bigg)\bigg]  \diff \tau \nonumber \\
& \quad - \frac14 \frac{(\tau_{t-1} - \tau_t)^2}{\tau_t^\md - \tau_t}
\frac{\tilde{s}^\star\big(\varphi^\md_t(x_{\tau_t}^\star),{\tau_t^\md}\big)-\theta^\star(\tau_t^\md)}{(1-\tau_t^\md)^{3/2}}\label{eq:x-tau-t-1-dist-temp} . 
\end{align}
By the fundamental theorem of calculus, one has
\begin{align*}
\frac{\theta^\star(\tau')}{(1-\tau')^{3/2}} - \frac{\theta^\star(\tau_t)}{(1-\tau_t)^{3/2}}
= \int_{\tau_t}^{\tau'}\frac{\diff}{\diff \tau} \frac{\theta^\star(\tau)}{(1-\tau)^{3/2}} \diff \tau, \quad \forall \tau',
\end{align*}
and
\begin{align*}
\frac{\theta^\star(\tau'')}{(1-\tau'')^{3/2}} - \frac{\theta^\star(\tau_t)}{(1-\tau_t)^{3/2}} - (\tau''-\tau_t)\frac{\diff}{\diff\tau}\frac{\theta^\star(\tau_t)}{(1-\tau_t)^{3/2}}
	&= \int_{\tau_t}^{\tau''}\bigg(\frac{\diff}{\diff \tau'} \frac{\theta^\star(\tau')}{(1-\tau')^{3/2}} - \frac{\diff}{\diff \tau}\frac{\theta^\star(\tau_t)}{(1-\tau_t)^{3/2}}\bigg) \diff \tau' \\
		&= \int_{\tau_t}^{\tau''}\int_{\tau_t}^{\tau'}\frac{\diff^2}{\diff \tau^2} \frac{\theta^\star(\tau)}{(1-\tau)^{3/2}}  \diff \tau  \diff \tau', \quad \forall \tau''.
\end{align*}
Combining these expressions yields
\begin{align*}
& \frac{\theta^\star(\tau'')}{(1-\tau'')^{3/2}} - \frac{\theta^\star(\tau_t)}{(1-\tau_t)^{3/2}} 
	- \frac{\tau'' - \tau_t}{\tau_t^\md - \tau_t} \bigg(\frac{\theta^\star(\tau_t^\md)}{(1-\tau_t^\md)^{3/2}} - \frac{\theta^\star(\tau_t)}{(1-\tau_t)^{3/2}}\bigg) \\
	& = \int_{\tau_t}^{\tau''}\int_{\tau_t}^{\tau'}\frac{\diff^2}{\diff \tau^2} \frac{\theta^\star(\tau)}{(1-\tau)^{3/2}}  \diff \tau  \diff \tau' 
		- (\tau''-\tau_t) \bigg[
		\frac{1}{\tau_t^\md - \tau_t} \bigg(\frac{\theta^\star(\tau_t^\md)}{(1-\tau_t^\md)^{3/2}} - \frac{\theta^\star(\tau_t)}{(1-\tau_t)^{3/2}}\bigg)
		-\frac{\diff}{\diff\tau}\frac{\theta^\star(\tau_t)}{(1-\tau_t)^{3/2}}\bigg] \\
		& = \int_{\tau_t}^{\tau''}\int_{\tau_t}^{\tau'}\frac{\diff^2}{\diff \tau^2} \frac{\theta^\star(\tau)}{(1-\tau)^{3/2}}  \diff \tau  \diff \tau'
		- \frac{\tau''-\tau_t}{\tau_t^\md-\tau_t} \int_{\tau_t}^{\tau_t^\md}\int_{\tau_t}^{\tau'}\frac{\diff^2}{\diff \tau^2} \frac{\theta^\star(\tau)}{(1-\tau)^{3/2}}  \diff \tau  \diff \tau'.
\end{align*}
Consequently, we can rewrite \eqref{eq:x-tau-t-1-dist-temp} as
\begin{align*}
\frac{\varphi^\acc_t(x_{\tau_t}^\star)-\varphi^\star_t(x_{\tau_t}^\star)}{\sqrt{1-\tau_{t-1}}} 
		& = \frac12 \int_{\tau_t}^{\tau_{t-1}} \int_{\tau_t}^{\tau''}\int_{\tau_t}^{\tau'}\frac{\diff^2}{\diff \tau^2} \frac{\theta^\star(\tau)}{(1-\tau)^{3/2}}  \diff \tau  \diff \tau' \diff \tau''  -\frac{(\tau_{t-1} - \tau_t)^2}{4(\tau_t^\md - \tau_t)} \int_{\tau_t}^{\tau_t^\md}\int_{\tau_t}^{\tau'}\frac{\diff^2}{\diff \tau^2} \frac{\theta^\star(\tau)}{(1-\tau)^{3/2}}  \diff \tau  \diff \tau' \nonumber \\
		& \quad - \frac{(\tau_{t-1} - \tau_t)^2}{4(\tau_t^\md - \tau_t)}\frac{\tilde{s}^\star\big(\varphi^\md_t(x_{\tau_t}^\star),{\tau_t^\md}\big)-\theta^\star(\tau_t^\md)}{(1-\tau_t^\md)^{3/2}}.
\end{align*}
Taking the expectation over ${x_{\tau_t}^\star\sim p_{\wt{X}_{\tau_t}}}$, we obtain
\begin{align}
\mathbb{E}\Bigg[&\bigg\| \frac{\varphi^\acc_t(x_{\tau_t}^\star)-\varphi^\star_t(x_{\tau_t}^\star)}{\sqrt{1-\tau_{t-1}}} \bigg\|_2^2 \ind\big\{x_{\tau_t}^\star\in\cE_t \big\}\Bigg] \nonumber \\
& \numpf{i}{\lesssim} (\tau_{t-1}-\tau_t)^5\int_{\tau_t}^{\tau_{t-1}}\mathbb{E}\Bigg[\bigg\|\frac{\diff^2}{\diff \tau^2} \frac{\theta^\star(\tau)}{(1-\tau)^{3/2}}\bigg\|_2^2 \Bigg] \diff \tau  + \frac{(\tau_{t-1} - \tau_t)^4}{(\tau_t^\md - \tau_t)^2} (\tau^\md_t-\tau_t)^3\int_{\tau_t}^{\tau^\md_{t}}\mathbb{E}\Bigg[\bigg\|\frac{\diff^2}{\diff \tau^2} \frac{\theta^\star(\tau)}{(1-\tau)^{3/2}}\bigg\|_2^2 \Bigg] \diff \tau \nonumber \\
	&\quad+ \frac{(\tau_{t-1}-\tau_t)^4}{(\tau_{t}-\tau_t^\md)^{2}}\mathbb{E}\Bigg[\bigg\|\frac{\tilde{s}^\star\big(\varphi^\md_t(x_{\tau_t}^\star),{\tau_t^\md}\big)-\theta^\star(\tau_t^\md)}{(1-\tau_t^\md)^{3/2}}\bigg\|_2^2\ind\big\{x_{\tau_t}^\star\in\cE_t \bigg\}\Bigg] \nonumber \\
& \numpf{ii}{\asymp} (\tau_{t}-\tau_{t-1})^5\int_{\tau_{t-1}}^{\tau_{t}}\mathbb{E}\Bigg[\bigg\|\frac{\diff^2}{\diff \tau^2} \frac{\theta^\star(\tau)}{(1-\tau)^{3/2}}\bigg\|_2^2 \Bigg] \diff \tau \nonumber \\
	&\quad+ (\tau_{t}-\tau_{t-1})^2\mathbb{E}\Bigg[\bigg\|\frac{\tilde{s}^\star\big(\varphi^\md_t(x_{\tau_t}^\star),{\tau_t^\md}\big)-\theta^\star(\tau_t^\md)}{(1-\tau_t^\md)^{3/2}}\bigg\|_2^2\ind\big\{x_{\tau_t}^\star\in\cE_t \bigg\}\Bigg]
	  \label{eq:x-tau-t-1-dist-temp2}.
\end{align}
Here, (i) use Jensen's inequality that $(\int_0^T f\diff x)^2\leq T\int_0^T f^2\diff x$ for any $T>0$; (ii) is because of the following inequality:
\begin{align}
	\tau_t - \tau_t^\md = \ol{\alpha}_{t-1} - \ol{\alpha}_t = \ol{\alpha}_{t-1}(1-\alpha_t) \asymp 2\ol{\alpha}_t(1-\alpha_t) = \tau_t - \tau_{t-1} \label{eq:tau-minus-tau-dist},
\end{align}
where we use $\alpha_t\asymp 1$ from \eqref{eq:alpha-t-lb} in Lemma~\ref{lemma:step-size}.

\begin{itemize}
\item 
To bound the first term in \eqref{eq:x-tau-t-1-dist-temp2}, we introduce Lemma~\ref{lemma:u1-u2-l2norm} that controls the $\ell_2$ norm of the derivatives of $\theta^\star(\tau)$.
\begin{lemma}
\label{lemma:u1-u2-l2norm}
For any constant $k>0$, there exists some constant $C_k>0$ that only depends on $k$ such that for all $\tau\in(0,1)$:
	\begin{subequations}
	\label{eq:u-l2norm-ub-lemma}
	\begin{align}
	\mathbb{E}_{x^\star_\tau\sim \wt X_\tau}\Bigg[\bigg\|\frac{\diff}{\diff \tau} \frac{\theta^\star(\tau)}{(1-\tau)^{3/2}}\bigg\|_2^k\Bigg]
	&\leq C_k \frac{1}{(1-\tau)^k}\bigg(\frac{d}{\tau(1-\tau)}\bigg)^{3k/2}; \label{eq:u1-l2norm-ub-lemma}\\
	\mathbb{E}_{x^\star_\tau\sim \wt X_\tau}\Bigg[\bigg\|\frac{\diff^2}{\diff \tau^2} \frac{\theta^\star(\tau)}{(1-\tau)^{3/2}}\bigg\|_2^k\Bigg]
	&\leq C_k \frac{1}{(1-\tau)^k}\bigg(\frac{d}{\tau(1-\tau)}\bigg)^{5k/2}. \label{eq:u2-l2norm-ub-lemma}
	\end{align}
Moreover, if $\tau$ satisfies $-\log p_{\wt X_\tau}(x_\tau^\star) \leq 2C_2d\log T$, we further have
\begin{align}
	\label{eq:u1-l2norm-high-prob-ub}
\bigg\|\frac{\diff}{\diff \tau} \frac{\theta^\star(\tau)}{(1-\tau)^{3/2}}\bigg\|_2^k \leq C_k \frac{1}{(1-\tau)^k}\bigg(\frac{d\log T}{\tau(1-\tau)}\bigg)^{3k/2}.
\end{align}
	\end{subequations}
\end{lemma}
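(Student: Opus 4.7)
The plan is to differentiate $v(\tau) \defn \theta^\star(\tau)/(1-\tau)^{3/2}$ once (for \eqref{eq:u1-l2norm-ub-lemma}) and twice (for \eqref{eq:u2-l2norm-ub-lemma}) along the probability flow ODE \eqref{eq:ODE-star}, and then reduce each derivative to a polynomial expression in conditional moments of $X_0$ given $\wt X_\tau$, whose expectations are controlled by the Gaussian structure of the forward process.

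First I would invoke Tweedie's formula to rewrite $\tilde{s}^\star(x,\tau) = -\tau^{-1}\big(x - \sqrt{1-\tau}\,m(x,\tau)\big)$ with $m(x,\tau) \defn \bE[X_0\mid \wt X_\tau = x]$. Setting $u_\tau \defn x_\tau^\star/\sqrt{1-\tau}$, the ODE \eqref{eq:ODE-star} reads $du_\tau/d\tau = -v(\tau)/2$ and
\begin{align*}
v(\tau) = -\frac{1}{\tau(1-\tau)}\big(u_\tau - m(\sqrt{1-\tau}\,u_\tau,\tau)\big).
\end{align*}
Now I repeatedly apply the chain rule together with the standard conditional-moment identities $\nabla_x m(x,\tau) = \tau^{-1}(1-\tau)\,\cov(X_0\mid \wt X_\tau = x)$ and an analogous expression for $\partial_\tau m(x,\tau)$ involving the third conditional cumulant of $X_0 - m$. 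This produces closed-form expansions of the schematic form
\begin{align*}
\frac{d v}{d\tau} = P_3(u_\tau,\tau), \qquad \frac{d^2 v}{d\tau^2} = P_5(u_\tau,\tau),
\end{align*}
where $P_k$ is a universal rational function of $\tau$ and $1-\tau$ whose random inputs are conditional moments of $X_0 - m$ of total order at most $k$, evaluated at $x = \sqrt{1-\tau}\,u_\tau$.

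Next I bound the $L^k$ moments on the right. Since $x_\tau^\star \distequal \wt X_\tau$, taking expectation over $x_\tau^\star \sim p_{\wt X_\tau}$ and applying the conditional Jensen inequality reduces each conditional moment of order $j$ appearing in $P_3$ or $P_5$ to an unconditional moment of $\|X_0 - m(\wt X_\tau,\tau)\|_2$. Since $\wt X_\tau - \sqrt{1-\tau}\,X_0 = \sqrt{\tau}\,Z$ with $Z\sim\mathcal N(0,I_d)$, projection onto the posterior gives $\bE\big[\|X_0 - m(\wt X_\tau,\tau)\|_2^\ell\big]^{1/\ell} \lesssim \sqrt{\tau d/(1-\tau)}$ up to constants depending on $\ell$. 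Combining these Gaussian moment estimates with the rational prefactors yields \eqref{eq:u1-l2norm-ub-lemma} and \eqref{eq:u2-l2norm-ub-lemma} after collecting powers of $\tau^{-1}$ and $(1-\tau)^{-1}$. For the pointwise bound \eqref{eq:u1-l2norm-high-prob-ub}, the hypothesis $-\log p_{\wt X_\tau}(x_\tau^\star) \leq 2C_2 d\log T$ provides a lower bound on the marginal density at $x_\tau^\star$; by Bayes' rule this upgrades every \emph{conditional} moment of $X_0$ given $\wt X_\tau = x_\tau^\star$ to a deterministic Gaussian integral dominated by $(2\pi\tau)^{-d/2} e^{2C_2 d\log T}$ times $p_{X_0}$, replacing the $d$-factors above by $d\log T$-factors.

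The main obstacle is the bookkeeping for the second derivative: each differentiation of $m$ generates several terms (through $\partial_\tau m$, $\nabla_x m$, and the substitution $du_\tau/d\tau = -v/2$), and a naive term-by-term bound can overshoot the stated exponents. The delicate point is to group the terms of $P_5$ so that the total order of $\|X_0 - m\|_2$-moments is exactly $5$ and the prefactor in $\tau$ and $1-\tau$ matches $(1-\tau)^{-1}\big(d/(\tau(1-\tau))\big)^{5/2}$; only then does a single application of Cauchy--Schwarz at the end produce the sharp rate claimed in \eqref{eq:u2-l2norm-ub-lemma}.
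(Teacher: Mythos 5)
Your proposal is correct and follows essentially the same strategy as the paper: express the score along the probability flow ODE trajectory via Tweedie's formula, differentiate once and twice (the paper organizes this as a Taylor expansion in $\delta$ of the Bayes reweighting factor $\exp(\Delta_\delta(\tau))$, while you apply the chain rule directly to the conditional mean $m$), then bound the resulting degree-$3$ and degree-$5$ polynomials in conditional moments using the Gaussian structure of the forward noise, and for the pointwise bound \eqref{eq:u1-l2norm-high-prob-ub} use the density lower bound to control conditional moments deterministically (the paper's Lemma~\ref{lem:bound_cond}). The one place you flag as delicate---the second-derivative bookkeeping---is indeed where the paper's proof spends most of its effort (the $f,g,h$ decomposition in \eqref{eq:f}--\eqref{eq:gh} and the explicit formulas \eqref{eq:u1-expression}--\eqref{eq:u2-expression}), but a careful term-by-term accounting does deliver the exponents of \eqref{eq:u2-l2norm-ub-lemma} without overshoot, so no special grouping beyond Cauchy--Schwarz on products of conditional moments is needed.
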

\begin{proof}
	See Appendix \ref{sec:pf-lemma:u1-u2-l2norm}.
\end{proof}
Therefore, applying \eqref{eq:u2-l2norm-ub-lemma} yields 
\begin{align}
\label{eq:x-tau-t-1-dist-temp2-term1-ub}
	\int_{\tau_{t-1}}^{\tau_{t}}\mathbb{E}\Bigg[\bigg\|\frac{\diff^2}{\diff \tau^2} \frac{\theta^\star(\tau)}{(1-\tau)^{3/2}}\bigg\|_2^2 \Bigg] \diff \tau \lesssim \int_{\tau_{t-1}}^{\tau_{t}} \frac{1}{(1-\tau)^2}\bigg(\frac{d}{\tau(1-\tau)}\bigg)^{5} \diff \tau \lesssim  \frac{\tau_{t}-\tau_{t-1}}{(1-\tau_t)^2}\bigg(\frac{d}{\tau_t(1-\tau_t)}\bigg)^{5}.
\end{align}
Here, the last step holds as \eqref{eq:1-alpha-1-olalpha} in Lemma~\ref{lemma:step-size} gives
\begin{align}
\label{eq:tau-t-t-1}
	\frac{\tau_{t}-\tau_{t-1}}{\tau_t(1-\tau_t)} = \frac{2(1-\alpha_t)}{1-\ol\alpha_t} \lesssim \frac{\log T}{T} = o(1),
\end{align}
which further implies that
\begin{align}
	\bigg|\frac{1}{\tau(1-\tau)}-\frac{1}{\tau_t(1-\tau_t)}\bigg|= \frac{|\tau_t-\tau|}{\tau_t(1-\tau_t)}\frac{|1-\tau-\tau_t|}{\tau(1-\tau)} = o(1)\frac{1}{\tau(1-\tau)},\quad \forall \tau\in[\tau_{t-1},\tau_t]. \label{eq:tau-1-tau-inv-diff}
\end{align}

\item 
Regarding the second term in \eqref{eq:x-tau-t-1-dist-temp2},  we find it helpful to define the Jacobian $\wt J:\bR^d \times (0,1)\rightarrow \bR^{d\times d}$
\begin{align}
\wt J\big(\sqrt{1-\tau}x,\tau\big) &\defn \frac{\partial}{\partial x}\frac{\tilde{s}^\star\big(\sqrt{1-\tau}x,\tau\big)}{(1-\tau)^{3/2}}
\label{eq:J-defn} \\
&= -\frac{1}{\tau(1-\tau)}I_d + \frac{1}{\tau^2} \Big(\bE\big[(x - \wt{X}_0)(x - \wt{X}_0\big)^{\top}\mid \wt{X}_\tau = \sqrt{1-\tau}x\big] \nonumber \\
& \hspace{10em} -\bE\big[x - \wt{X}_0\mid \wt{X}_\tau = \sqrt{1-\tau}x\big]\bE\big[x - \wt{X}_0\mid \wt{X}_\tau = \sqrt{1-\tau}x\big]^\top \Big) \label{eq:J-expression}.
\end{align}
Combined with the definition $\theta^\star(\tau_t^\md)\defn \tilde{s}^\star\big(x_{\tau_t^\md}^\star,\tau_t^\md\big)$, this allows us to express
\begin{align}
\frac{\theta^\star(\tau_t^\md) - \tilde{s}^\star\big(\varphi^\md_t(x_{\tau_t}^\star),\tau_t^\md\big)}{(1-\tau_t^\md)^{3/2}} 
= \int_0^1 \wt J\big(\gamma x_{\tau_t^\md}^{\star} + (1-\gamma)\varphi^\md_t(x_{\tau_t}^\star),\tau_t^\md\big) \diff  \gamma \cdot \frac{x_{\tau_t^\md}^{\star} - \varphi^\md_t(x_{\tau_t}^\star)}{\sqrt{1-\tau_t^\md}}. \label{eq:x-tau-t-1-dist-temp2-term2-temp}
\end{align}

\begin{itemize}
	\item 
For the vector term, recall the definition of $\varphi^\md_t(x_{\tau_t}^\star)$ in \eqref{eq:x-tau-minus-defn}. From the ODE \eqref{eq:ODE-star}, we can write
\begin{align*}
\frac{x_{\tau_t^\md}^{\star} - \varphi^\md_t(x_{\tau_t}^\star)}{\sqrt{1-\tau_t^\md}}
&= - \frac{1}{2}\int_{\tau_t}^{\tau_t^\md}\bigg(\frac{\theta^\star(\tau)}{(1-\tau)^{3/2}}
- \frac{\theta^\star(\tau_t)}{(1-\tau_t)^{3/2}}\bigg) \diff \tau 
= - \frac{1}{2}\int_{\tau_t}^{\tau_t^\md}\int_{\tau_t}^{\tau'}\frac{\diff}{\diff \tau} \frac{\theta^\star(\tau)}{(1-\tau)^{3/2}} \diff \tau \diff \tau'.
\end{align*}
Applying Lemma~\ref{lemma:u1-u2-l2norm} again shows that
\begin{align}
	\bE\Bigg[\bigg\|\frac{x_{\tau_t^\md}^{\star} - \varphi^\md_t(x_{\tau_t}^\star)}{\sqrt{1-\tau_t^\md}} \bigg\|_2^4 \Bigg] 
&\numpf{i}{\lesssim} (\tau_t-\tau_t^\md)^7\int^{\tau_t}_{\tau_t^\md} \bE \Bigg[\bigg\|\frac{\diff}{\diff \tau} \frac{\theta^\star(\tau)}{(1-\tau)^{3/2}} \bigg\|_2^4 \Bigg]  \diff \tau \nonumber \\
&\numpf{ii}{\lesssim} (\tau_t-\tau_t^\md)^7\int^{\tau_t}_{\tau_t^\md} \frac{1}{(1-\tau)^4} \bigg(\frac{d}{\tau(1-\tau)}\bigg)^6  \diff \tau  \nonumber \\
	&\lesssim \frac{(\tau_{t}-\tau_t^\md)^8}{(1-\tau_t)^4} \bigg(\frac{d}{\tau_t(1-\tau_t)}\bigg)^6 \asymp \frac{(\tau_{t}-\tau_{t-1})^8}{(1-\tau_t)^4} \bigg(\frac{d}{\tau_t(1-\tau_t)}\bigg)^6 \label{eq:x-tau-minus-dist-l2norm-ub},
\end{align}
where (i) applies Jensen' inequality; (ii) uses \eqref{eq:u1-l2norm-ub-lemma}; the last line follows from \eqref{eq:tau-1-tau-inv-diff} and $\tau_t-\tau_t^\md\asymp \tau_t-\tau_{t-1}$ in \eqref{eq:tau-minus-tau-dist}.

\item Turning to the Jacobian term, we need Lemma~\ref{lemma:x-star-x0-dist-l2norm} below to control it.
\begin{lemma}
\label{lemma:x-star-x0-dist-l2norm}
For any $\tau\in(0,1)$ and any integer $k > 0$, the following holds:
\begin{align}
\mathbb{E}\Big[\big\|\wt{X}_\tau - \sqrt{1-\tau}\wt{X}_0 \big\|_2^k\Big] \leq C_k (\tau d)^{k/2}, \label{eq:x-star-x0-dist-l2norm}
\end{align}
where $C_k>0$ is some constant that only depends on $k$.
In particular, one has
\begin{align}
\label{eq:s-star-l2norm-ub}
	\mathbb{E}\Big[\big\|\tilde{s}^\star\big(\wt{X}_\tau,\tau\big)\big\|_2^{k}\Big] \leq C_k \bigg(\frac{d}{\tau}\bigg)^{k/2}.
\end{align}
\end{lemma}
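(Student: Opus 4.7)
The plan is to reduce both estimates to elementary Gaussian moment calculations. By the forward-process representation \eqref{eq:def-Xtau}, the joint law of $(\wt X_0,\wt X_\tau)$ coincides with that of $(\wt X_0,\sqrt{1-\tau}\wt X_0+\sqrt{\tau}Z)$ for some $Z\sim\cN(0,I_d)$ independent of $\wt X_0$. I would therefore write
\[
\bE\big[\big\|\wt X_\tau-\sqrt{1-\tau}\wt X_0\big\|_2^k\big] \;=\; \tau^{k/2}\,\bE\big[\|Z\|_2^k\big] \;\le\; C_k(\tau d)^{k/2},
\]
where the final inequality is the standard $k$-th moment bound for the Euclidean norm of a $d$-dimensional standard Gaussian vector (which can be read off from the explicit moments of the chi-distribution, or from sub-exponential concentration of $\|Z\|_2^2$ around $d$). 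This immediately yields \eqref{eq:x-star-x0-dist-l2norm}.

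For the score bound \eqref{eq:s-star-l2norm-ub}, I would invoke Tweedie's formula as already recorded in \eqref{def-score-cont}, giving the pointwise identity
\[
\tilde s^\star(\wt X_\tau,\tau) \;=\; -\frac{1}{\tau}\,\bE\big[\wt X_\tau-\sqrt{1-\tau}\wt X_0 \mid \wt X_\tau\big].
\]
Since $\|\cdot\|_2^k$ is convex on $\bR^d$ for every integer $k\ge 1$, the conditional Jensen inequality yields $\|\tilde s^\star(\wt X_\tau,\tau)\|_2^k \le \tau^{-k}\,\bE[\|\wt X_\tau-\sqrt{1-\tau}\wt X_0\|_2^k \mid \wt X_\tau]$. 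Taking the outer expectation via the tower property and substituting the bound just derived for the first part produces
\[
\bE\big[\|\tilde s^\star(\wt X_\tau,\tau)\|_2^k\big] \;\le\; \tau^{-k}\,\bE\big[\|\wt X_\tau-\sqrt{1-\tau}\wt X_0\|_2^k\big] \;\le\; C_k\Big(\frac{d}{\tau}\Big)^{k/2},
\]
as claimed.

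There is essentially no genuine obstacle here: the entire argument reduces to Tweedie's formula, one application of Jensen's inequality, and a single Gaussian norm-moment estimate. The only routine bookkeeping is to verify that the resulting constant depends solely on $k$ (and not on $d$ or $\tau$), which is automatic from the chi-distribution moment formula. I expect the proof to occupy at most a handful of lines in the appendix.
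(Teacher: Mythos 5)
Your argument is correct and matches the paper's own proof essentially line for line: both conditional-on-$\wt X_0$ reduce \eqref{eq:x-star-x0-dist-l2norm} to the $k$-th moment of a standard $d$-dimensional Gaussian norm, and both derive \eqref{eq:s-star-l2norm-ub} from Tweedie's formula \eqref{def-score-cont}, conditional Jensen, and the tower property. The only cosmetic difference is that the paper spells out the Gaussian norm moment bound explicitly (via Jensen for $k\ge 2$ plus a separate Cauchy--Schwarz step for $k=1$, and the Gamma-function formula), whereas you cite it as standard.
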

\begin{proof}
	See Appendix \ref{sec:pf-lemma:x-star-x0-dist-l2norm}.
\end{proof}
Equipped with Lemma~\ref{lemma:x-star-x0-dist-l2norm}, we claim that
\begin{align}
\label{eq:J-l2norm-ub}
\sup_{\gamma \in [0,1]}\mathbb{E}\bigg[\Big\|\wt J\big(\gamma x_{\tau_t^\md}^\star + (1-\gamma) \varphi^\md_t(x_{\tau_t}^\star),{\tau_t^\md}\big)\Big\|^4 \ind\big\{x_{\tau_t}^\star \in \cE_t\big\}\bigg] \lesssim \bigg(\frac{d}{\tau_t(1-\tau_t)}\bigg)^4,
\end{align}
and defer the proof to the end of this section. 

\item Combining \eqref{eq:x-tau-minus-dist-l2norm-ub} and \eqref{eq:J-l2norm-ub} with \eqref{eq:x-tau-t-1-dist-temp2-term2-temp} shows that
\begin{align}
	& \mathbb{E}_{x_{\tau_t}^\star \sim p_{\wt{X}_{\tau_{t}}}}\Bigg[\bigg\|\frac{\tilde{s}^\star\big(\varphi^\md_t(x_{\tau_t}^\star),{\tau_t^\md}\big)-\theta^\star(\tau_t^\md)}{(1-\tau_t^\md)^{3/2}}\bigg\|_2^2 \ind\big\{x_{\tau_t}^\star \in \cE_t\big\}\Bigg] \nonumber \\
	& \qquad\lesssim \sqrt{\bE\Bigg[\bigg\|\frac{x_{\tau_t^\md}^{\star} - \varphi^\md_t(x_{\tau_t}^\star)}{\sqrt{1-\tau_t^\md}} \bigg\|_2^4 \Bigg] \mathbb{E}\bigg[\Big\|\wt J\big(\gamma x_{\tau_t^\md}^{\star} + (1-\gamma)\varphi^\md_t(x_{\tau_t}^\star),\tau_t^\md\big)\Big\|^4 \ind\big\{x_{\tau_t}^\star \in \cE_t\big\}\bigg]} \nonumber \\
	& \qquad \lesssim \frac{(\tau_{t}-\tau_{t-1})^{4}}{(1-\tau_t)^2} \bigg(\frac{d}{\tau_t(1-\tau_t)}\bigg)^3 \cdot \bigg(\frac{d}{\tau_t(1-\tau_t)}\bigg)^2
	= \frac{(\tau_{t}-\tau_{t-1})^{4}}{(1-\tau_t)^2} \bigg(\frac{d}{\tau_t(1-\tau_t)}\bigg)^5.
	\label{eq:x-tau-t-1-dist-temp2-term2-ub}
\end{align}
\end{itemize}
\item 
Substituting \eqref{eq:x-tau-t-1-dist-temp2-term1-ub} and \eqref{eq:x-tau-t-1-dist-temp2-term2-ub} 
into \eqref{eq:x-tau-t-1-dist-temp2} demonstrates that
\begin{align}
	&\mathbb{E}\Bigg[\bigg\| \sqrt{\frac{1-\tau_t}{1-\tau_{t-1}}}\big(\varphi^\acc_t(x_{\tau_t}^\star)-\varphi^\star_t(x_{\tau_t}^\star)\big) \bigg\|_2^2 \ind\big\{x_{\tau_t}^\star\in\cE_t \big\}\Bigg] \notag \\
	&\qquad \lesssim (1-\tau_t)(\tau_{t}-\tau_{t-1})^5 \frac{\tau_{t}-\tau_{t-1}}{(1-\tau_t)^2}\bigg(\frac{d}{\tau_t(1-\tau_t)}\bigg)^{5}  
	+ (1-\tau_t)(\tau_t-\tau_{t-1})^2\frac{(\tau_{t}-\tau_{t-1})^{4}}{(1-\tau_t)^2} \bigg(\frac{d}{\tau_t(1-\tau_t)}\bigg)^5 \notag \\
	&\qquad \asymp \frac{(\tau_{t-1}-\tau_t)^6}{1-\tau_t}\bigg(\frac{d}{\tau_t(1-\tau_t)}\bigg)^5
	 \asymp (1-\alpha_t)^6 \bigg( \frac{d}{1-\ol\alpha_t} \bigg)^5,
	\label{eq:x-tau-t-1-ub}
\end{align}
where the last step uses \eqref{eq:tau-t-t-1}.
\end{itemize}
Collecting \eqref{eq:x-tau-t-1-ub} together with the definitions of $\Phi_t$ in \eqref{eq:phi-construction} and $Y^\star_{t-1}$ in \eqref{eq:Y-t-1-alt} finishes the proof of \eqref{eq:psi-Y-star-dist-strong}.

\paragraph{Proof of Claim \eqref{eq:E-subset-Psi}.}
 Recall $\tau_t \defn 1-\ol\alpha_t$ and $\tau_{t-1} \defn 1-\ol\alpha_t(3-2\alpha_t)$. We can express
\begin{align}
\Psi_t(x_{\tau_t}^{\star})
&\numpf{i}{=} \frac{1}{1-\alpha_t}\big(\Phi_t(x_{\tau_t}^{\star}) - x_{\tau_t}^{\star}\big) - s_{\tau_t}^\star(x_{\tau_t}^{\star}) \nonumber \\
&\numpf{ii}{=} \frac{2(1-\tau_t)^{3/2}}{\tau_t - \tau_{t-1}}\bigg(\frac{\varphi^\star_t(x_{\tau_{t}}^{\star})}{\sqrt{1-\tau_{t-1}}} - \frac{x_{\tau_{t}}^{\star}}{\sqrt{1-\tau_{t}}}\bigg) - s_{\tau_t}^\star(x_{\tau_t}^{\star}) \nonumber\\
&\numpf{iii}{=} -\frac{(1-\tau_t)^{3/2}}{\tau_t - \tau_{t-1}}\int_{\tau_t}^{\tau_{t-1}}\frac{\theta^{\star}(\tau)}{(1-\tau)^{3/2}}\diff \tau - \theta^\star({\tau_t}) \nonumber\\
&= -\frac{(1-\tau_t)^{3/2}}{\tau_t - \tau_{t-1}}\int_{\tau_t}^{\tau_{t-1}}\int_{\tau_t}^{\tau'}\frac{\diff}{\diff \tau} \frac{\theta^\star(\tau)}{(1-\tau)^{3/2}} \diff \tau \diff \tau', \label{eq:Psi-expression-integral}
\end{align}
where (i) arises from the definition of $\Psi_t$ in \eqref{eq:psi-defn}; (ii) uses the construction of $\Phi_t$ in \eqref{eq:phi-construction} and $2(1-\tau_t)/(\tau_t-\tau_{t-1}) = 1/(1-\alpha_t)$; (iii) follows from the definitions of $\varphi^\star_t$ in \eqref{eq:x-star-tau-integral} and $\theta^\star(\tau_t)\defn s_{\tau_t}^\star(x_{\tau_t}^{\star})$.

Next, we pause to present Lemma~\ref{lem:bound_cond} below, which show that when $x_{\tau_t}^\star \in \cE_t$, any $\tau\in[\tau_{t-1},\tau_t]$ obeys $-\log p_{\wt{X}_\tau}(x_{\tau}^{\star}) \lesssim d\log T$.
\begin{lemma} \label{lem:bound_cond}
For any integer $k>0$ and any $\tau\in(0,1)$ satisfying $-\log p_{\wt{X}_\tau}(x^\star_{\tau}) \le \theta d\log T$ for some $\theta > 1$, there exists some constant $C_k$ that only depends on $k$ such that
\begin{align} \label{eq:bound_cond}
\mathbb{E}\Big[\big\|\wt{X}_\tau - \sqrt{1-\tau}\wt{X}_0 \big\|_2^k\mid \wt{X}_\tau = x_{\tau}^{\star}\Big] \leq C_k (\theta d\tau\log T)^{k/2}.
\end{align}
In particular, this implies that
\begin{align}\label{eq:bound_cond_score_fn}
	\big\| \tilde{s}^\star(x_\tau^\star,\tau)\big\|_2 \leq C_k \sqrt{\frac{\theta d\log T}{\tau}}.
\end{align}
Moreover, for any $\tau'\in(0,1)$ satisfying $|\tau' - \tau| \leq c_0\tau(1-\tau)$ for some sufficiently small constant $c_0 >0$, we have 
\begin{align}
-\log p_{\wt{X}_{\tau'}}(x_{\tau'}^{\star}) \leq 2 \theta d\log T. \label{lem:x-tau-pdf-lb}
\end{align}
\end{lemma}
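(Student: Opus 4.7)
The plan is to prove the three assertions of Lemma~\ref{lem:bound_cond} in sequence: the moment bound \eqref{eq:bound_cond} is the key workhorse, the score bound \eqref{eq:bound_cond_score_fn} follows immediately from Jensen's inequality, and the density persistence \eqref{lem:x-tau-pdf-lb} is established via a bootstrap along the probability flow ODE. For \eqref{eq:bound_cond}, I would exploit the explicit posterior density of $\wt{X}_0$ given $\wt{X}_\tau = x_\tau^\star$, which by Bayes' rule is proportional to $(2\pi\tau)^{-d/2}\exp(-\|x_\tau^\star - \sqrt{1-\tau}x_0\|_2^2/(2\tau)) p_{X_0}(x_0)$ with normalizing constant $p_{\wt{X}_\tau}(x_\tau^\star) \geq T^{-\theta d}$ by hypothesis. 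Setting $Y \defn \wt{X}_\tau - \sqrt{1-\tau}\wt{X}_0$, I would bound its tail by decomposing the Gaussian factor as
\begin{align*}
(2\pi\tau)^{-d/2}\exp(-\|y\|_2^2/(2\tau))\ind\{\|y\|_2 > r\} \leq 2^{d/2}\exp(-r^2/(4\tau)) f_{2\tau}(y),
\end{align*}
where $f_{2\tau}$ denotes the $\mathcal{N}(0, 2\tau I_d)$ density. Integrating against $p_{X_0}$ and using $\int f_{2\tau}(x_\tau^\star - \sqrt{1-\tau}x_0) p_{X_0}(x_0) \diff x_0 \leq \sup f_{2\tau} = (4\pi\tau)^{-d/2}$ yields
\begin{align*}
\bP(\|Y\|_2 > r \mid \wt{X}_\tau = x_\tau^\star) \lesssim T^{\theta d}(2\pi\tau)^{-d/2}\exp(-r^2/(4\tau)).
\end{align*}
Writing $\bE[\|Y\|_2^k \mid \cdot] = \int_0^\infty k r^{k-1} \bP(\|Y\|_2 > r \mid \cdot) \diff r$ and splitting at $r_0 \asymp \sqrt{\theta d\tau\log T}$---trivial bound below $r_0$, Gaussian tail above---delivers $C_k(\theta d\tau\log T)^{k/2}$. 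The score bound \eqref{eq:bound_cond_score_fn} is then an immediate consequence of Tweedie's formula \eqref{def-score-cont}, which gives $\tilde{s}^\star(x_\tau^\star, \tau) = -\tau^{-1} \bE[Y \mid \wt{X}_\tau = x_\tau^\star]$; Jensen's inequality combined with \eqref{eq:bound_cond} at $k=1$ yields $\|\tilde{s}^\star(x_\tau^\star, \tau)\|_2 \lesssim \sqrt{\theta d\log T/\tau}$.

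For \eqref{lem:x-tau-pdf-lb}, the key is to track $g(s) \defn -\log p_{\wt{X}_s}(x_s^\star)$ along the probability flow ODE \eqref{eq:ODE-star}. Since this ODE preserves marginals, the change-of-variables identity gives $g(s) - g(\tau) = \int_\tau^s \nabla \cdot v(x_u^\star, u) \diff u$, where $v(x, u) = -(x + \tilde{s}^\star(x, u))/(2(1-u))$ is the flow vector field. Using \eqref{eq:J-expression} together with the chain-rule identity $\nabla\tilde{s}^\star = (1-u)\wt{J}$, one obtains $\nabla\cdot v(x, u) = -d/(2(1-u)) - \tr(\wt{J}(x, u))/2$, and $|\tr(\wt{J})|$ is in turn controlled via the trace of the conditional covariance of $\wt{X}_0$, which by \eqref{eq:bound_cond} with $k=2$ satisfies $\lesssim \theta d\log T/(u(1-u))$ whenever $g(u) \leq 2\theta d\log T$. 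Consequently, on any such subinterval one has $|\nabla\cdot v(x_u^\star, u)| \lesssim \theta d\log T/(u(1-u))$, and integrating over $|s - \tau| \leq c_0\tau(1-\tau)$ (where $u \asymp \tau$ and $1-u \asymp 1-\tau$) bounds $|g(s) - g(\tau)|$ by $O(c_0\theta d\log T)$, which is at most $\theta d\log T/2$ for sufficiently small $c_0$.

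The main obstacle is the bootstrap closure in the third part: the estimate on $\nabla\cdot v$ presupposes \eqref{eq:bound_cond} along the entire trajectory, yet \eqref{eq:bound_cond}'s hypothesis $-\log p_{\wt{X}_u}(x_u^\star) \leq O(d\log T)$ is only given at the endpoint $\tau$. I would resolve this via a standard continuity argument: define $s^\star \defn \sup\{s \in [\tau, \tau'] : g(s) \leq 2\theta d\log T\}$. On $[\tau, s^\star]$ the above increment estimate forces $g(s^\star) \leq g(\tau) + \theta d\log T/2 \leq (3/2)\theta d\log T$, which is strictly less than $2\theta d\log T$; continuity of $g$ along the smooth ODE flow then contradicts maximality of $s^\star$ unless $s^\star = \tau'$, yielding \eqref{lem:x-tau-pdf-lb}.
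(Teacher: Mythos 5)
Your bootstrap for \eqref{lem:x-tau-pdf-lb} and your Jensen step for \eqref{eq:bound_cond_score_fn} match the paper's arguments in all essentials: the paper frames the bootstrap via the supremum ratio $\theta' \defn \sup(-\log p)/(d\log T)$ and a direct contradiction rather than a first-exit time, but these are interchangeable, and the underlying identity $\tfrac{\diff}{\diff\tau'}\log p_{\wt X_{\tau'}/\sqrt{1-\tau'}}\big(x_{\tau'}^\star/\sqrt{1-\tau'}\big)=\tfrac12\tr\wt J$, together with the trace bound from \eqref{eq:bound_cond} applied at $k=2$ with the enlarged $\theta$, are exactly what the paper uses. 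For \eqref{eq:bound_cond} itself the paper simply cites \citet[Lemma 1]{li2024d} without argument, so you are reconstructing an omitted proof; the Bayes/Gaussian-domination/layer-cake route is the natural reconstruction.

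There is, however, a real gap in that reconstruction. Your tail bound reads $\bP(\|Y\|_2>r\mid\cdot)\lesssim T^{\theta d}(2\pi\tau)^{-d/2}e^{-r^2/(4\tau)}$, and the $(2\pi\tau)^{-d/2}$ prefactor, which comes from $\int f_{2\tau}(x_\tau^\star-\sqrt{1-\tau}x_0)p_{X_0}(x_0)\diff x_0\leq \sup f_{2\tau}$, is not killed by the cutoff $r_0\asymp\sqrt{\theta d\tau\log T}$. Driving the tail below a constant forces $r_0^2/(4\tau)\gtrsim\theta d\log T + (d/2)\log\big(1/(2\pi\tau)\big)$ whenever $\tau<1/(2\pi)$, and the second term is dominated by the first only when $\log(1/\tau)\lesssim\theta\log T$, i.e.\ $\tau\gtrsim T^{-O(\theta)}$. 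Below that threshold your argument only yields $\bE[\|Y\|_2^k\mid\cdot]\lesssim\big(\theta d\tau\log T + d\tau\log(1/\tau)\big)^{k/2}$, and this is not mere slack: take $p_{X_0}$ uniform on a sphere centered at $x_\tau^\star/\sqrt{1-\tau}$, with radius tuned so that $-\log p_{\wt X_\tau}(x_\tau^\star)=\theta d\log T$; then $\|Y\|_2^2$ is deterministic and equals $2\theta d\tau\log T + d\tau\log\big(1/(2\pi\tau)\big)$, so the extra term is genuinely present. In the paper's application every $\tau$ that arises satisfies $\log(1/\tau)\lesssim\log T$, so the discrepancy is harmless in context, but your write-up should either state that restriction explicitly or sharpen the $\sup f_{2\tau}$ step.
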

\begin{proof}
	See Appendix \ref{sec:pf-lem:bound_cond}.
\end{proof}

As $\tau_{t}-\tau_{t-1} = o(1)\tau_t(1-\tau_t)$ shown in \eqref{eq:tau-t-t-1}, one can then combine \eqref{lem:x-tau-pdf-lb} with \eqref{eq:u1-l2norm-high-prob-ub} from Lemma~\ref{lemma:u1-u2-l2norm} to bound
\begin{align*}
\big\|\Psi_t(x_{\tau_t}^\star)\big\|_2 
& \lesssim \frac{(1-\tau_t)^{3/2}}{\tau_t - \tau_{t-1}} \int_{\tau_t}^{\tau_{t-1}}\int_{\tau_t}^{\tau_{t-1}}\frac{1}{(1-\tau_t)}\bigg(\frac{d\log T}{\tau(1-\tau)}\bigg)^{3/2} \diff \tau \diff \tau'  \\
& \lesssim \frac{(1-\tau_t)^{3/2}}{\tau_t - \tau_{t-1}}(\tau_t - \tau_{t-1})^2 \frac{1}{(1-\tau_t)}\bigg(\frac{d\log T}{\tau_t(1-\tau_t)}\bigg)^{3/2} \\
& = \frac{\tau_t-\tau_{t-1}}{1-\tau_t}\bigg(\frac{d\log T}{\tau_t}\bigg)^{3/2} \\
& \asymp (1-\alpha_t)\bigg(\frac{d\log T}{1-\overline{\alpha}_t}\bigg)^{3/2} \leq C_4 (1-\alpha_t)\bigg(\frac{d\log T}{1-\overline{\alpha}_t}\bigg)^{3/2},
\end{align*}
provided $C_4$ is large enough. This proves \eqref{eq:E-subset-Psi}. In particular, one knows from \citet[A.18b]{li2024d} that
\begin{align*}
\mathbb{P}\bigg\{\big\|\Psi_t(X_t)\big\|_2 > C_4 (1-\alpha_t)\bigg(\frac{d\log T}{1-\overline{\alpha}_t}\bigg)^{3/2}\bigg\}
\leq \mathbb{P}\big(\cA_t^\setc \big)
\lesssim \frac{1}{T^{20}}.
\end{align*}

\paragraph{Proof of Claim \eqref{eq:psi-E-c-ub}.} Equipped with \eqref{eq:Psi-expression-integral}, we can bound
\begin{align*}
\mathbb{E}\Big[\big\|\Psi_t(X_t)\big\|_2^4\Big] 
& \numpf{i}{\leq} \frac{(1-\tau_t)^6}{(\tau_t - \tau_{t-1})^4} (\tau_{t-1}-\tau_{t})^7 \int^{\tau_{t-1}}_{\tau_{t}} \bE\Bigg[\bigg\| \frac{\diff}{\diff \tau} \frac{s^{\star}(\tau)}{(1-\tau)^{3/2}} \bigg\|_2^4\Bigg] \diff \tau  \\
& \numpf{ii}{\lesssim} \frac{(1-\tau_t)^6}{(\tau_t - \tau_{t-1})^4} (\tau_{t-1}-\tau_{t})^8 \frac{1}{(1-\tau_t)^4} \bigg(\frac{d}{\tau_t(1-\tau_t)}\bigg)^6 \\
& = \bigg(\frac{\tau_t - \tau_{t-1}}{1-\tau_t}\bigg)^4\bigg(\frac{d}{\tau_t}\bigg)^6 \asymp \bigg(\frac{1-\alpha_t}{\alpha_t}\bigg)^4 \bigg(\frac{d}{1-\ol{\alpha}_t}\bigg)^6,
\end{align*}
where we use Jensen's inequality in (i); (ii) is due to \eqref{eq:u2-l2norm-ub-lemma} from Lemma~\ref{lemma:u1-u2-l2norm} and \eqref{eq:tau-1-tau-inv-diff}; the last step follows from $\tau_t= 1-\ol{\alpha}_t$ and $\tau_{t-1}= 1-\ol{\alpha}_t(3-2\alpha_t)$.
It follows from the Cauchy-Schwartz inequality 
that
\begin{align*}
	\bE_{\cA_t^\setc}\Big[ \big\|\Psi_t(X_t)\big\|_2^2 \Big] \leq \sqrt{\mathbb{E}\Big[\big\|\Psi_t(X_t)\big\|_2^4\Big] \bP\big(\cA_t^{\mathrm c}\big)} \lesssim \frac{1}{T^{10}}\bigg(\frac{1-\alpha_t}{\alpha_t}\bigg)^2 \bigg(\frac{d}{1-\ol{\alpha}_t}\bigg)^3,
\end{align*}
as claimed in \eqref{eq:psi-E-c-ub}.

This completes the proof of Lemma~\ref{lem:outlier}.

\paragraph{Proof of Claim \eqref{eq:J-l2norm-ub}.}
Recall the definition of $\varphi^\md_t(x_{\tau_t}^\star)$ in \eqref{eq:x-tau-minus-defn}, which can be viewed as a function of $x_{\tau_t^\md}^\star$ in light of the ODE \eqref{eq:ODE-star}. We shall prove that when $x_{\tau_t}^\star\in \mathcal{E}_{t}$, the following density ratio bound holds for all $\gamma \in [0,1]$:
\begin{align*}
\frac{p_{\gamma x_{\tau_t^\md}^{\star} + (1-\gamma)\varphi^\md_t(x_{\tau_t}^\star)}}{p_{x_{\tau_t^\md}^{\star}}}
= \mathsf{det}^{-1}\Bigg(\frac{\partial }{\partial x_{\tau_t^\md}^{\star}} \Big(\gamma x_{\tau_t^\md}^{\star} + (1-\gamma)\varphi^\md_t(x_{\tau_t}^\star)\Big)\Bigg)
\lesssim 1.
\end{align*}
We can then derive
\begin{align}
\sup_{\gamma \in [0,1]} \mathbb{E}\bigg[\Big\|\wt J\big(\gamma x_{\tau_t^\md}^{\star} + (1-\gamma)\varphi^\md_t(x_{\tau_t}^\star),\tau_t^\md \big)\Big\|^4\ind\{x_{\tau_t}^\star\in \mathcal{E}_{t}\big\}\bigg] 
& \lesssim \mathbb{E}\Big[\big\|\wt J\big(x_{\tau_t^\md}^{\star},\tau_t^\md\big)\big\|^4\Big]. \label{eq:J-diff-temp}
\end{align}
In light of the expression of $\wt J$ in \eqref{eq:J-expression}, it is not hard to derive that for any $\tau\in(0,1)$,
\begin{align*}
 \bE\bigg[\Big\| \bE\Big[\big(\wt{X}_\tau - \sqrt{1-\tau} \wt{X}_0\big)\big(\wt{X}_\tau - \sqrt{1-\tau} \wt{X}_0\big)^{\top}\mid \wt{X}_\tau = x^\star_\tau\Big] \Big\|^4 \bigg] 
 &	\numpf{i}{\leq} \bE\Big[\big\|\wt{X}_\tau - \sqrt{1-\tau} \wt{X}_0 \big\|_2^8 \Big] \numpf{ii}{\lesssim} (\tau d)^4,
\end{align*}
where (i) uses Jensen's inequality and the tower property, (ii) invokes \eqref{eq:x-star-x0-dist-l2norm} from Lemma~\ref{lemma:x-star-x0-dist-l2norm}.
Combined with $\tau_t^\md\asymp \tau_{t}$ due to \eqref{eq:1-olalpha-O1} from Lemma~\ref{lemma:step-size}, this yields
\begin{align*}
	\bE\Big[\big\|\wt J\big(x_{\tau_t^\md}^{\star},\tau_t^\md\big) \big\|^4\Big] \lesssim \frac{1}{\tau_t^4(1-\tau_t)^4} + \frac{(\tau_t d)^4}{\tau_t^8(1-\tau_t)^4} \asymp \bigg(\frac{d}{\tau_t(1-\tau_t)}\bigg)^4.
\end{align*}
Plugging this into \eqref{eq:J-diff-temp} gives the advertised claim \eqref{eq:J-l2norm-ub}.

Therefore, it remains to prove the bound on the density ratio. Fix an arbitrary $\gamma\in[0,1]$. 
We can compute
\begin{align}
\frac{\partial}{\partial x_{\tau_t^\md}^{\star}}\Big(\gamma x_{\tau_t^\md}^{\star} + (1-\gamma)\varphi^\md_t(x_{\tau_t}^\star)\Big)
&= \gamma I_d + (1-\gamma)\underbrace{\frac{\partial \varphi^\md_t(x_{\tau_t}^\star)/\sqrt{1-\tau_t^\md}}{\partial x_{\tau_{t}}^{\star}/\sqrt{1-\tau_{t}}}}_{(\mathrm{I})}
\underbrace{\frac{\partial x_{\tau_{t}}^{\star}/\sqrt{1-\tau_{t}}}
{\partial x_{\tau_t^\md}^{\star}/\sqrt{1-\tau_t^\md}}}_{(\mathrm{II})}. \label{eq:pdf-Jacob}
\end{align}
Note that for any positive semidefinite matrix $A\in\bR^{d\times d}$ and any $\gamma\in[0,1]$, one can use Jensen's inequality, $\log\big(\gamma+(1-\gamma)x\big)\geq (1-t)\log x$ for any $x>0$, to find
\begin{align*}
 	\deter\big(\gamma I_d + (1-\gamma)A\big) \geq \deter(A)^{1-\gamma}.
 \end{align*} 
Hence, it suffices to control the determinants of (I) and (II) separately.
\begin{itemize}
	\item Regarding (I), we know from \eqref{eq:x-tau-minus-defn} that
\begin{align*}
\mathrm{(I)}=\frac{\partial \varphi^\md_t(x_{\tau_t}^\star)/\sqrt{1-\tau_t^\md}}
{\partial x_{\tau_{t}}^{\star}/\sqrt{1-\tau_{t}}}
& = I_d - \frac{\tau_t^\md-\tau_t}{2} \frac{\partial \tilde{s}^\star(x_{\tau_t}^\star,\tau_t)/(1-\tau_t)^{3/2}}{\partial x_{\tau_{t}}^{\star}/\sqrt{1-\tau_{t}}}  = I_d + \frac{\tau_t-\tau_t^\md}{2} \wt J(x_{\tau_{t}}^{\star},{\tau_t}),
\end{align*}
where the last step uses the definition of $\wt J$ in \eqref{eq:J-defn}. By the expression of $\wt J$, it is straightforward to see that
$\wt J(x_{\tau}^{\star},\tau) + \frac{1}{\tau(1-\tau)}I_d \succeq 0$ for any $\tau\in(0,1)$. In addition, note that \eqref{eq:alpha-t-lb}--\eqref{eq:1-alpha-1-olalpha} from Lemma~\ref{lemma:step-size} implies
\begin{align}
	\frac{\tau_t-\tau_t^\md}{\tau_t(1-\tau_t)} = \frac{1-\alpha_t}{\alpha_t(1-\ol{\alpha}_t)}
	\lesssim \frac{\log T}{T}. \label{eq:tau-tau-mid-diff-ub}
\end{align}
Taken collectively, these demonstrate that
\begin{align}
	\deter\mathrm{(I)} = \deter\bigg(I_d + \frac{\tau_t-\tau_t^\md}{2}\wt J(x_{\tau_{t}}^\star,\tau_t)\bigg) 
	\geq \bigg(1-O\Big(\frac{\log T}{T}\Big)\bigg)^d = 1-O\bigg(\frac{d\log T}{T}\bigg). \label{eq:tau-minus-tau-ub}
\end{align}

\item As for (II), we can derive
\begin{align*}
\frac{\partial}{\partial \tau}
\frac{\partial x_{\tau}^{\star}/\sqrt{1-\tau}}{\partial x_{\tau_t^\md}^{\star}/\sqrt{1-\tau_t^\md}} 
	& = \frac{\partial}{\partial x_{\tau_t^\md}^{\star}/\sqrt{1-\tau_t^\md}}\frac{\partial x_{\tau}^{\star}/\sqrt{1-\tau}}{\partial \tau}  \\
& \numpf{i}{=} \frac{\partial}{\partial x_{\tau_t^\md}^{\star}/\sqrt{1-\tau_t^\md}} \bigg(-\frac12 \frac{\tilde{s}^\star(x_{\tau}^\star,\tau)}{(1-\tau)^{3/2}}\bigg) \\
& =   \frac{\partial}{\partial x_{\tau}^{\star}/\sqrt{1-\tau}}\bigg(-\frac12 \frac{\tilde{s}^\star(x_{\tau}^\star,\tau)}{(1-\tau)^{3/2}}\bigg) \frac{\partial x_{\tau}^{\star}/\sqrt{1-\tau}}{\partial x_{\tau_t^\md}^{\star}/\sqrt{1-\tau_t^\md}} \\
& \numpf{ii}{=} -\frac12 \wt J(x_\tau^\star,\tau) \frac{\partial x_{\tau}^{\star}/\sqrt{1-\tau}}{\partial x_{\tau_t^\md}^{\star}/\sqrt{1-\tau_t^\md}},
\end{align*}
where (i) arises from the ODE \eqref{eq:ODE-star} and (ii) follows from \eqref{eq:J-defn}. As $\frac{\mathrm d}{\mathrm d t}\deter(A) = \tr(A^{-1}\frac{\mathrm d}{\mathrm d t}A) \deter(A)$, we obtain
\begin{align*}
	\frac{\partial}{\partial \tau}\mathsf{det}\Bigg(\frac{\partial x_{\tau}^{\star}/\sqrt{1-\tau}}{\partial x_{\tau_t^\md}^{\star}/\sqrt{1-\tau_t^\md}}\Bigg) = -\frac12 \mathsf{tr}\big(\wt J(x_\tau^\star,\tau)\big)\mathsf{det}\Bigg(\frac{\partial x_{\tau}^{\star}/\sqrt{1-\tau}}{\partial x_{\tau_t^\md}^{\star}/\sqrt{1-\tau_t^\md}}\Bigg).
\end{align*}
Solving this equation gives
\begin{align}
	\mathsf{det}\Bigg(\frac{\partial x_{\tau'}^{\star}/\sqrt{1-\tau'}}{\partial x_{\tau_t^\md}^{\star}/\sqrt{1-\tau_t^\md}}\Bigg) = \exp\biggl(\int_{\tau_t^\md}^{\tau'} -\frac12 \mathsf{tr}\big(\wt J(x_\tau^\star,\tau)\big) \diff \tau \bigg),\quad \forall \tau'. \label{eq:partial-derivative-x}
\end{align}


Meanwhile, for any $\tau\in[\tau_t^\md,\tau_t]$, one has
\begin{align}
\tr\big(\wt J(x_{\tau}^{\star},\tau)\big)
& \leq  - \frac{d}{\tau(1-\tau)} + \frac{1}{\tau^2(1-\tau)}\tr\bigg(\bE\Big[\big(\wt{X}_\tau - \sqrt{1-\tau} \wt{X}_0\big)\big(\wt{X}_\tau - \sqrt{1-\tau} \wt{X}_0\big)^{\top}\mid \wt{X}_\tau = x^\star_\tau\Big] \bigg) \nonumber \\
	& = - \frac{d}{\tau(1-\tau)} +\frac{1}{\tau^2(1-\tau)} \bE\Big[\big\|\wt{X}_\tau - \sqrt{1-\tau} \wt{X}_0\big\|_2^2 \mid \wt{X}_\tau = x^\star_\tau\Big]. \label{eq:trace-J-ub-derivation}
\end{align}
where the inequality holds as $\cov(X)\preceq \bE[X^2] $ for any random vector $X$.
Recall that Lemma~\ref{lem:bound_cond} demonstrates that $-\log p_{\wt{X}_{\tau}}(x_{\tau}^{\star}) \lesssim d\log T$ for all $\tau\in[\tau_{t-1},\tau_t]$ when $x_{\tau_t}^{\star}\in\cE_t$.
Applying \eqref{eq:bound_cond} from Lemma~\ref{lem:bound_cond} shows that for any $\tau\in[\tau_t^\md,\tau_t]$,
\begin{align}
\tr\big(\wt J(x_{\tau}^{\star},\tau)\big)
&\leq - \frac{d}{\tau(1-\tau)} + O\bigg(\frac{d \log T}{\tau(1-\tau)}\bigg) \lesssim \frac{d \log T}{\tau(1-\tau)}. \label{eq:J-trace-ub}
\end{align}
 As a result, we know that
\begin{align}
	\deter(\mathrm{II}) & = \mathsf{det}\Bigg(\frac{\partial x_{\tau'}^{\star}/\sqrt{1-\tau'}}{\partial x_{\tau_t^\md}^{\star}/\sqrt{1-\tau_t^\md}}\Bigg)
	= \exp\biggl(  -\frac12\int_{\tau_t^\md}^{\tau_t} \tr\big(\wt J(x_\tau^\star,\tau)\big) \diff \tau \bigg) \geq \exp\bigg\{-\int_{\tau_t^\md}^{\tau_t} O\bigg(\frac{d\log T}{\tau(1-\tau)}\bigg) \diff \tau \bigg\} \nonumber \\
 & = \exp\bigg\{-O\bigg(\frac{(\tau_t-\tau_t^\md) d\log T}{\tau_t(1-\tau_t)}\bigg) \bigg\}
 = \exp\bigg\{-O\bigg(\frac{d\log^2 T}{T}\bigg) \bigg\} \nonumber  \\
	& = 1+O\bigg(\frac{d\log^2T}{T}\bigg), \label{eq:det-exp-int-J}
\end{align}
where the first inequality arises from \eqref{eq:J-trace-ub}; the second line uses \eqref{eq:tau-1-tau-inv-diff} and \eqref{eq:tau-tau-mid-diff-ub}; the last line holds as long as $d\log^2 T=o(T)$.

\item Finally, substituting \eqref{eq:tau-minus-tau-ub} and \eqref{eq:det-exp-int-J} into \eqref{eq:pdf-Jacob} allows us to bound that
\begin{align*}
	\deter \Bigg(\frac{\partial }{\partial x_{\tau_t^\md}^{\star}} \Big(\gamma x_{\tau_t^\md}^{\star} + (1-\gamma)\varphi^\md_t(x_{\tau_t}^\star)\Big)\Bigg) & \geq \big( \deter(\mathrm I) \cdot \deter(\mathrm{II}) \big)^{1-\gamma} \\
	& \geq \Big(1+O\big(T^{-1}d\log T \big)\Big)^{1-\gamma} \Big(1+O\big(T^{-1}d\log^2T\big)\Big)^{1-\gamma} \gtrsim 1,
\end{align*}
provided $d\log^2 T = o(T)$.  This finishes the proof.
\end{itemize}

\subsection{Proof of Lemma~\ref{lem:TV-crude}}
\label{sec:proof-lem:TV-crude}

For convenience of notation, we denote $Z \defn Z^\md_t$ henceforth. 
Recall the definitions of $\cB_t$ and $\{\cF_{t,i}\}_{i=1}^3$ in \eqref{eq:event-pdf-ratio} and \eqref{def:set-F}, respectively. 
To bound $\bP\{\cB_t^\setc\}$, let us define an additional set:
\begin{align*}
\cF_{t,4} &\defn \Big\{x\in\bR^d\colon p_{Y_t^\md(X_t)}(x)\leq 2\,p_{Y^{\star,\md}_t + (1-\alpha_t)Z }(x)\Big\}.
\end{align*}
It is straightforward to verify that 
\begin{align*}
	\big\{X_t + (1-\alpha_t)Z\in\cF_{t,1}\big\}   \cap \big\{Y^{\star,\md}_t+(1-\alpha_t)Z\in \cF_{t,3}\big\}\cap \big\{Y_t^\md(X_t)\in\cF_{t,4}\big\} \subset \cB_t.
\end{align*}
By the union bound, it suffices to bound the three probabilities individually.

\begin{itemize}
	\item Let us start with the first quantity involving $\cF_{t,1}$.
Note that for any two random vectors $X,Y\in\bR^d$, we can derive
\begin{align}
	\bP\big\{p_{Y}(Y) > 2\,p_{X}(Y) \big\}
	& = \int \ind\big\{p_{Y}(y) > 2\,p_{X}(y)\big\}p_{Y}(y)\diff y \notag\\
	& < \int \ind\big\{p_{Y}(y) > 2\,p_{X}(y)\big\}2\big(p_{Y}(y)-p_X(y)\big)\diff y \notag\\
	& \leq 2 \int \big|p_{Y}(y)-p_X(y)|\diff y = 4\, \TV(X,Y). \label{eq:density-bound-tv}
\end{align}
This allows us to focus on controlling the total variation distance
\begin{align*}
\bP\big\{X_t + (1-\alpha_t)Z\notin\cF_{t,1}\big\}\leq 4\, \TV(X_t + (1-\alpha_t)Z,\,X_t).
\end{align*}

Towards this, note that conditional on $X_0=x_0$, $X_{t} \sim \mathcal{N}\big(\sqrt{\overline{\alpha}_t}x_0, (1-\overline{\alpha}_t)I_d\big)$ and $X_{t} + (1-\alpha_t)Z \sim \mathcal{N}\big(\sqrt{\overline{\alpha}_t}x_0, \big(1-\overline{\alpha}_t + (1-\alpha_t)^2\big)I_d\big)$.
Using the formula of the KL divergence for normal distributions, one knows that for any $x_0\in\bR^d$,
\begin{align*}
\KL\big(&X_t\mid X_0=x_0 \,\|\, X_t+(1-\alpha_t)Z\mid X_0=x_0\big) \\
& = \frac d2\log\frac{1-\ol\alpha_t+(1-\alpha_t)^2}{1-\ol\alpha_t}	-\frac d2 + \frac d2 \frac{1-\ol\alpha_t}{1-\ol\alpha_t+(1-\alpha_t)^2} \\
& \leq \frac d2\bigg(\frac{(1-\alpha_t)^2}{1-\ol\alpha_t}-\frac14 \frac{(1-\alpha_t)^4}{(1-\ol\alpha_t)^2}-1+1 - \frac{(1-\alpha_t)^2}{1-\ol\alpha_t}+ \frac{(1-\alpha_t)^4}{(1-\ol\alpha_t)^2} \bigg) \\
&\lesssim \frac{(1-\alpha_t)^4d}{(1-\ol\alpha_t)^2},
\end{align*}
where we use $(1-\alpha_t)^2/(1-\ol\alpha_t)=o(1)$ due to \eqref{eq:1-alpha-1-olalpha} and $d\log^2 T = o(T)$, and $\log(1+x)\leq x - x^2/4$ and $1/(1+x)\leq 1-x+x^2$ for any $x\in[0,1]$.
It follows from the data processing inequality that
\begin{align*}
	\KL\big(X_t\,\|\, X_t+(1-\alpha_t)Z\big)\leq \bE_{x_0\sim p_{X_0}}\Big[\KL\big(X_t\mid X_0=x_0 \,\|\, X_t+(1-\alpha_t)Z\mid X_0=x_0\big)\Big] \lesssim \frac{(1-\alpha_t)^4d}{(1-\ol\alpha_t)^2}.
\end{align*}
Consequently, invoking Pinsker's inequality yields
\begin{align}
	\mathsf{TV}\big(X_t + (1-\alpha_t)Z, X_{t}\big) 
	\leq \sqrt{\frac12\KL\big(X_t + (1-\alpha_t)Z\,\|\, X_{t}\big)} \lesssim \frac{(1-\alpha_t)^2\sqrt{d}}{1-\ol\alpha_t}. \label{eq:prob-It-term1}
\end{align}

	\item For the second term related to $\cF_{t,3}$, recall the definition of $Y_{t}^{\star,\md}$ in \eqref{eq:Y-star-defn-1/2}. we can use the argument in \eqref{eq:density-bound-tv} to focus on the total variation:
	\begin{align*}
		\bP\big\{Y^{\star,\md}_t + (1-\alpha_t)Z\notin \cF_{t,3}\big\} \leq \mathsf{TV}\big(Y_{t}^{\star,\md} + (1-\alpha_t)Z, X_{t-1}\big).
	\end{align*}
	To this end, we can first use the triangle inequality to decompose
\begin{align*}
&\mathsf{TV}\big(Y_{t}^{\star,\md} + (1-\alpha_t)Z, X_{t-1}\big) \\ 
& \qquad = \mathsf{TV}\bigg(\frac{1}{\sqrt{\alpha_{t}}}\Big(X_{t} + \frac{1-\alpha_t}{2\alpha_t}s_t^{\star}(X_t)\Big) + (1-\alpha_t)Z, X_{t-1}\bigg) \nonumber \\
& \qquad \leq \mathsf{TV}\bigg(\frac{1}{\sqrt{\alpha_{t}}}\Big(X_{t} + \frac{1-\alpha_t}{2\alpha_t}s_t^{\star}(X_t)\Big)+ (1-\alpha_t)Z, X_{t-1}+ (1-\alpha_t)Z\bigg) + \mathsf{TV}\big(X_{t-1} + (1-\alpha_t)Z, X_{t-1}\big) \\
& \qquad \leq \TV\bigg(\frac{1}{\sqrt{\alpha_{t}}}\Big(X_{t} + \frac{1-\alpha_t}{2\alpha_t}s_t^{\star}(X_t)\Big), X_{t-1}\bigg) + \mathsf{TV}\big(X_{t-1} + (1-\alpha_t)Z, X_{t-1}\big).
\end{align*}
where the last step follows from the data processing inequality.
It is not hard to see that the bound in \eqref{eq:prob-It-term1} also holds for $\mathsf{TV}\big(X_{t-1} + (1-\alpha_t)Z, X_{t-1}\big)$.
Meanwhile, we can apply a similar argument for \citet[Lemma 4]{li2024sharp} to show that
\begin{align*}
	\mathsf{TV}\bigg(\frac{1}{\sqrt{\alpha_{t}}}\Big(X_{t} + \frac{1-\alpha_t}{2\alpha_t}s_t^{\star}(X_t)\Big), X_{t-1}\bigg) \lesssim \bigg(\frac{(1-\alpha_t)d\log T}{1-\overline{\alpha}_t}\bigg)^2.
\end{align*}
Combining these two bounds demonstrates that
\begin{align}
	\mathsf{TV}\bigg(\frac{1}{\sqrt{\alpha_{t}}}\Big(X_{t} + \frac{1-\alpha_t}{2\alpha_t}s_t^{\star}(X_t)\Big) + (1-\alpha_t)Z, X_{t-1}\bigg) &\lesssim \bigg(\frac{(1-\alpha_t)d\log T}{1-\overline{\alpha}_t}\bigg)^2 + \frac{(1-\alpha_t)^2\sqrt{d}}{1-\ol\alpha_t}. \label{eq:prob-It-term2}
\end{align}

\item For the term involving $\cF_{t,4}$, recall the definition of $Y_{t}^{\md}$ in \eqref{eq:sampler-mid}, which gives
\begin{align*}
	Y_{t}^{\md}(X_t)\mid X_t=x &\sim \cN\bigg(\frac{1}{\sqrt{\alpha_{t}}}\Big(x + \frac{1-\alpha_t}{2\alpha_t}s_t(x)\Big), (1-\alpha_t)^2I_d\bigg), \\ 
	Y^{\star,\md}_{t}+(1-\alpha)Z \mid X_t=x & \sim \cN\bigg(\frac{1}{\sqrt{\alpha_{t}}}\Big(x + \frac{1-\alpha_t}{2\alpha_t}s_t^{\star}(x)\Big), (1-\alpha_t)^2I_d\bigg).
\end{align*}
We shall control their density ratio, which depends on the following difference:
\begin{align}
	& \bigg\|y - \frac{1}{\sqrt{\alpha_{t}}}\Big(x + \frac{1-\alpha_t}{2\alpha_t}s_t(x)\Big)\bigg\|_2^2 - \bigg\|y - \frac{1}{\sqrt{\alpha_{t}}}\Big(x + \frac{1-\alpha_t}{2\alpha_t}s_t^\star(x)\Big)\bigg\|_2^2 \notag\\
	& \qquad = \bigg\|y - \frac{1}{\sqrt{\alpha_{t}}}\Big(x + \frac{1-\alpha_t}{2\alpha_t}s_t^\star(x)\Big) + \frac{1-\alpha_t}{2\alpha_t^{3/2}}\big(s_t^\star(x)-s_t(x)\big) \bigg\|_2^2 - \bigg\|y - \frac{1}{\sqrt{\alpha_{t}}}\Big(x + \frac{1-\alpha_t}{2\alpha_t}s_t^\star(x)\Big)\bigg\|_2^2\notag\\ 
	& \qquad \leq \frac{1-\alpha_t}{\alpha_t^{3/2}}\bigg|\Big[y - \frac{1}{\sqrt{\alpha_{t}}}\Big(x + \frac{1-\alpha_t}{2\alpha_t}s_t^\star(x)\Big)\Big]^\top\big(s_t^\star(x)-s_t(x)\big)\bigg|  + \frac{(1-\alpha_t)^2}{4\alpha_t^{3}}\big\|s_t^\star(x)-s_t(x) \big\|_2^2. \label{eq:y-mid-star-diff}
\end{align}

To control these two terms, we can use Markov's inequality and Assumption~\ref{assu:score-error} to get
\begin{align*}
	\bP\bigg\{\big\|s_t^\star(X_t)-s_t(X_t) \big\|_2^2 > \frac{1}{16\log T}\bigg\}\leq 16\,\bE\Big[\big\|s_t^\star(X_t)-s_t(X_t) \big\|_2^2\Big] \log T = 16\,\veps_t^2 \log T.
\end{align*}
In addition, conditioned on $X_t=x$, $Y_t^\md(x)-\frac{1}{\sqrt{\alpha_{t}}}\big(x + \frac{1-\alpha_t}{2\alpha_t}s_t(x)\big)=(1-\alpha_t)Z\sim \cN\big(0,(1-\alpha_t)^2I_d\big)$ is independent of $s_t^\star(x)-s_t(x)$. It follows from standard Gaussian properties that
\begin{align*}
	& \bP\bigg\{\Big[Y_t^\md(x) - \frac{1}{\sqrt{\alpha_{t}}}\Big(x + \frac{1-\alpha_t}{2\alpha_t}s_t(x)\Big)\Big]^\top \big(s_t^\star(x)-s_t(x)\big)  \\ 
	& \qquad > 2(1-\alpha_t) \big\|s_t^\star(x)-s_t(x)\big\|_2 \sqrt{\log T} \,\Big|\, X_t = x \bigg\}\leq 2\,T^{-2}.
\end{align*}
Hence, if we define the event
\begin{align*}
	\cD_t \defn \bigg\{x,y\colon &\Big[y - \frac{1}{\sqrt{\alpha_{t}}}\Big(x + \frac{1-\alpha_t}{2\alpha_t}s_t(x)\Big)\Big]^\top \big(s_t^\star(x)-s_t(x)\big) \leq \frac{1-\alpha_t}2 \text{ and }\big\|s_t^\star(x)-s_t(x) \big\|_2 \leq \frac{1}{4\sqrt{\log T}}
	 \bigg\},
\end{align*}
we know from the union bound that
\begin{align}
	\bP\big\{(Y_t^\md(X_t),X_t)\in\cD_t^\setc\big\} \leq 2\,T^{-2}+16\,\veps_t^2\log T. \label{eq:prob-D-ub}
\end{align}
Moreover, on the set $\cD_t$, one can use \eqref{eq:y-mid-star-diff} to derive
\begin{align*}
	&\exp\biggl(-\frac{1}{2(1-\alpha_t)^2}\Big\|y - \frac{1}{\sqrt{\alpha_{t}}}\Big(x + \frac{1-\alpha_t}{2\alpha_t}s_t(x)\Big)\Big\|_2^2\biggr) \\ 
	& \qquad \leq \exp\biggl(-\frac{1}{2(1-\alpha_t)^2}\Big\|y - \frac{1}{\sqrt{\alpha_{t}}}\Big(x + \frac{1-\alpha_t}{2\alpha_t}s_t^\star(x)\Big)\Big\|_2^2+\frac1{4\alpha_t^{3/2}}+\frac1{8\alpha_t^3 \log T}\biggr) \\
	& \qquad \leq \frac{3}{2}\exp\biggl(-\frac{1}{2(1-\alpha_t)^2}\Big\|y - \frac{1}{\sqrt{\alpha_{t}}}\Big(x + \frac{1-\alpha_t}{2\alpha_t}s_t^\star(x)\Big)\Big\|_2^2\bigg)
\end{align*}
where the last step follows from \eqref{eq:alpha-t-lb} that $1-\alpha_t\lesssim \log T/T=o(1)$ for $T$ large enough. This implies that on the set $\cD_t$:
\begin{align}
p_{X_t, Y_{t}^{\md}(X_t)}(x, y) &= p_{X_t}(x)\big(2\pi(1-\alpha_t)^2\big)^{-d/2}\exp\biggl(-\frac{1}{2(1-\alpha_t)^2}\Big\|y - \frac{1}{\sqrt{\alpha_{t}}}\Big(x + \frac{1-\alpha_t}{2\alpha_t}s_t(x)\Big)\Big\|_2^2\biggr) \notag\\
&\le \frac{3}{2}p_{X_t}(x)\big(2\pi(1-\alpha_t)^2\big)^{-d/2}\exp\biggl(-\frac{1}{2(1-\alpha_t)^2}\Big\|y - \frac{1}{\sqrt{\alpha_{t}}}\Big(x + \frac{1-\alpha_t}{2\alpha_t}s_t^{\star}(x)\Big)\Big\|_2^2\biggr) \notag\\
&= \frac{3}{2}p_{X_t, Y^{\star,\md}_{t}+(1-\alpha_t)Z}(x, y). \label{eq:p-diff-D-ub}
\end{align}
Hence, integrating over $\cF_{t,4}^\setc = \big\{y\in\bR^d\colon p_{Y_t^\md(X_t)}(y)> 2\,p_{Y^{\star,\md}_t + (1-\alpha_t)Z }(y)\big\}$ gives
\begin{align}
\bP\big\{Y_t^\md(X_t)\notin\cF_{t,4}\big\} & =\int_{y \in \cF_{t,4}^\setc}p_{Y_{t}^{\md}(X_t)}(y) \diff y \notag\\
&\le 4\int_{y \in \cF_{t,4}^\setc}\Big(p_{Y_{t}^{\md}(X_t)}(y) - \frac{3}{2}p_{Y^{\star,\md}_{t}+(1-\alpha_t)Z}(y)\Big) \diff y \notag\\
&= 4\int_{y \in \cF_{t,4}^\setc} \int_x \Big(p_{X_t, Y_{t}^{\md}(X_t)}(x, y) - \frac{3}{2}p_{X_t, Y^{\star,\md}_{t}+(1-\alpha_t)Z}(x, y)\Big) \diff x\diff y \notag\\
&\numpf{i}{\le} 4\int\int_{(x, y) \notin \mathcal{D}_t} p_{X_t,Y_{t}^{\md}(X_t)}(y) \diff x\diff y \notag\\
&\numpf{ii}{\le} 8T^{-2}+64\veps_t^2\log T. \label{eq:prob-It-term3}
\end{align}
where (i) arises from \eqref{eq:p-diff-D-ub} and (ii) uses \eqref{eq:prob-D-ub}.


\item
Finally, putting \eqref{eq:prob-It-term1}, \eqref{eq:prob-It-term2}, and \eqref{eq:prob-It-term3} together leads to our desired result
\begin{align*}
	\mathbb{P}\big(\cB_t^\setc\big)  
	& \lesssim 
	\frac{(1-\alpha_t)^2\sqrt{d}}{1-\ol\alpha_t} 
	+ \bigg(\frac{(1-\alpha_t)d\log T}{1-\overline{\alpha}_t}\bigg)^2
	+ \frac1{T^2}+\veps_t^2\log T \\
	& \asymp \bigg(\frac{(1-\alpha_t)d\log T}{1-\overline{\alpha}_t}\bigg)^2 + \frac1{T^2}+\veps_t^2\log T
\end{align*}
where the last step holds as $\ol\alpha_t\in(0,1)$.

\end{itemize}

\subsection{Proof of Lemma \ref{lemma:2-order-approx-error}}\label{sub:proof_of_lemma_ref_lemma_2_order_approx_error}
Let us apply the triangle inequality again to further decompose
\begin{align}	
&\mathbb{E}_{\cA_t\cap\cB_t}\bigg[\Big\|\alpha_{t}^{3/2}s_{t-1}\big(Y_{t}^{\md}\big)  - \alpha_{t}^{3/2}s_{t-1}^\star\big(Y_{t}^{\star,\md}\big)- s_t\big(X_t + (1-\alpha_t)Z^\md_t \big) + s_t^\star(X_t) \Big\|_2^2 \bigg] \notag\\
	&\qquad \lesssim \underbrace{\mathbb{E}_{\cA_t}\bigg[\Big\| \alpha_{t}^{3/2}s_{t-1}^\star\big(Y_{t}^{\star,\md}+\big(1-\alpha_t)Z^\md_t \big) - \alpha_{t}^{3/2}s_{t-1}^\star\big(Y^{\star,\md}_t\big) - s_t^\star\big(X_t + (1-\alpha_t)Z^\md_t \big) + s_t^\star(X_t) \Big\|_2^2\bigg]}_{\mathrm{(I)}} \notag\\
	& \qquad\quad + \underbrace{\mathbb{E}_{\cA_t\cap\cB_t}\bigg[\Big\|\alpha_{t}^{3/2}s_{t-1}\big(Y_{t}^{\md}\big)  - \alpha_{t}^{3/2}s_{t-1}^\star\big(Y_{t}^{\md}\big) - s_t\big(X_t + (1-\alpha_t)Z^\md_t \big) + s_t^\star\big(X_t + (1-\alpha_t)Z^\md_t \big)\Big\|_2^2\bigg]}_{\mathrm{(II)}} \notag\\
	& \qquad\quad + \underbrace{\mathbb{E}_{\cA_t\cap\cB_t}\bigg[\Big\|\alpha_{t}^{3/2}s_{t-1}^\star\big(Y^\md_t\big) - \alpha_{t}^{3/2}s_{t-1}^\star\big(Y_{t}^{\star,\md}+\big(1-\alpha_t)Z^\md_t \big) \Big\|_2^2\bigg]}_{\mathrm{(III)}}.\label{eq:clip-Psi-dist-term3-temp}
\end{align}
This decomposition yields three additional terms that require control, and we will bound each of them individually.

To control term (I), we can exploit some sort of continuity of the true score function $s_t^\star$. This is accomplished in Lemma~\ref{lem:random} below.
\begin{lemma} \label{lem:random}
Term (I) defined in \eqref{eq:clip-Psi-dist-term3-temp} satisfies
\begin{align}
\mathrm{(I)} 
& \lesssim \bigg(\frac{1-\alpha_t}{1-\overline{\alpha}_t}\bigg)^4(d\log T)^5 \label{eq:lemma-random-claim}.
\end{align}
\end{lemma}
\begin{proof}
	See Appendix \ref{sec:proof-lem:random}.
\end{proof}

Next, observe that terms (II) and (III) originate from imperfect score estimates evaluated at several auxiliary random variables. As discussed earlier, although Assumption~\ref{assu:score-error} only provides  score estimation guarantees when the score estimator is applied to $X_t$, the densities of these auxiliary random variables are comparable to that of $X_t$ on the event $\cB_t$. Hence, one can still use a change of measure to leverage Assumption~\ref{assu:score-error} and control terms (II) and (III), as formalized in Lemma~\ref{lem:estimation} below.
\begin{lemma} \label{lem:estimation}
Under Assumption~\ref{assu:score-error} on score matching, terms (II) and (III) defined in \eqref{eq:clip-Psi-dist-term3-temp} satisfy
\begin{align}
\mathrm{(II)} 
&\lesssim \varepsilon_{t-1}^2 + \varepsilon_t^2, \label{eq:score-error-1}
\end{align}
and
\begin{align}
\mathrm{(III)} 
  \lesssim \bigg(\frac{(1-\alpha_t)d\log T}{1-\overline{\alpha}_t}\bigg)^2  \varepsilon_t^2
+\frac{d}{(1-\ol\alpha_{t})T^{10}}. \label{eq:score-error-2}
\end{align}

\end{lemma}
\begin{proof}
	See Appendix~\ref{sec:proof-lem:estimation}.
\end{proof}

Consequently, substituting \eqref{eq:lemma-random-claim}--\eqref{eq:score-error-2} into \eqref{eq:clip-Psi-dist-term3-temp} yields the claimed bound:
\begin{align*}
& \mathbb{E}_{\cA_t\cap\cB_t}\bigg[\Big\| \alpha_{t}^{3/2}s_{t-1}\big(Y_{t}^{\md}(X_t)\big) - s_t\big(X_t + (1-\alpha_t)Z^\md_t \big) - \Psi_t(X_t) \Big\|_2^2\bigg]
\leq \mathrm{(I)} + \mathrm{(II)} + \mathrm{(III)} \nonumber \\
&\qquad \lesssim (1-\alpha_t)^4\bigg(\frac{d}{1-\overline{\alpha}_t}\bigg)^5
+ \bigg(\frac{1-\alpha_t}{1-\overline{\alpha}_t}\bigg)^4(d\log T)^5
+ \varepsilon_{t-1}^2 + \varepsilon_t^2 + \bigg(\frac{(1-\alpha_t)d\log T}{1-\overline{\alpha}_t}\bigg)^2  \varepsilon_t^2+\frac{d}{(1-\ol\alpha_{t})T^{10}} \nonumber \\
&\qquad \lesssim \frac{1}{1-\ol\alpha_t} \bigg(\frac{1-\alpha_t}{1-\overline{\alpha}_t}\bigg)^4(d\log T)^5
+ \varepsilon_{t-1}^2 +  \varepsilon_t^2 +\frac{1}{T^{10}}\frac{d}{1-\ol\alpha_{t}},
\end{align*}
where the last line uses \eqref{eq:1-alpha-1-olalpha} in Lemma~\ref{lemma:step-size} that $(1-\alpha)/(1-\ol\alpha_t)\lesssim \log T/T$ and the condition that $d \log^2 T = o(T)$.

\subsection{Proof of Lemma~\ref{lem:random}}
\label{sec:proof-lem:random}

Let $J_{t}(x) \defn \frac{\partial}{\partial x}s_{t}^{\star}(x)\in\bR^{d\times d}$ be the Jacobian of the score function $s_t^\star(x)$, which can be expressed as
\begin{align}
J_{t}(x)
&= -\frac{1}{1-\overline{\alpha}_t}I_d + \frac{1}{(1-\overline{\alpha}_t)^2} \Big(
\bE\big[ (X_t-\sqrt{\ol\alpha_t}X_0)(X_t-\sqrt{\ol\alpha_t}X_0)^\top \mid X_t = x \big] \nonumber \\
& \hspace{12.5em} - \bE\big[ X_t-\sqrt{\ol\alpha_t}X_0 \mid X_t = x \big]\bE\big[ X_t-\sqrt{\ol\alpha_t}X_0 \mid X_t = x \big]^\top \Big)
\label{eq:Jacob-expression}.
\end{align}

Recall that $\cA_t \defn \{X_t\in\cE_t\}$, $\mathbb{E}_{\cA_t}[\cdot] \defn \mathbb{E}\big[\cdot \ind\{\cA_t\}\big]$, and $Z\defn Z^\md_t\sim\cN(0,I_d)$. We can then use the triangle inequality to bound
\begin{align*}
\mathbb{E}_{\cA_t}&\bigg[\Big\|\alpha_{t}^{3/2}s_{t-1}^{\star}\big(Y_{t}^{\star,\md} + (1-\alpha_t)Z\big) - \alpha_{t}^{3/2}s_{t-1}^{\star}\big(Y_{t}^{\star,\md}\big) - s_t^{\star}(X_t + (1-\alpha_t)Z)  + s_t^{\star}(X_t)\Big\|_2^2 \bigg] \\
&\lesssim \alpha_t^3 \mathbb{E}_{\cA_t}\bigg[\Big\|s_t^{\star}(Y_{t}^{\star,\md} + (1-\alpha_t)Z) - s_t^{\star}\big(Y_{t}^{\star,\md}\big) - (1-\alpha_t)J_t\big(Y_{t}^{\star,\md}\big)Z\Big\|_2^2\bigg] \\
&\qquad+ \mathbb{E}_{\cA_t}\bigg[\Big\|s_t^{\star}(X_t + (1-\alpha_t)Z) - s_t^{\star}(X_t) - (1-\alpha_t)J_t(X_t)Z\Big\|_2^2\bigg] \\
&\qquad+ (1-\alpha_t)^2\mathbb{E}_{\cA_t}\bigg[\Big\|\Big(\alpha_{t}^{3/2}J_{t-1}\big(Y_{t}^{\star,\md}\big) - J_t(X_t)\Big)Z\Big\|_2^2\bigg].
\end{align*}

\begin{itemize}
	\item Regarding the last term, we claim that on the event $\cA_t$, one has
\begin{align}
\label{eq:J-Y-star-mid-X-opnorm}
\Big\|\alpha_{t}^{3/2}J_{t-1}\big(Y_{t}^{\star,\md}\big) - J_t(X_t)\Big\|
\lesssim (1-\alpha_t)\bigg(\frac{d\log T}{1-\overline{\alpha}_t}\bigg)^2,
\end{align}
with the proof postponed to the end of this section. As a result, one can obtain
\begin{align*}
	\mathbb{E}_{\cA_t}\bigg[\Big\|\Big(\alpha_{t}^{3/2}J_{t-1}\big(Y_{t}^{\star,\md}\big) - J_t(X_t)\Big)Z\Big\|_2^2\bigg] 
	& \leq \mathbb{E}_{\cA_t}\bigg[\Big\|\alpha_{t}^{3/2}J_{t-1}\big(Y_{t}^{\star,\md}\big) - J_t(X_t)\Big\|^2 \bigg]\bE\big[\|Z\|_2^2] \\
	& \lesssim (d\log T)^5 \frac{(1-\alpha_t)^2}{(1-\overline{\alpha}_t)^4}.
\end{align*}

	\item As for the second term, we can express it as
\begin{align*}
s_t^{\star}(X_t + (1-\alpha_t)Z) - s_t^{\star}(X_t) -  (1-\alpha_t)J_t(X_t)Z = \int_0^1 \Big(J_t\big(X_t + \gamma(1-\alpha_t)Z\big) - J_t(X_t)\Big)(1-\alpha_t)Z\diff \gamma.
\end{align*}
By applying a similar argument as for \eqref{eq:J-Y-star-mid-X-opnorm}, it can be shown that on the event $X_t\in\cE_t$, the following holds for any $\gamma\in[0,1]$:
\begin{align*}
\mathbb{E}_{\cA_t}\Big[\big\|J_t\big(X_t + \gamma(1-\alpha_t)Z\big) - J_t(X_t)\big\|^4\Big]
&= \mathbb{E}_{\cA_t}\Bigg[\bigg\|\int_0^\gamma\frac{\partial}{\partial \gamma'}J_t\big(X_t + \gamma'(1-\alpha_t)Z\big)\diff \gamma'\bigg\|^4\Bigg] \\
&\lesssim (1-\alpha_t)^4\bigg(\frac{d\log T}{1-\overline{\alpha}_t}\bigg)^8.
\end{align*}
%
We can subsequently use the Cauchy-Schwartz inequality to derive
\begin{align*}
&\mathbb{E}_{\cA_t}\Big[\big\|s_t^{\star}\big(X_t + (1-\alpha_t)Z\big) - s_t^{\star}(X_t) - (1-\alpha_t)J_t(X_t)Z\big\|_2^2\Big] \\
&\quad \lesssim (1-\alpha_t)^2 \int_0^1 \sqrt{\bE_{\cA_t}\Big[ \big\|J_t(X_t + \gamma(1-\alpha_t)Z) - J_t(X_t)\big\|^4 \Big] \bE \big[\|Z\|_2^4\big] } \diff \gamma \\
&\quad \lesssim \bigg(\frac{1-\alpha_t}{1-\overline{\alpha}_t}\bigg)^4(d\log T)^5.
\end{align*}

	\item Similarly, the last term can be controlled by
\begin{align*}
\mathbb{E}_{\cA_t}\bigg[&\Big\|s_{t-1}^{\star}\big(Y_{t}^{\star,\md} + (1-\alpha_t)Z\big) - s_{t-1}^{\star}\big(Y_{t}^{\star,\md}\big) - (1-\alpha_t)J_{t-1}\big(Y_{t}^{\star,\md}\big)Z\Big\|_2^2 \bigg] \\
& \lesssim (1-\alpha_t)^2  \int_0^1 \sqrt{\bE_{\cA_t}\Big[\big\|J_t\big(Y_{t}^{\star,\md} + \gamma(1-\alpha_t)Z\big) - J_t\big(Y_{t}^{\star,\md}\big)\big\|^4 \Big] \bE \big[\|Z\|_2^4\big]} \diff \gamma \\
&  \lesssim \bigg(\frac{1-\alpha_t}{1-\overline{\alpha}_t}\bigg)^4(d\log T)^5.
\end{align*}
\end{itemize}

Combining these relations leads to our desired result immediately.

\paragraph{Proof of Claim~\eqref{eq:J-Y-star-mid-X-opnorm}.}

In view of the expression of the Jacobian matrix \eqref{eq:Jacob-expression} and the triangle inequality, we shall control the spectral norms of the three terms separately.

First, it is easy to bound
\begin{align}
	\bigg|\frac{{\alpha}_t^{3/2}}{1-\overline{\alpha}_{t-1}}-\frac{1}{1-\overline{\alpha}_t}\bigg| 
	& \leq \frac{{1-\alpha}_t^{3/2}}{1-\overline{\alpha}_{t-1}}+ \frac{1}{1-\overline{\alpha}_{t-1}}-\frac{1}{1-\overline{\alpha}_t} \notag \\
	& \numpf{i}{\leq} \frac32\frac{1-\alpha_t}{1-\overline{\alpha}_{t-1}} + \frac{\ol\alpha_{t-1}(1-\alpha_t)}{(1-\overline{\alpha}_{t-1})(1-\overline{\alpha}_{t})}
	  \numpf{ii}{\lesssim} \frac{1-\alpha_t}{(1-\ol\alpha_t)^2}, \label{eq:Jacob-diff-term1}
\end{align}
where (i) uses $1-\alpha_t = o(1)$ by \eqref{eq:alpha-t-lb} and $(1-x)^{3/2}\geq 1-3x/2$ for all $x\in(0,1)$.

Next, let us consider the second moment term.
%
Recall $y_{t}^{\star,\md} = \frac{1}{\sqrt{\alpha_{t}}}\big(x_{t} + \frac{1-\alpha_t}{2\alpha_t}s_t^{\star}(x_t)\big)$ defined in \eqref{eq:Y-star-defn-1/2}. 
For convenience of notation, we denote
\begin{align*}
	\Sigma_0 &\defn \int p_{X_0 \mid X_{t}}(x_0 \mid x_t)\big(x_t - \sqrt{\overline{\alpha}_t}x_0\big)\big(x_{t} - \sqrt{\overline{\alpha}_t}x_0\big)^{\top}\diff x_0; \\
	\Sigma_1 &\defn \int p_{X_0 \mid X_{t-1}}(x_0 \mid y_{t}^{\star,\md})(x_t - \sqrt{\overline{\alpha}_{t}}x_0)(x_t - \sqrt{\overline{\alpha}_{t}}x_0)^{\top}\diff x_0; \\
	\eta_1 & \defn \int p_{X_0 \mid X_{t-1}}(x_0 \mid y_{t}^{\star,\md}) \big(x_t - \sqrt{\overline{\alpha}_{t}}x_0\big)\diff x_0.
\end{align*}
Straightforward computation yields
\begin{align}
&\frac{\alpha_{t}^{3/2}}{(1-\overline{\alpha}_{t-1})^2}\int_{x_0} p_{X_0 \mid X_{t-1}}(x_0 \mid y_{t}^{\star,\md}) \big(y_{t}^{\star,\md} - \sqrt{\overline{\alpha}_{t-1}}x_0\big)\big(y_{t}^{\star,\md} - \sqrt{\overline{\alpha}_{t-1}}x_0\big)^{\top}\diff x_0 \nonumber \\
	&\qquad -\frac{1}{(1-\overline{\alpha}_{t})^2} \int_{x_0} p_{X_0 \mid X_{t}}(x_0 \mid x_t)\big(x_t - \sqrt{\overline{\alpha}_t}x_0\big)\big(x_{t} - \sqrt{\overline{\alpha}_t}x_0\big)^{\top}\diff x_0 \nonumber \\
& \qquad \qquad = \underbrace{\frac{\sqrt{\alpha_{t}}}{(1-\overline{\alpha}_{t-1})^2} \Sigma_1 -\frac{1}{(1-\overline{\alpha}_{t})^2} \Sigma_0}_{(\mathrm{I})} + \underbrace{\frac{(1-\alpha_t)^2}{4\alpha_t^{3/2}(1-\overline{\alpha}_{t-1})^2}s_t^{\star}(x_t)s_t^{\star}(x_t)^{\top}}_{(\mathrm{II})} \nonumber \\
& \qquad \qquad  \quad + \underbrace{\frac{1-\alpha_t}{2\sqrt{\alpha_{t}}(1-\overline{\alpha}_{t-1})^2} \big(\eta_1 s_t^{\star}(x_t)^{\top} + s_t^{\star}(x_t)\eta_1^\top \big)}_{(\mathrm{III})}. \label{eq:Jacob-cov-temp}
\end{align}
In what follows, we shall control these quantities separately.
\begin{itemize}
	\item Let us start with (I).
One can use the Bayes formula to rewrite
\begin{align}
p_{X_0\mid X_{t-1}}(x_0 \mid y_{t}^{\star,\md})
= \frac{p_{X_0}(x_0)\exp\Big(-\frac{1}{2(\alpha_t-\overline{\alpha}_{t})}\big\|x_t+\frac{1-\alpha_t}{2\alpha_t}s_t^{\star}(x_t) - \sqrt{\overline{\alpha}_{t}}x_0\big\|_2^2\Big)}
{\int p_{X_0}(x_0)\exp\Big(-\frac{1}{2(\alpha_t-\overline{\alpha}_{t})}\big\|x_t+\frac{1-\alpha_t}{2\alpha_t}s_t^{\star}(x_t) - \sqrt{\overline{\alpha}_{t}}x_0\big\|_2^2\Big)\diff  x_0}. \label{eq:y-star-1/2-bayes}
\end{align}
Our goal is to show that $p_{X_0\mid X_{t-1}}(x_0 \mid y_{t}^{\star,\md})$ is sufficiently close to
\begin{align*}
	p_{X_0\mid X_{t}}(x_0 \mid x_t)
= \frac{p_{X_0}(x_0)\exp\Big(-\frac{1}{2(1-\overline{\alpha}_{t})}\big\|x_t - \sqrt{\overline{\alpha}_{t}}x_0\big\|_2^2\Big)}
{\int p_{X_0}(x_0)\exp\Big(-\frac{1}{2(1-\overline{\alpha}_{t})}\big\|x_t - \sqrt{\overline{\alpha}_{t}}x_0\big\|_2^2\Big)\diff  x_0}.
\end{align*}
Towards this, we know from \eqref{eq:bound_cond_score_fn} in Lemma~\ref{lem:bound_cond} that $\|s_t^{\star}(x_t)\|_2^2\lesssim{(d\log T)/{(1-\ol{\alpha}_t})}$ when $x_t \in \cE_t$. This implies that
\begin{align}
& \frac{1}{\alpha_t-\overline{\alpha}_{t}}\Big\|x_t+\frac{1-\alpha_t}{2\alpha_t}s_t^{\star}(x_t) - \sqrt{\overline{\alpha}_{t}}x_0\Big\|_2^2 - \frac{1}{1-\overline{\alpha}_{t}}\big\|x_t - \sqrt{\overline{\alpha}_{t}}x_0\big\|_2^2 \notag\\
& \qquad \numpf{i}{\leq} \bigg(-\frac{1}{1-\ol{\alpha}_t}+\frac{1}{\alpha_t-\overline{\alpha}_{t}}+\frac{1-\alpha_t}{(\alpha_t-\overline{\alpha}_{t})(1-\ol\alpha_t)}\bigg)\big\|x_t - \sqrt{\overline{\alpha}_{t}}x_0\big\|_2^2  \notag\\
& \qquad\quad + \frac{(1-\alpha_t)^2+{(1-\alpha_t)}{(1-\ol\alpha_t)}}{{\alpha}_t(1-\ol{\alpha}_{t-1})(2\alpha_t)^2}\big\|s_t^{\star}(x_t)\big\|_2^2 \notag\\
& \qquad \numpf{ii}{\lesssim} \frac{1-\alpha_t}{1-\ol\alpha_t} \frac{1}{1-\ol\alpha_t}\big\|x_t - \sqrt{\overline{\alpha}_{t}}x_0\big\|_2^2 +  \frac{(1-\alpha_t)d\log T}{1-\ol\alpha_t} , \label{eq:pdf-cond-y-mid-approx}
\end{align}	
where (i) uses the Cauchy-Schwartz inequality, and (ii) follows from \eqref{eq:alpha-t-lb}--\eqref{eq:1-olalpha-O1} from Lemma~\ref{lemma:step-size}.
Define the set
\begin{align*}
	\cG_t \defn \Big\{ x_0\in\bR^d \colon \big\|x_t - \sqrt{\ol\alpha_t} x_0 \big\|_2 \leq C_6 \sqrt{(1-\ol\alpha_t)d\log T} \Big\}.
\end{align*}
for some sufficiently large absolute constant $C_6>0$. Using \eqref{eq:pdf-cond-y-mid-approx}, we can bound the integral over $\cG_t$ by
\begin{align*}
	&\int_{\cG_t}  p_{X_0}(x_0)\exp\bigg(-\frac{1}{2(\alpha_t-\overline{\alpha}_{t})}\Big\|x_t+\frac{1-\alpha_t}{2\alpha_t}s_t^{\star}(x_t) - \sqrt{\overline{\alpha}_{t}}x_0\Big\|_2^2\bigg)\diff  x_0 \\
	&\qquad \leq \int p_{X_0}(x_0)\exp\Bigg(-\frac{1}{2(1-\overline{\alpha}_{t})}\big\|x_t - \sqrt{\overline{\alpha}_{t}}x_0\big\|_2^2 - O\bigg(\frac{(1-\alpha_t)d\log T}{1-\ol\alpha_t}\bigg)\Bigg)\diff  x_0 \\
	&\qquad \numpf{i}{=} \Bigg(1+O\bigg(\frac{(1-\alpha_t)d\log T}{1-\ol\alpha_t}\bigg)\Bigg)\int p_{X_0}(x_0)\exp\bigg(-\frac{1}{2(1-\overline{\alpha}_{t})}\big\|x_t - \sqrt{\overline{\alpha}_{t}}x_0\big\|_2^2\bigg)\diff  x_0 \\
	&\qquad \numpf{ii}{=} \Bigg(1+O\bigg(\frac{(1-\alpha_t)d\log T}{1-\ol\alpha_t}\bigg)\Bigg) \big(2\pi(1-\ol{\alpha}_t)\big)^{d/2} p_{X_t}(x_t),
\end{align*}
where (i) holds due to \eqref{eq:1-alpha-1-olalpha} that $(1-\alpha_t)/(1-\ol\alpha_t)\lesssim \log T/T$ and $d\log^2 T = o(T)$; (ii) is true as $X_t\mid X_0\sim\cN\big(\sqrt{\ol\alpha_t}X_0,(1-\ol\alpha_t)I_d\big)$.
Meanwhile, the integral over $\cG_t^\setc$ can be controlled by
\begin{align*}
&\int_{\cG_t^\setc}  p_{X_0}(x_0)\exp\bigg(-\frac{1}{2(\alpha_t-\overline{\alpha}_{t})}\Big\|x_t+\frac{1-\alpha_t}{2\alpha_t}s_t^{\star}(x_t) - \sqrt{\overline{\alpha}_{t}}x_0\Big\|_2^2\bigg)\diff  x_0 \\
	&\qquad \numpf{i}{=} \int_{\cG_t^\setc} p_{X_0}(x_0)\exp\bigg\{-\bigg[1+O\bigg(\frac{1-\alpha_t}{1-\ol\alpha_t}\bigg)\bigg]\frac{1}{2(1-\overline{\alpha}_{t})} \big\|x_t - \sqrt{\overline{\alpha}_{t}}x_0\big\|_2^2 +O\bigg(\frac{(1-\alpha_t)d\log T}{1-\ol\alpha_t}\bigg)\bigg\}\diff  x_0 \\
	&\qquad \numpf{ii}{\leq} p_{X_t}(x_t) \int_{\cG_t^\setc} p_{X_0}(x_0)\exp\bigg\{-\big(1-o(1)\big)\frac12C_6^2 d\log T+O\bigg(\frac{d\log^2T}{T}\bigg)+C_2 d \log T\bigg\}\diff  x_0 \\
	&\qquad \numpf{iii}{=} p_{X_t}(x_t) \exp\big(-\Omega(d\log T)\big).
\end{align*}
where (i) uses \eqref{eq:pdf-cond-y-mid-approx}; (ii) arises from $-\log p_{X_t}(x_t) \leq C_2 d\log T$, the definition of $\cG_t$ and $(1-\alpha_t)/(1-\ol\alpha_t)\lesssim \log T/T=o(1)$ by \eqref{eq:1-alpha-1-olalpha} from Lemma~\ref{lemma:step-size}, and (iii) holds provided $C_6^2/C_2$ is sufficiently large and $d\log^2 T = o(T)$.
Combining these two observations reveals that
\begin{align}
&\int p_{X_0}(x_0)\exp\bigg(-\frac{1}{2(\alpha_t-\overline{\alpha}_{t})}\Big\|x_t+\frac{1-\alpha_t}{2\alpha_t}s_t^{\star}(x_t) - \sqrt{\overline{\alpha}_{t}}x_0\Big\|_2^2\bigg)\diff  x_0 \nonumber \\ 
& \qquad = p_{X_t}(x_t) \exp\big(-\Omega(d\log T)\big) + \bigg\{1+O\bigg(\frac{(1-\alpha_t)d\log T}{1-\ol\alpha_t}\bigg)\bigg\} \big(2\pi(1-\ol{\alpha}_t)\big)^{d/2} p_{X_t}(x_t) \nonumber \\
& \qquad = \bigg\{1+O\bigg(\frac{(1-\alpha_t)d\log T}{1-\ol\alpha_t}\bigg)\bigg\} \big(2\pi(1-\ol{\alpha}_t)\big)^{d/2} p_{X_t}(x_t). \label{eq:Y-star-12-pdf}
\end{align}
Similarly, we can also derive 
\begin{align*}
	&\int_{\cG_t}  p_{X_0}(x_0)\exp\bigg(-\frac{1}{2(\alpha_t-\overline{\alpha}_{t})}\Big\|x_t+\frac{1-\alpha_t}{2\alpha_t}s_t^{\star}(x_t) - \sqrt{\overline{\alpha}_{t}}x_0\Big\|_2^2\bigg) \big(x_t - \sqrt{\overline{\alpha}_{t}}x_0\big)\big(x_t - \sqrt{\overline{\alpha}_{t}}x_0\big)^\top\diff  x_0 \\
	&\qquad = \bigg\{1+O\bigg(\frac{(1-\alpha_t)d\log T}{1-\ol\alpha_t}\bigg)\bigg\} \\
	& \qquad \qquad \cdot \int p_{X_0}(x_0)\exp\bigg(-\frac{1}{2(1-\overline{\alpha}_{t})}\big\|x_t - \sqrt{\overline{\alpha}_{t}}x_0\big\|_2^2\bigg) \big(x_t - \sqrt{\overline{\alpha}_{t}}x_0\big)\big(x_t - \sqrt{\overline{\alpha}_{t}}x_0\big)^\top\diff  x_0,
\end{align*}
and
\begin{align*}
\bigg\|\int_{\cG_t^\setc} & p_{X_0}(x_0)\exp\bigg(-\frac{1}{2(\alpha_t-\overline{\alpha}_{t})}\Big\|x_t+\frac{1-\alpha_t}{2\alpha_t}s_t^{\star}(x_t) - \sqrt{\overline{\alpha}_{t}}x_0\Big\|_2^2\bigg) \big(x_t - \sqrt{\overline{\alpha}_{t}}x_0\big)\big(x_t - \sqrt{\overline{\alpha}_{t}}x_0\big)^\top \diff  x_0 \bigg\| \\
& \leq \int_{\cG_t^\setc} p_{X_0}(x_0)\exp\bigg(-\frac{1}{2(\alpha_t-\overline{\alpha}_{t})}\Big\|x_t+\frac{1-\alpha_t}{2\alpha_t}s_t^{\star}(x_t) - \sqrt{\overline{\alpha}_{t}}x_0\Big\|_2^2\bigg) \big\|x_t - \sqrt{\overline{\alpha}_{t}}x_0\big\|_2^2 \diff  x_0 \\
	& = \int_{\cG_t^\setc} p_{X_0}(x_0)\exp\bigg\{-\frac{1-o(1)}{2(1-\overline{\alpha}_{t})} \big\|x_t - \sqrt{\overline{\alpha}_{t}}x_0\big\|_2^2 +O\bigg(\frac{(1-\alpha_t)d\log T}{1-\ol\alpha_t}\bigg)\bigg\} \big\|x_t - \sqrt{\overline{\alpha}_{t}}x_0\big\|_2^2\diff  x_0 \\
	& = p_{X_t}(x_t) \exp\big(-\Omega(d\log T)\big).
\end{align*}

Combined with \eqref{eq:y-star-1/2-bayes}, these reveal that
\begin{align*}
	\|\Sigma_1 - \Sigma_0 \| \leq \exp\big(-\Omega(d\log T)\big) + O\bigg(\frac{(1-\alpha_t)d\log T}{1-\overline{\alpha}_{t}}\bigg) \| \Sigma_0 \|.
\end{align*}
Moreover, we know that
\begin{align*}
	\|\Sigma_0\| & = \Big\| \bE\big[(X_t - \sqrt{\ol{\alpha}_t}X_0)(X_t - \sqrt{\ol{\alpha}_t}X_0)^\top \,\big|\,X_t = x_t\big] \Big\| \\
	& \numpf{i}{\leq} \bE\Big[ \big\| X_t - \sqrt{\ol{\alpha}_t}X_0  \big\|_2^2 \,\big|\,X_t = x_t\Big] \numpf{ii}{\lesssim} (1-\ol\alpha_t)d\log T,
\end{align*}
where (i) applies Jensen's inequality and (ii) follows from \eqref{eq:bound_cond} in Lemma~\ref{lem:bound_cond}.
Therefore, we obtain
\begin{align}
	\|(\mathrm{I})\|& = \bigg\| \frac{\sqrt{\alpha_{t}}}{(1-\overline{\alpha}_{t-1})^2}\Sigma_1 - \frac{1}{(1-\ol\alpha_t)^2}\Sigma_0 \bigg\| \notag \\
	& \leq \frac{\sqrt{\alpha_{t}}}{(1-\overline{\alpha}_{t-1})^2}\|\Sigma_1 - \Sigma_0 \| + \bigg|\frac{\sqrt{\alpha_{t}}}{(1-\overline{\alpha}_{t-1})^2} - \frac{1}{(1-\ol\alpha_t)^2}\bigg| \|\Sigma_0 \| \nonumber \\
	& \lesssim \frac{\sqrt{\alpha_{t}}}{(1-\overline{\alpha}_{t-1})^2} \bigg(\exp\big(-\Omega(d\log T)\big) + \frac{(1-\alpha_t)d\log T}{1-\overline{\alpha}_{t}} (1-\ol\alpha_t)d\log T \bigg) \nonumber \\
	& \quad + \bigg|\frac{\sqrt{\alpha_{t}}}{(1-\overline{\alpha}_{t-1})^2} - \frac{1}{(1-\ol\alpha_t)^2}\bigg| (1-\ol\alpha_t)d\log T \nonumber \\
	& \lesssim (1-\alpha_t)\bigg(\frac{d\log T}{1-\overline{\alpha}_{t-1}}\bigg)^2 + \frac{1-\alpha_t}{(1-\ol\alpha_{t})^2} (1-\ol\alpha_t)d\log T \nonumber \\
	& \asymp (1-\alpha_t)\bigg(\frac{d\log T}{1-\overline{\alpha}_{t}}\bigg)^2, \label{eq:Jacob-diff-cov-term1}
\end{align}
where we use $\sqrt{1-x}\leq 1-x/2$ for $x\in[0,1]$ and \eqref{eq:1-olalpha-O1}.

\item
Notice that $\int p_{X_0 \mid X_{t}}(x_0 \mid x_t) \big(x_t - \sqrt{\overline{\alpha}_{t}}x_0\big)\diff x_0 = -(1-\ol\alpha_t) s_t^\star(x_t)$.
Apply the same reasoning above, we also know that
\begin{align}
	\|\mathrm{(II)}\|&=\big\|\eta_1 +(1-\ol\alpha_t) s_t^\star(x_t) \big\|_2 \leq \exp\big(-\Omega(d\log T)\big)  + O\bigg(\frac{(1-\alpha_t)d\log T}{1-\overline{\alpha}_{t}}\bigg) \big\|(1-\ol\alpha_t) s_t^\star(x_t) \big\|_2 \nonumber \\
	& \lesssim \frac{(1-\alpha_t)(d\log T)^{3/2}}{\sqrt{1-\overline{\alpha}_{t}}}, \label{eq:eta-l2-dist-ub}
\end{align}
where the last step uses \eqref{eq:bound_cond} in Lemma~\ref{lem:bound_cond}. In particular, one has
\begin{align}
	\|\eta_1\|_2 &\lesssim \bigg\{1 + O\bigg(\frac{(1-\alpha_t)d\log T}{1-\overline{\alpha}_{t}}\bigg)\bigg\} (1-\ol\alpha_t) \big\|s_t^\star(x_t)\big\|_2 + \exp\big(-\Omega(d\log T)\big) \nonumber \\
	&  \numpf{i}{\lesssim} (1-\ol\alpha_t) \big\|s_t^\star(x_t)\big\|_2 + \exp\big(-\Omega(d\log T)\big) \numpf{ii}{\lesssim} \sqrt{(1-\ol\alpha_t) d\log T}. \label{eq:eta-l2-ub}
\end{align}
where (i) holds due to \eqref{eq:1-alpha-1-olalpha} from Lemma~\ref{lemma:step-size} that $(1-\alpha_t)/(1-\ol\alpha_t)\lesssim \log T/T$ and $d\log^2 T = o(T)$; (ii) arises from \eqref{eq:bound_cond} in Lemma~\ref{lem:bound_cond}.
As a result, we arrive at
\begin{align}
	\frac{1-\alpha_t}{2\sqrt{\alpha_{t}}(1-\overline{\alpha}_{t-1})^2} \big\|\eta_1 s_t^{\star}(x_t)^{\top} + s_t^{\star}(x_t)\eta_1^\top \big\| & \leq \frac{1-\alpha_t}{\sqrt{\alpha_{t}}(1-\overline{\alpha}_{t-1})^2} \|\eta_1\|_2 \big\|s_t^\star(x_t)\big\|_2  \nonumber \\
	& \lesssim \frac{(1-\alpha_t)d\log T}{(1-\overline{\alpha}_{t})^2}, \label{eq:Jacob-diff-cov-term2}
\end{align}
where the final step is due to $1-\ol\alpha_t \asymp 1-\ol\alpha_{t-1}$ in \eqref{eq:1-olalpha-O1} and $\alpha_t \asymp 1$ in \eqref{eq:alpha-t-lb} from Lemma~\ref{lemma:step-size}.
\item 
In addition, we can use \eqref{eq:bound_cond} from Lemma~\ref{lem:bound_cond} to get
\begin{align}
	\|\mathrm{(III)}\| = \Bigg\| \frac{(1-\alpha_t)^2}{\alpha_t^{3/2}(1-\overline{\alpha}_{t-1})^2}s_t^{\star}(x_t)s_t^{\star}(x_t)^{\top} \Bigg\| \lesssim \bigg(\frac{1-\alpha_t}{1-\ol\alpha_{t-1}}\bigg)^2 \big\|s_t^{\star}(x_t)\big\|_2^2 \lesssim \frac{(1-\alpha_t)^2d\log T}{(1-\ol\alpha_t)^3}, \label{eq:Jacob-diff-cov-term3}
\end{align}
where we use $1-\ol\alpha_t \asymp 1-\ol\alpha_{t-1}$ in \eqref{eq:1-olalpha-O1} and $\alpha_t \asymp 1$ in \eqref{eq:alpha-t-lb} in Lemma~\ref{lemma:step-size}.
\item Putting \eqref{eq:Jacob-diff-cov-term1}, \eqref{eq:Jacob-diff-cov-term2}, and \eqref{eq:Jacob-diff-cov-term3} together into \eqref{eq:Jacob-cov-temp} reveals that
\begin{align}
	& \frac{\alpha_{t}^{3/2}}{(1-\overline{\alpha}_{t-1})^2}\int_{x_0} p_{X_0 \mid X_{t-1}}(x_0 \mid y_{t}^{\star,\md}) \big(y_{t}^{\star,\md} - \sqrt{\overline{\alpha}_{t-1}}x_0\big)\big(y_{t}^{\star,\md} - \sqrt{\overline{\alpha}_{t-1}}x_0\big)^{\top}\diff x_0 \nonumber \\
	&\qquad -\frac{1}{(1-\overline{\alpha}_{t})^2} \int_{x_0} p_{X_0 \mid X_{t}}(x_0 \mid x_t)\big(x_t - \sqrt{\overline{\alpha}_t}x_0\big)\big(x_{t} - \sqrt{\overline{\alpha}_t}x_0\big)^{\top}\diff x_0 \leq \|\mathrm{(I)}\| + \|\mathrm{(II)}\| + \|\mathrm{(III)}\| \nonumber \\
	& \qquad \qquad 
	  \lesssim (1-\alpha_t)\bigg(\frac{d\log T}{1-\overline{\alpha}_{t}}\bigg)^2 + \frac{(1-\alpha_t)d\log T}{(1-\overline{\alpha}_{t})^2} + \frac{(1-\alpha_t)^2d\log T}{(1-\ol\alpha_t)^{3}} \nonumber \\
	& \qquad\qquad  \asymp (1-\alpha_t)\bigg(\frac{d\log T}{1-\overline{\alpha}_{t}}\bigg)^2, \label{eq:Jacob-diff-term2}
\end{align}
where the last step follows from \eqref{eq:1-alpha-1-olalpha} from Lemma~\ref{lemma:step-size} that $(1-\alpha_t)/(1-\ol\alpha_t)\lesssim \log T/T = o(1)$.

\end{itemize}

Third, we can bound
\begin{align}
	& \bigg\|\frac{\alpha_t^{3/2}}{(1-\ol\alpha_{t-1})^2}  \eta_1\eta_1^\top - s_t^\star(x_t)s_t^\star(x_t)^\top \bigg\| \\
	&\qquad \lesssim \bigg|\frac{\alpha_t^{3/2}}{(1-\ol\alpha_{t-1})^2}-\frac{1}{(1-\ol\alpha_{t})^2}\bigg|\|\eta_1 \|_2^2
	 + \frac{1}{(1-\ol\alpha_{t})^2} \big\|\eta_1+(1-\ol\alpha_t) s_t^\star(x_t) \big\|_2 \Big(\|\eta_1\|_2+ \big\|(1-\ol\alpha_t) s_t^\star(x_t)\big\|_2 \Big) \nonumber \\
	&\qquad \numpf{i}{\lesssim} \frac{1-\alpha_t}{(1-\ol\alpha_t)^3} (1-\ol\alpha_t)d\log T + \frac{1}{(1-\ol\alpha_{t})^2}\frac{(1-\alpha_t)(d\log T)^{3/2}}{\sqrt{1-\overline{\alpha}_{t}}} \sqrt{(1-\ol\alpha_t)d\log T}   \nonumber \\
	&\qquad \asymp (1-\alpha_t)\bigg(\frac{d\log T}{1-\ol\alpha_t}\bigg)^2, \label{eq:Jacob-diff-term3}
\end{align}
Here, (i) use \eqref{eq:eta-l2-dist-ub}, \eqref{eq:eta-l2-ub}, and the following bound:
\begin{align*}
	\bigg|\frac{{\alpha}_t^{3/2}}{(1-\overline{\alpha}_{t-1})^2}-\frac{1}{(1-\overline{\alpha}_t)^2}\bigg| 
	& \leq \frac{{1-\alpha}_t^{3/2}}{(1-\overline{\alpha}_{t-1})^2}+ \frac{1}{(1-\overline{\alpha}_{t-1})^2}-\frac{1}{(1-\overline{\alpha}_t)^2} \\
	& \numpf{i}{\leq} \frac32\frac{1-\alpha_t}{(1-\overline{\alpha}_{t-1})^2} + \frac{\ol\alpha_{t-1}(1-\alpha_t)(2-\ol\alpha_t-\ol\alpha_{t-1})}{(1-\overline{\alpha}_{t-1})^2(1-\overline{\alpha}_{t})^2} \numpf{ii}{\lesssim} \frac{1-\alpha_t}{(1-\ol\alpha_t)^3},
\end{align*}
where (i) applies $1-\alpha_t = o(1)$ by \eqref{eq:alpha-t-lb} and $(1-x)^{3/2}\geq 1-3x/2$ for all $x\in(0,1)$; (ii) arises from \eqref{eq:1-olalpha-O1} from Lemma~\ref{lemma:step-size} that $1-\ol\alpha_t\asymp 1-\ol\alpha_{t-1}$. 

Finally, putting \eqref{eq:Jacob-diff-term1}, \eqref{eq:Jacob-diff-term2}, \eqref{eq:Jacob-diff-term3} together finishes the proof of the claim.

\subsection{Proof of Lemma~\ref{lem:estimation}}
\label{sec:proof-lem:estimation}

\paragraph{Proof of Claim \eqref{eq:score-error-1}.}
Let us start with the first claim. Recall the definitions of the events $\{\cF_{t,i}\}_{i=1}^3$ and $\cB_t$ in \eqref{def:set-F} and \eqref{eq:event-pdf-ratio}, respectively. Recall the notation $Z \defn Z^\md_t$. Applying a change of measure yields
\begin{align*}
& \bE_{\cB_t} \Big[ \big\| s_t\big(X_t+(1-\alpha_t)Z\big)
-  s_t^{\star}\big(X_t+(1-\alpha_t)Z\big) \big\|_2^2\Big] \\
& \qquad = \int_{\cF_{t,1}}\big\| s_t(x)
-  s_t^{\star}(x) \big\|_2^2 \,p_{X_t+(1-\alpha_t)Z}(x) \diff x  
\numpf{i}{\leq}\int_{\cF_{t,1}}\big\| s_t(x)
-  s_t^{\star}(x) \big\|_2^2 \,2\,p_{X_{t-1}}(x) \diff x \\
& \qquad \leq 2\,\bE\Big[ \big\| s_t\big(X_{t-1}\big)
-  s_t^{\star}\big(X_{t-1}\big) \big\|_2^2\Big] \lesssim \veps^2_{t}, 
\end{align*}
where (i) arises from the construction of $\cF_{t,1}$. 
Similarly, by the definition of $\cF_{t,2}$,  we can also derive
\begin{align*}
	\bE_{\cB_t} \Big[ \big\| \alpha_{t}^{3/2} s_{t-1}\big(Y_{t}^{\md}\big)
- \alpha_{t}^{3/2} s_{t-1}^{\star}\big(Y_{t}^{\md}\big) \big\|_2^2\Big]
&  \leq 4\,\alpha_{t}^{3}\,\bE \Big[ \big\| s_{t-1}(X_{t-1})
- s_{t-1}^{\star}(X_{t-1}) \big\|_2^2\Big]
\lesssim \veps^2_{t-1}.
\end{align*}
where the last step is due to Assumption~\ref{assu:score-error} and $\alpha_t < 1$.

Putting these two bounds together finishes the proof of the first claim.

\paragraph{Proof of Claim \eqref{eq:score-error-2}.}
Turning to the second claim, let us define the set
\begin{align*}
\cC_t := \bigg\{ \big\|s_t(X_t) - s_t^{\star}(X_t)\big\|_2 \leq C_5\frac{1}{1-\alpha_t}\sqrt{(1-\overline{\alpha}_t)d\log T} \,\,\,\text{ and }\,\,\, \|Z\|_2 \leq \sqrt{d}+C_5\sqrt{\log T}\bigg\},
\end{align*}
for some absolute constant $C_5>0$.

Recall the expression of $Y^\md_t$ and $Y^{\star,\md}_t$ in \eqref{eq:sampler-mid} and \eqref{eq:Y-star-defn-1/2}, respectively.
\begin{align*}
&s_{t-1}^{\star}\big(Y^\md_t\big) - s_{t-1}^{\star}\big(Y_{t}^{\star,\md} + (1-\alpha_t)Z\big)\\
&\quad = \int_0^1 J_{t-1}\bigg(\frac{1}{\sqrt{\alpha_{t}}}\Big(X_{t} + \frac{1-\alpha_t}{2\alpha_t}\big(\gamma s_t(X_t) + (1-\gamma) s_t^{\star}(X_t)\big)\Big) + (1-\alpha_t)Z\bigg)\frac{1-\alpha_t}{2\alpha_t^{3/2}}\big(s_t(X_t) - s_t^{\star}(X_t)\big)\diff \gamma.
\end{align*}
We can then apply a similar argument for establishing \eqref{eq:J-Y-star-mid-X-opnorm} to show that on the event $\cA_t \cap \cC_t$,
\begin{align*}
\sup_{\gamma\in[0,1]}\bigg\|J_{t-1}\bigg(\frac{1}{\sqrt{\alpha_{t}}}\Big(X_{t} + \frac{1-\alpha_t}{2\alpha_t}\big(\gamma s_t(X_t) + (1-\gamma) s_t^\star(X_t)\big)\Big) + (1-\alpha_t)Z\bigg)\bigg\| 
\lesssim \frac{d\log T}{1-\overline{\alpha}_t}.
\end{align*}
This gives
\begin{align*}
	 \Big\|s_{t-1}^{\star}&\big(Y^\md_t\big) - s_{t-1}^{\star}\big(Y_{t}^{\star,\md} + (1-\alpha_t)Z\big)\Big\|_2 \\
	& \leq \frac{1-\alpha_t}{2\alpha_t^{3/2}} \max_{\gamma\in[0,1]} \bigg\|J_{t-1}\bigg(\frac{1}{\sqrt{\alpha_{t}}}\Big(X_{t} + \frac{1-\alpha_t}{2\alpha_t}\big(\gamma s_t(X_t) + (1-\gamma) s_t^\star(X_t)\big)\Big) + (1-\alpha_t)Z\bigg)\bigg\|  \big\|s_t(X_t) - s_t^{\star}(X_t) \big\|_2  \\
	& \lesssim \frac{(1-\alpha_t)d\log T}{1-\overline{\alpha}_t} \big\|s_t(X_t) - s_t^{\star}(X_t) \big\|_2,
\end{align*}
where the last step holds due to \eqref{eq:alpha-t-lb} in Lemma~\ref{lemma:step-size} that $\alpha_t \asymp 1$.
Consequently, one has
\begin{align}
	 \mathbb{E}_{\cA_t\cap\cB_t\cap\cC_t}\bigg[\Big\|s_{t-1}^{\star}\big(Y_{t}^{\md}\big) - s_{t-1}^{\star}\big(Y_{t}^{\star,\md} + (1-\alpha_t)Z\big)\Big\|_2^2\bigg]
	& \lesssim \bigg(\frac{(1-\alpha_t)d\log T}{1-\overline{\alpha}_t}\bigg)^2 \mathbb{E}\Big[\big\|s_t(X_t) - s_t^{\star}(X_t) \big\|_2^2\Big] \nonumber \\
	& \lesssim \bigg(\frac{(1-\alpha_t)d\log T}{1-\overline{\alpha}_t}\bigg)^2  \varepsilon_t^2. \label{eq:exp-tau-ub}
\end{align}
where we use Assumption~\ref{assu:score-error} in the last line.

Meanwhile, invoking Markov's inequality and the standard Gaussian concentration inequality, we can derive
\begin{align*}
\mathbb{P}\big(\cC^\setc_t\big)
& \leq \frac{(1-\alpha_t)^2}{C_5(1-\ol\alpha_t)d\log T}\bE\Big[\big\|s_t(X_t) - s_t^{\star}(X_t) \big\|_2^2\Big] + O(T^{-10}) \\ 
& \lesssim \frac{(1-\alpha_t)^2}{(1-\overline{\alpha}_t)d\log T} \varepsilon_t^2+\frac{1}{T^{10}},
\end{align*}
provided $C_5$ is sufficiently large. We can then apply H\"older's inequality to get that for any constant $k>0$, 
\begin{align*}
\mathbb{E}_{\cA_t\cap\cB_t\cap\cC_t^{\mathrm c}}\Big[\big\|s_{t-1}^{\star}\big(Y_{t}^{\md}\big)\big\|_2^2\Big] & \leq \bigg(\mathbb{E}_{\cA_t\cap\cB_t}\Big[\big\|s_{t-1}^{\star}\big(Y_{t}^{\md}\big)\big\|_2^{2k}\Big]\bigg)^{1/k}\bP\big(\cC^\setc_t\big)^{1-1/k} \\
& \numpf{i}{\leq} \bigg(4\,\bE_{\cA_t\cap\cB_t}\Big[\big\|s_{t-1}^{\star}(X_{t-1})\big\|_2^{2k}\Big]\bigg)^{1/k}\bP\big(\cC^\setc_t\big)^{1-1/k} \\
& \numpf{ii}{\lesssim} \frac{d}{1-\ol\alpha_{t-1}}\bigg(\frac{(1-\alpha_t)^2}{(1-\overline{\alpha}_t)d\log T} \varepsilon_t^2+\frac{1}{T^{10}}\bigg)^{1-1/k} \\
& \asymp \frac{(1-\alpha_t)^{2{(1-1/k})}d^{1/k}}{(1-\overline{\alpha}_t)^{2-1/k}\log^{1-1/k} T} \varepsilon_t^{2({1-1/k})}\
+\frac{d}{(1-\ol\alpha_{t})T^{10(1-1/k)}},
\end{align*}
where (i) arises from a change of measure and the definitions of $\cF_{t,2}$ and $\cB_t$ in \eqref{def:set-F} and \eqref{eq:event-pdf-ratio}, respectively; (ii) is due to \eqref{eq:s-star-l2norm-ub} in Lemma~\ref{lemma:x-star-x0-dist-l2norm}; (iii) uses \eqref{eq:1-olalpha-O1} that $1-\ol\alpha_{t-1}\asymp 1-\ol\alpha_{t}$. Taking the limit $k\rightarrow \infty$, we obtain
\begin{align*}
\mathbb{E}_{\cA_t\cap\cB_t\cap\cC_t^{\mathrm c}}\Big[\big\|s_{t-1}^{\star}\big(Y_{t}^{\md}\big)\big\|_2^2\Big] & \lesssim
 \frac{(1-\alpha_t)^{2}}{(1-\overline{\alpha}_t)^{2}\log T} \varepsilon_t^{2}\
+\frac{d}{(1-\ol\alpha_{t})T^{10}},
\end{align*}

Clearly, by the construction of $\cF_{t,3}$ and $\cB_t$, the bound also applies to $\mathbb{E}_{\cA_t\cap\cB_t\cap\cC_t^{\mathrm c}}\Big[\big\| s_{t-1}^{\star}\big(Y_{t}^{\star,\md} + (1-\alpha_t)Z^\md_t\big)\big\|_2^2\Big]$.
Therefore, we find that
\begin{align}
	& \mathbb{E}_{\cA_t\cap\cB_t\cap\cC_t^{\mathrm c}}\bigg[\Big\|s_{t-1}^{\star}\big(Y_{t}^{\md}\big) - s_{t-1}^{\star}\big(Y_{t}^{\star,\md} + (1-\alpha_t)Z^\md_t\big)\Big\|_2^2\bigg] \lesssim
\frac{(1-\alpha_t)^{2}}{(1-\overline{\alpha}_t)^{2}\log T} \varepsilon_t^{2}\
+\frac{d}{(1-\ol\alpha_{t})T^{10}} \label{eq:exp-tau-c-ub}.
\end{align}
Putting \eqref{eq:exp-tau-ub} and \eqref{eq:exp-tau-c-ub} together finishes the proof of the second claim.

\section{Proof of auxiliary lemmas}

\subsection{Proof of Lemma~\ref{lemma:step-size}}
\label{sec:pf-lemma:step-size}

\begin{itemize}
	\item We begin with proving \eqref{eq:1-alpha-1-olalpha}. By the update rule \eqref{eq:learning-rate} and $\alpha_t\in(0,1)$, it is straightforward to verify that
	\begin{align*}
		\frac{1-\alpha_t}{1-\ol\alpha_t} = C_1\frac{ \alpha_t\log T}{T} \leq C_1\frac{\log T}{T},\quad 2\leq t \le T.
	\end{align*}
	\item \eqref{eq:alpha-t-lb} follows an an immediate consequence of \eqref{eq:1-alpha-1-olalpha}: as $\ol\alpha_t = \prod_{i=1}^t \alpha_i < 1$ for all $t\in[T]$, we find
	\begin{align*}
		1-\alpha_t \leq \frac{1-\alpha_t}{1-\ol\alpha_t} \leq C_1\frac{\log T}{T},\quad 2\leq t\leq T.
	\end{align*}
	\item To prove \eqref{eq:1-olalpha-O1}, observe that
	\begin{align*}
		\frac{1-\ol\alpha_t}{1-\ol\alpha_{t-1}} = 1 + \frac{\ol\alpha_{t-1}-\ol\alpha_t}{1-\ol\alpha_{t-1}} = 1+ C_1 \frac{\ol\alpha_t\log T}{T}\frac{1-\ol\alpha_t}{1-\ol\alpha_{t-1}}, 
	\end{align*}
	where the last step uses \eqref{eq:learning-rate}. Rearranging the equation leads to
	\begin{align}
		\frac{1-\ol\alpha_t}{1-\ol\alpha_{t-1}} = \bigg(1-C_1 \frac{\ol\alpha_t\log T}{T}  \bigg)^{-1} \leq 1+2C_1 \frac{\log T}{T},
		  \label{eq:alpha-t-olalpha-t-1}
	\end{align}
	where the final step holds because $(C_1\ol\alpha_t\log T)/T \leq (C_1\log T)/T \leq 1/2$ as $\alpha_t \in (0,1)$ for all $t$, and $(1-x)^{-1}\leq 1+2x$ for $x\in[0,1/2]$.
	\item Finally, let us consider \eqref{eq:alpha-1-lb}. 
	We first claim that $\ol\alpha_{t} \ge 1/2$ for all $1\leq t \leq T/2$. Here, we assume $T$ is even for simplicity of presentation. The argument can be easily generalized to the odd case. 

	Suppose the claim is false, that is, there exists some $1\leq t \leq T/2$ such that $\ol\alpha_{t} < 1/2$. Since $\ol\alpha_{t}$ is decreasing in $t$, this means that $\ol\alpha_{t} < 1/2$ for all $t \geq T/2$. It follows from the update rule \eqref{eq:learning-rate} that 
	\begin{align*}
		\ol\alpha_{t-1}  = \ol\alpha_{t}\bigg(1+(1-\ol\alpha_t)\frac{C_1\log T}{T}\bigg) > \ol\alpha_{t}\bigg(1+\frac{C_1\log T}{2T}\bigg)
	\end{align*}
	for all $T/2 \leq t \leq T$. Combined with $\ol\alpha_{T}=T^{-C_0}$ chosen in \eqref{eq:learning-rate} and $(C_1\log T)/T \leq 1/2$, this implies that as long as $C_1/C_0$ is large enough,
	\begin{align*}
	\ol\alpha_{T/2} \ge \ol\alpha_{T}\bigg(1+\frac{C_1\log T}{2T}\bigg)^{T/2} \geq \frac{1}{T^{C_0}} \bigg(1+\frac{1}{4}\bigg)^{C_1 \log T}  > \frac{1}{2},
	\end{align*}
	since $(1+1/x)^x$ is increasing in $x$ for $x>0$, thereby leading to a contradiction. 

	This establishes the claim above, which implies that for all $1\leq t \le T/2$,
	\begin{align*}
		1-\ol\alpha_{t-1}= (1-\ol\alpha_{t})\bigg(1-\ol\alpha_t\frac{C_1\log T}{T}\bigg) \le (1-\ol\alpha_{t})\bigg(1-\frac{C_1\log T}{2T}\bigg).
	\end{align*}
	Therefore, we conclude that
	\begin{align*}
	1-\ol\alpha_{1} \le \big(1-\ol\alpha_{T/2}\big)\bigg(1-\frac{C_1\log T}{2T}\bigg)^{T/2-1} \leq \frac{1}{T^{C_1/4}},
	\end{align*}
    where we use $\ol\alpha_{T/2} \ge 1/2$, $(C_1\log T)/T\leq 1/2$, and $(1-1/x)^{x} \leq \exp(-1)$ for any $x>1$.
\end{itemize}

\subsection{Proof of Lemma \ref{lemma:mixing}}\label{sub:proof_of_lemma_ref_lemma_mixing}
Recall that $Y_T\sim\cN(0,I_d)$. We can derive
\begin{align*}
	\KL\big(p_{X_{T}}\,\|\,p_{Y_{T}}\big) &\numpf{i}{\leq} \bE_{x_0\sim p_{X_0}} \Big[ \KL\big(p_{X_{T}}(\cdot \mid x_0)\,\|\,p_{Y_{T}}(\cdot )\big)\Big] \nonumber \\
	& \numpf{ii}{=} \frac12 \bE \Big[d(1-\ol\alpha_T)-d+\big\|\sqrt{\ol\alpha_T}X_0\big\|_2^2 - d\log(1-\ol\alpha_T) \Big] \nonumber \\
	& \numpf{iii}{\leq} \frac12 \ol\alpha_T \bE\big[\| X_0\|_2^2\big] \numpf{iv}{\lesssim} T^{-(C_0-C_R)}  \numpf{v}{\lesssim} T^{-10},
\end{align*}
where (i) follows from the convexity of the KL divergence; (ii) uses the KL divergence formula for normal distributions; (iii) holds as the learning rate \eqref{eq:learning-rate} that $\ol\alpha_T = T^{-C_0}$ is sufficiently small and $\log(1-x)\geq-x$ for all $x\in[0,1/2]$; (iv) uses Assumption~\ref{assu:distribution} about the bounded second moment; (v) holds as long as $C_0$ is sufficiently large.
\subsection{Proof of Lemma~\ref{lemma:u1-u2-l2norm}}
\label{sec:pf-lemma:u1-u2-l2norm}

\paragraph{Step 1: deriving the expressions of the derivatives.}
%
Recall the definition of $\theta^\star(\tau)\defn \tilde{s}^\star(x_\tau^\star,\tau)$ in \eqref{eq:s-star-defn}, which can be expressed as
\begin{align}
\frac{\theta^\star(\tau)}{(1-\tau)^{3/2}}
&= -\frac{1}{\tau(1-\tau)}\int_{x_0} p_{\wt{X}_0 \mid \wt{X}_\tau}(x_0 \mid x^\star_\tau)\bigg(\frac{x^\star_\tau}{\sqrt{1-\tau}} - x_0\bigg) \diff x_0 \notag\\
& = -\frac{1}{\tau(1-\tau)}\frac{\int_{x_0} p_{\wt{X}_0 \mid \wt{X}_\tau}(x_0 \mid x^\star_\tau)\Big(\frac{x^\star_\tau}{\sqrt{1-\tau}} - x_0\Big) \diff x_0}{\int_{x_0} p_{\wt{X}_0 \mid \wt{X}_\tau}(x_0 \mid x_\tau^\star) \diff x_0} \label{eq:score-fn-expression}
\end{align}
Note that conditional on $\wt{X}_0$, $\wt{X}_\tau \sim \mathcal{N}(\sqrt{1-\tau}\wt{X}_0,\tau I_d)$ with density
\begin{align*}
p_{\wt{X}_\tau \mid \wt{X}_0}(x_\tau^\star \mid x_0)
= (2\pi\tau)^{-d/2}\exp\biggl(-\frac{1-\tau}{2\tau}\bigg\|\frac{x_\tau^\star}{\sqrt{1-\tau}} - x_0\bigg\|_2^2\bigg).
\end{align*}
Hence, by defining
\begin{align}
	\Delta_\delta(\tau) \defn \frac{1-\tau}{2\tau}\bigg\|\frac{x_\tau^\star}{\sqrt{1-\tau}} - x_0\bigg\|_2^2 -\frac{1-\tau-\delta}{2(\tau+\delta)}\bigg\|\frac{x_{\tau+\delta}^\star}{\sqrt{1-\tau-\delta}}-x_0\bigg\|_2^2, \label{def:delta}
\end{align}
one can express
\begin{align*}
	p_{\wt{X}_{\tau+\delta} \mid \wt{X}_0}(x_{\tau+\delta}^{\star} \mid x_0)
	& = \big(2\pi(\tau+\delta) \big)^{-d/2}\exp\biggl(-\frac{1-\tau-\delta}{2(\tau+\delta)}\bigg\|\frac{x_{\tau+\delta}^\star}{\sqrt{1-\tau-\delta}}-x_0\bigg\|_2^2\bigg) \\
	& = \bigg(\frac{\tau+\delta}{\tau}\bigg)^{-d/2} (2\pi\tau)^{-d/2}\exp\biggl(-\frac{1-\tau}{2\tau}\bigg\|\frac{x_{\tau}^\star}{\sqrt{1-\tau}}-x_0\bigg\|_2^2\bigg) \exp\big(\Delta_\delta(\tau)\big) \\
	& = \bigg(\frac{\tau+\delta}{\tau}\bigg)^{-d/2} p_{\wt{X}_\tau \mid \wt{X}_0}(x_{\tau}^{\star} \mid x_0) \exp\big(\Delta_\delta(\tau)\big).
\end{align*}
It follows from the Bayes formula that
\begin{align*}
	p_{\wt{X}_0 \mid \wt{X}_{\tau+\delta}}(x_0 \mid x_{\tau+\delta}^\star) & = \frac{p_{\wt{X}_{\tau+\delta} \mid \wt{X}_0}(x_{\tau+\delta}^{\star} \mid x_0) p_{\wt{X}_0}(x_0)}{p_{\wt{X}_{\tau+\delta}}(x_{\tau+\delta}^\star)} = \frac{p_{\wt{X}_\tau \mid \wt{X}_0}(x_{\tau}^{\star} \mid x_0) \exp\big(\Delta_\delta(\tau)\big) p_{\wt{X}_0}(x_0)}{p_{\wt{X}_\tau}(x_{\tau}^\star)} \frac{p_{\wt{X}_\tau}(x_{\tau}^\star)}{p_{\wt{X}_{\tau+\delta}}(x_{\tau+\delta}^\star)} \\
	& = p_{\wt{X}_0 \mid \wt{X}_\tau}(x_0 \mid x_{\tau}^\star) \exp\big(\Delta_\delta(\tau)\big) \bigg(\frac{\tau+\delta}{\tau}\bigg)^{-d/2} \frac{p_{\wt{X}_\tau}(x_{\tau}^\star)}{p_{\wt{X}_{\tau+\delta}}(x_{\tau+\delta}^\star)}.
\end{align*}

Therefore, plugging this into \eqref{eq:score-fn-expression} shows that for sufficiently small $\delta$,
\begin{align}
\frac{\theta^{\star}(\tau+\delta)}{(1-\tau-\delta)^{3/2}}
& = -\frac{1}{(\tau+\delta)(1-\tau-\delta)} \bigg(\frac{x_{\tau+\delta}^{\star}}{\sqrt{1-\tau-\delta}} - \frac{x_{\tau}^{\star}}{\sqrt{1-\tau}}\bigg) \nonumber \\
& \quad 
-\frac{1}{(\tau+\delta)(1-\tau-\delta)}
\frac{\int_{x_0} p_{\wt{X}_0 \mid \wt{X}_{\tau+\delta}}(x_0 \mid x_{\tau+\delta}^{\star})\Big(\frac{x_{\tau}^{\star}}{\sqrt{1-\tau}} - x_0\Big) \diff x_0}
{\int_{x_0} p_{\wt{X}_0 \mid \wt{X}_{\tau+\delta}}(x_0 \mid x_{\tau+\delta}^{\star}) \diff x_0} \notag\\
&= \underbrace{- \frac{1}{(\tau+\delta)(1-\tau-\delta)} 
\bigg(\frac{x_{\tau+\delta}^{\star}}{\sqrt{1-\tau-\delta}} - \frac{x_{\tau}^{\star}}{\sqrt{1-\tau}}\bigg)}_{\mathrm{(I)}} \nonumber \\
& \quad 
\underbrace{-\frac{1}{(\tau+\delta)(1-\tau-\delta)}
 \frac{\int_{x_0} p_{\wt{X}_0 \mid \wt{X}_\tau}(x_0 \mid x_{\tau}^{\star})\exp\big(\Delta_\delta(\tau)\big)\Big(\frac{x_{\tau}^{\star}}{\sqrt{1-\tau}} - x_0\Big) \diff x_0}
 {\int_{x_0} p_{\wt{X}_0 \mid \wt{X}_\tau}(x_0 \mid x_{\tau}^{\star})\exp\big(\Delta_\delta(\tau)\big) \diff x_0}}_{\mathrm{(II)}} \label{eq:expansion-temp}\\
&= \frac{\theta^\star(\tau)}{(1-\tau)^{3/2}} + u_1(\tau)\delta  + \frac{1}{2} u_2(\tau)\delta^2 + o(\delta^2), \label{eq:expansion}
\end{align}
where we denote the first and second derivatives of $\theta^\star$ by
\begin{align*}
u_1(\tau) := \frac{\diff}{\diff \tau}\frac{\theta^\star(\tau)}{(1-\tau)^{3/2}}
\qquad\text{and}\qquad
u_2(\tau) := \frac{\diff^2}{\diff \tau^2}\frac{\theta^\star(\tau)}{(1-\tau)^{3/2}},
\end{align*}
respectively.
Now, let us proceed to compute the expressions of terms (I) and (II).

\begin{itemize}
	\item For the term (I) in \eqref{eq:expansion-temp}, notice that the ODE \eqref{eq:ODE-star} implies that
\begin{align}
\frac{x_{\tau+\delta}^{\star}}{\sqrt{1-\tau-\delta}}-\frac{x_{\tau}^{\star}}{\sqrt{1-\tau}}
= - \frac{\theta^\star(\tau)}{2(1-\tau)^{3/2}}\delta - \frac{u_1(\tau)}{4} \delta^2 + o(\delta^2). \label{eq:x-tau-delta-diff}
\end{align}
Combining this with
\begin{align}
	\frac{1}{(\tau+\delta)(1-\tau-\delta)} = \frac{1}{\tau(1-\tau)} + \frac{2\tau-1}{\tau^2(1-\tau)^2} \delta + \frac{3\tau^2-3\tau+1}{\tau^3(1-\tau)^3} \delta^2 + o(\delta^2), \label{eq:tau-delta}
\end{align}
we can express the first term in \eqref{eq:expansion-temp} as
\begin{align}
 &-\frac{1}{(\tau+\delta)(1-\tau-\delta)} 
\bigg(\frac{x_{\tau+\delta}^{\star}}{\sqrt{1-\tau-\delta}} - \frac{x_{\tau}^{\star}}{\sqrt{1-\tau}}\bigg) \nonumber \\ 
 & \qquad=  \frac{\theta^\star(\tau)}{2\tau(1-\tau)^{5/2}} \delta + \frac12\bigg(\frac{u_1(\tau)}{2\tau(1-\tau)}+\frac{\theta^\star(\tau)}{(1-\tau)^{3/2}} \frac{2\tau-1}{\tau^2(1-\tau)^2}\bigg)\delta^2 + o(\delta^2) \label{eq:temp1}
\end{align}

\item Turning to (II) in \eqref{eq:expansion-temp}, let us first consider the term $\exp\big(\Delta_\delta(\tau)\big)$. We can decompose
\begin{align*}
	& \bigg(\frac{1}{\tau+\delta}-1\bigg)\bigg\|\frac{x_{\tau+\delta}^{\star}}{\sqrt{1-\tau-\delta}}-x_0\bigg\|_2^2 \\
	& \qquad \numpf{i}{=} \bigg(\frac{1-\tau}{\tau}-\frac{\delta}{\tau^2} + \frac{\delta^2}{\tau^3}+ o(\delta^2) \bigg)\bigg\|\frac{x_{\tau}^{\star}}{\sqrt{1-\tau}} - x_0 -\frac{\theta^\star(\tau)}{2(1-\tau)^{3/2}}\delta  - \frac{u_1(\tau)}{4}\delta^2 + o(\delta^2)\bigg\|_2^2 \\
	& \qquad= \bigg(\frac1\tau -1\bigg)\bigg\|\frac{x_{\tau}^{\star}}{\sqrt{1-\tau}} - x_0\bigg\|_2^2 
	-\delta\bigg(\frac{1}{\tau^2}\bigg\|\frac{x_{\tau}^{\star}}{\sqrt{1-\tau}} - x_0\bigg\|_2^2 + \frac{1-\tau}{\tau}\bigg\langle\frac{x_\tau^\star}{\sqrt{1-\tau}}-x_0,\frac{\theta^\star(\tau)}{(1-\tau)^{3/2}} \bigg\rangle \bigg) \\
	& \qquad\quad + \delta^2 \Bigg(\frac{1}{\tau^3}\bigg\|\frac{x_{\tau}^{\star}}{\sqrt{1-\tau}} - x_0\bigg\|_2^2+\bigg\langle\frac{x_\tau^\star}{\sqrt{1-\tau}}-x_0\,,\frac{\theta^\star(\tau)}{\tau^2(1-\tau)^{3/2}} - \frac{(1-\tau)u_1(\tau)}{2\tau} \bigg\rangle + \frac{1-\tau}{4\tau}\frac{\|\theta^\star(\tau)\|_2^2}{(1-\tau)^3}   \Bigg) \\
	& \qquad\quad + o(\delta^2),
\end{align*}
where (i) applies \eqref{eq:x-tau-delta-diff}.
Substituting this into \eqref{def:delta} yields that
\begin{align*}
\Delta_\delta(\tau) = f_1(\tau) \delta + \frac12f_2(\tau) \delta^2  + o(\delta^2),
\end{align*}
where $f_1(\tau)$ and $f_2(\tau)$ are given by
\begin{subequations}
\label{eq:f}
\begin{align}
	f_1(\tau) &\defn \frac12 \bigg(\frac{1}{\tau^2}\bigg\|\frac{x_{\tau}^{\star}}{\sqrt{1-\tau}} - x_0\bigg\|_2^2 + \frac{1-\tau}{\tau} \bigg\langle\frac{x_\tau^\star}{\sqrt{1-\tau}}-x_0,\,\frac{\theta^\star(\tau)}{(1-\tau)^{3/2}} \bigg\rangle \bigg); \label{eq:f1} \\
	f_2(\tau) &\defn -\frac{1}{\tau^3}\bigg\|\frac{x_{\tau}^{\star}}{\sqrt{1-\tau}} - x_0\bigg\|_2^2-\bigg\langle\frac{x_\tau^\star}{\sqrt{1-\tau}}-x_0,\,\frac{\theta^\star(\tau)}{\tau^2(1-\tau)^{3/2}} - \frac{(1-\tau)u_1(\tau)}{2\tau} \bigg\rangle - \frac{\|\theta^\star(\tau)\|_2^2}{4\tau(1-\tau)^2}. \label{eq:f2}
\end{align}
\end{subequations}
In particular, we obtain that for $\delta$ sufficiently small,
\begin{align*}
	\exp(\Delta) & = 1 + \Delta+\frac12\Delta^2 + o(\Delta^2) = 1 + f_1 \delta + \frac12 \big(f_2 +  f_1^2 \big)\delta^2 + o(\delta^2).
\end{align*}
This allows us to express
\begin{align*}
& \frac{\int_{x_0} p_{\wt{X}_0 \mid \wt{X}_\tau}(x_0 \mid x_{\tau}^{\star})\exp(\Delta)\Big(\frac{x_{\tau}^{\star}}{\sqrt{1-\tau}} - x_0\Big) \diff x_0}
	{\int_{x_0} p_{\wt{X}_0 \mid \wt{X}_\tau}(x_0 \mid x_{\tau}^{\star})\exp(\Delta) \diff x_0} \\
&\qquad = \frac{\int_{x_0} p_{\wt{X}_0 \mid \wt{X}_\tau}(x_0 \mid x_{\tau}^{\star})\big(1 + f_1 \delta + (f_2+f_1^2)\delta^2/2 \big)\Big(\frac{x_{\tau}^{\star}}{\sqrt{1-\tau}} - x_0\Big) \diff x_0 + o(\delta^2)}
	{1 + \int_{x_0} p_{\wt{X}_0 \mid \wt{X}_\tau}(x_0 \mid x_{\tau}^{\star})\big(f_1 \delta + (f_2+f_1^2)\delta^2/2 \big) \diff x_0 + o(\delta^2)} \\
&\qquad = \bigg(1 - \int_{x_0} p_{\wt{X}_0 \mid \wt{X}_\tau}(x_0 \mid x_{\tau}^{\star})\Big(f_1 \delta + \frac12(f_2+f_1^2)\delta^2 \Big) \diff x_0 + o(\delta^2) \bigg)  \\
	& \qquad \qquad \cdot \bigg(\int_{x_0} p_{\wt{X}_0 \mid \wt{X}_\tau}(x_0 \mid x_{\tau}^{\star})\Big(1 + f_1 \delta + \frac12(f_2+f_1^2)\delta^2 \Big)\Big(\frac{x_{\tau}^{\star}}{\sqrt{1-\tau}} - x_0\Big) \diff x_0 + o(\delta^2)\bigg) \\
&\qquad = \Big(1-g_1\delta-\frac12g_2\delta^2 + o(\delta^2)\Big)\Big(h_0+h_1\delta+\frac12h_2\delta^2+ o(\delta^2)\Big) \\
& \qquad = h_0 + \big(h_1-g_1h_0\big)\delta + \frac12(h_2-2g_1h_1-g_2h_0)\delta^2 + o(\delta^2),
\end{align*}
where we denote
\begin{subequations}
\label{eq:gh}
\begin{align}
	h_0(\tau) &\defn \int_{x_0} p_{\wt{X}_0 \mid \wt{X}_\tau}(x_0 \mid x_{\tau}^{\star})\Big(\frac{x_{\tau}^{\star}}{\sqrt{1-\tau}} - x_0\Big) \diff x_0 = -\frac{\tau}{\sqrt{1-\tau}}\theta^\star(\tau); \label{eq:h0} \\
	g_1(\tau) &\defn \int_{x_0} p_{\wt{X}_0 \mid \wt{X}_\tau}(x_0 \mid x_{\tau}^{\star})f_1(\tau)  \diff x_0; \label{eq:g1}
		\\
	g_2(\tau) &\defn \int_{x_0} p_{\wt{X}_0 \mid \wt{X}_\tau}(x_0 \mid x_{\tau}^{\star})\big(f_2(\tau) +  f_1^2(\tau) \big)  \diff x_0; \label{eq:g2}
		 \\
	h_1(\tau) &\defn \int_{x_0} p_{\wt{X}_0 \mid \wt{X}_\tau}(x_0 \mid x_{\tau}^{\star})f_1(\tau) \Big(\frac{x_{\tau}^{\star}}{\sqrt{1-\tau}} - x_0\Big)  \diff x_0; \label{eq:h1} \\
	h_2(\tau) &\defn \int_{x_0} p_{x_0 \mid \wt{X}_\tau}(x_0 \mid x_{\tau}^{\star})\big(f_2(\tau) +  f_1^2(\tau) \big) \Big(\frac{x_{\tau}^{\star}}{\sqrt{1-\tau}} - x_0\Big)  \diff x_0. \label{eq:h2}
\end{align}
\end{subequations}
Combining this with \eqref{eq:tau-delta} yields
\begin{align}
 -(\mathrm{II})&= \frac{1}{(\tau+\delta)(1-\tau-\delta)}
\frac{\int_{x_0} p_{\wt{X}_0 \mid \wt{X}_\tau}(x_0 \mid x_{\tau}^{\star})\exp(\Delta)\Big(\frac{x_{\tau}^{\star}}{\sqrt{1-\tau}} - x_0\Big) \diff x_0}
{\int_{x_0} p_{\wt{X}_0 \mid \wt{X}_\tau}(x_0 \mid x_{\tau}^{\star})\exp(\Delta) \diff x_0} \nonumber \\
&  = \frac{h_0}{\tau(1-\tau)} + \bigg( \frac{h_1-g_1h_0}{\tau(1-\tau)} + \frac{(2\tau-1)h_0}{\tau^2(1-\tau)^2} \bigg)\delta \nonumber \\
& \quad + \bigg( \frac{h_2-2g_1h_1-g_2h_0}{2\tau(1-\tau)} + \frac{(2\tau-1)(h_1-g_1h_0)}{\tau^2(1-\tau)^2} + \frac{(3\tau^2-3\tau+1)h_0}{\tau^3(1-\tau)^3}\bigg)\delta^2 + o(\delta^2) \label{eq:temp2}.
\end{align}

\item Putting \eqref{eq:temp1} and \eqref{eq:temp2} together with \eqref{eq:expansion} reveals that
\begin{subequations}
\begin{align}
	u_1(\tau) &= \frac{\theta^\star(\tau)}{2\tau(1-\tau)^{5/2}} - \frac{h_1(\tau)-g_1(\tau)h_0(\tau)}{\tau(1-\tau)} - \frac{(2\tau-1)h_0(\tau)}{\tau^2(1-\tau)^2}, \label{eq:u1-expression} \\
	u_2(\tau) &= \frac{u_1(\tau)}{2\tau(1-\tau)}+\frac{(2\tau-1)\theta^\star(\tau)}{\tau^2(1-\tau)^{7/2}} -  \frac{h_2(\tau)-2g_1(\tau)h_1(\tau)-g_2(\tau)h_0(\tau)}{\tau(1-\tau)}\nonumber \\ 
	& \quad -  \frac{2(2\tau-1)\big(h_1(\tau)-g_1(\tau)h_0(\tau)\big)}{\tau^2(1-\tau)^2}  -  \frac{2(3\tau^2-3\tau+1)h_0(\tau)}{\tau^3(1-\tau)^3}. \label{eq:u2-expression}
\end{align}
\end{subequations}
\end{itemize}

\paragraph{Step 2: bounds for the expected $\ell_2$ norms of the derivatives.}
\begin{itemize}
	\item Let us begin with bounding $\bE\big[\|u_1\|_2^k\big]$.
First, for any integer $k>0$, we can use \eqref{eq:s-star-l2norm-ub} in Lemma~\ref{lemma:x-star-x0-dist-l2norm} to bound $h_0$ in \eqref{eq:h0} by
\begin{align}
	\bE\big[\|h_0\|_2^k\big] = \frac{\tau^k}{(1-\tau)^{k/2}} \bE\Big[\big\|\tilde{s}^\star\big(\wt{X}_\tau,\tau\big)\big\|_2^k\Big] \leq C_k \frac{\tau^k}{(1-\tau)^{k/2}} \bigg(\frac{d}{\tau}\bigg)^{k/2} = C_k \bigg(\frac{\tau d}{1-\tau}\bigg)^{k/2} \label{eq:c0-l2norm-ub}
\end{align}
for some sufficiently large constant $C_k>0$ that only replies on $k$.

As for $f_1$ in \eqref{eq:f1} and $g_1$ in \eqref{eq:g1}, one can bound
\begin{align}
	\bE\big[|g_1|^k\big] 
	& \numpf{i}{\leq} \bE\big[|f_1|^k\big]\nonumber \numpf{ii}{\lesssim} C_k \bigg\{ \frac{1}{\tau^{2k}(1-\tau)^k} \bE \Big[\big\|\wt{X}_\tau - \sqrt{1-\tau} \wt{X}_0\big\|_2^{2k} \Big] \notag \\
	& \hspace{9em} + \frac{1}{\tau^k(1-\tau)^k} \sqrt{\bE\Big[\big\| \wt{X}_\tau-\sqrt{1-\tau} \wt{X}_0\big\|^{2k}\Big] \bE\Big[\big\|\tilde{s}^\star\big(\wt{X}_\tau,\tau\big) \big\|_2^{2k}\Big]} \bigg\} \nonumber\\
	& \numpf{iii}{\leq} C_k \bigg\{\frac{1}{\tau^{2k}(1-\tau)^k} (\tau d)^k + \frac{1}{\tau^k(1-\tau)^k} (\tau d)^{k/2} \bigg(\frac{d}{\tau}\bigg)^{k/2}\bigg\} \asymp C_k \bigg(\frac{d}{\tau(1-\tau)}\bigg)^k, \label{eq:b1-l2norm-ub}
\end{align}
where (i) follows from Jensen's inequality and the tower property, (ii) arises from the Cauchy-Schwarz inequality, and (iii) is due to \eqref{eq:x-star-x0-dist-l2norm}--\eqref{eq:s-star-l2norm-ub} in Lemma~\ref{lemma:x-star-x0-dist-l2norm}.

Turning to $h_1$ in \eqref{eq:h1}, we can bound
\begin{align}
	\bE\big[\| h_1 \|_2^k\big] & \numpf{i}{\leq} \frac{1}{(1-\tau)^{k/2}}
	\sqrt{\bE\big[ f_1^{2k} \big] \bE\Big[ \big\|\wt{X}_\tau - \sqrt{1-\tau} \wt{X}_0 \big\|_2^{2k} \Big]} \notag \\
	& \numpf{ii}{\lesssim} C_k \bigg(\frac{d}{\tau(1-\tau)}\bigg)^k \bigg(\frac{\tau d}{1-\tau}\bigg)^{k/2} =C_k \frac{1}{\tau^{k/2}}\bigg(\frac{d}{1-\tau}\bigg)^{3k/2}. \label{eq:c1-l2norm-ub}
\end{align}
where (i) follows from Jensen's inequality, the tower property, and the Cauchy-Schwarz inequality; (ii) uses \eqref{eq:b1-l2norm-ub} and \eqref{eq:x-star-x0-dist-l2norm} in Lemma~\ref{lemma:x-star-x0-dist-l2norm}.
Consequently, we have
\begin{align}
	\bE\big[\| h_1-g_1h_0 \|_2^{k}\big]
	&\lesssim \bE\big[\| h_1\|_2^{k}\big] + \sqrt{\bE\big[g_1^{2k}\big] \bE\big[\| h_0\|_2^{2k}\big]} \notag\\
	& \leq C_k \bigg\{\frac{1}{\tau^{k/2}}\bigg(\frac{d}{1-\tau}\bigg)^{3k/2} + \bigg(\frac{d}{\tau(1-\tau)}\bigg)^k \bigg(\frac{\tau d}{1-\tau}\bigg)^{k/2} \bigg\}\nonumber \\
	& \asymp C_k \frac{1}{\tau^{k/2}}\bigg(\frac{d}{1-\tau}\bigg)^{3k/2}. \label{eq:c1-b1c0-l2norm-ub}
\end{align}

With these bounds in place, we can substitute bounds \eqref{eq:b1-l2norm-ub}--\eqref{eq:c1-b1c0-l2norm-ub} into \eqref{eq:u1-expression} to obtain
\begin{align}
	\bE\big[\| u_1 \|_2^{k}\big] & \lesssim C_k\bigg\{ \frac{\bE\big[\|\theta^\star\|_2^k\big]}{\tau^k(1-\tau)^{5k/2}} + \frac{\bE\big[\| h_1-g_1h_0 \|_2^k\big]}{\tau^k(1-\tau)^k} + \frac{\bE\big[\| h_0 \|_2^k\big]}{\tau^{2k}(1-\tau)^{2k}} \bigg\}  \nonumber \\
	& \lesssim C_k \bigg\{\frac{1}{\tau^k (1-\tau)^{5k/2}} \bigg(\frac{d}{\tau}\bigg)^{k/2} 
	+ \frac{1}{\tau^k(1-\tau)^k}\frac{1}{\tau^{k/2}}\bigg(\frac{d}{1-\tau}\bigg)^{3k/2} 
	+ \frac{1}{\tau^{2k} (1-\tau)^{2k}} \bigg(\frac{\tau d}{1-\tau}\bigg)^{k/2} \bigg\} \nonumber \\
	& \asymp C_k \frac{1}{(1-\tau)^k}\bigg(\frac{d}{\tau(1-\tau)}\bigg)^{3k/2}. \label{eq:u1-l2norm-ub}
\end{align}
This proves the claim in \eqref{eq:u1-l2norm-ub-lemma}.

\item 
It remains to control the $\ell_2$ norm of $u_2$. To this end,
applying the Cauchy-Schwartz inequality to $f_2$ defined in \eqref{eq:f2} leads to
\begin{align}
	\bE\big[ |f_2|^{k}\big] & \lesssim C_k\bigg\{ \frac{1}{\tau^{3k}(1-\tau)^k} \bE\Big[\big\|\wt{X}_\tau - \sqrt{1-\tau}\wt{X}_0\big\|_2^{2k} \Big] \notag \\
	& \qquad \qquad + \frac{1}{\tau^{2k}(1-\tau)^{2k}} \sqrt{\bE\Big[\big\| \wt{X}_\tau-\sqrt{1-\tau} \wt{X}_0\big\|^{2k}\Big] \bE\Big[\big\|\tilde{s}^\star\big(\wt{X}_\tau,\tau\big) \big\|_2^{2k}\Big]} \nonumber \\
	& \qquad \qquad+ \frac{(1-\tau)^{k/2}}{\tau^k} \sqrt{\bE\Big[\big\| \wt{X}_\tau-\sqrt{1-\tau} \wt{X}_0\big\|^{2k}\Big] \bE\big[\|u_1\|_2^{2k}\big]} + \frac{1}{\tau^k(1-\tau)^{2k}}\bE\Big[\big\|\tilde{s}^\star\big(\wt{X}_\tau,\tau\big) \big\|_2^{2k}\Big] \bigg\} \nonumber \\
	& \lesssim C_k \bigg\{\frac{1}{\tau^{3k}(1-\tau)^k} (\tau d)^{k}  + \frac{1}{\tau^{2k}(1-\tau)^{2k}} (\tau d)^{k/2} \bigg(\frac{d}{\tau}\bigg)^{k/2}  \nonumber \\
	& \qquad\qquad + \frac{(1-\tau)^{k/2}}{\tau^k} (\tau d)^{k/2} \frac{1}{(1-\tau)^k}\bigg(\frac{d}{\tau(1-\tau)}\bigg)^{3k/2}  + \frac{1}{\tau^k(1-\tau)^{2k}} \bigg(\frac{d}{\tau}\bigg)^{k} \bigg\} \nonumber \\
	& \asymp C_k \bigg(\frac{d}{\tau(1-\tau)}\bigg)^{2k}, \label{eq:a2-ub}
\end{align}
where the second inequality uses \eqref{eq:x-star-x0-dist-l2norm}--\eqref{eq:s-star-l2norm-ub} in Lemma~\ref{lemma:x-star-x0-dist-l2norm} and \eqref{eq:u1-l2norm-ub}.

Consequently, we can bound $g_2$ in \eqref{eq:g2} by
\begin{align}
	\bE\big[|g_2|^k\big] & \numpf{i}{\lesssim} C_k\Big\{ \bE\big[ |f_2|^k \big]+\bE\big[f_1^{2k} \big] \Big\}
	 \numpf{ii}{\lesssim} C_k \bigg(\frac{d}{\tau(1-\tau)}\bigg)^{2k} \label{eq:b2_ub},
\end{align}
where (i) arises from Jensen's inequality and the tower property; (ii) uses \eqref{eq:a2-ub} and \eqref{eq:b1-l2norm-ub}.

Furthermore, one can control $h_2$ in \eqref{eq:h2} by
\begin{align}
	\bE\big[ \|h_2\|_2^k \big] 
	& \numpf{i}{\lesssim} C_k \bigg\{\sqrt{\bE\big[f_1^{4k}\big]}+\sqrt{\bE\big[ f_2^{2k}\big]}\bigg\} \sqrt{\bE\bigg[\Big\|\frac{\wt{X}_\tau}{\sqrt{1-\tau}}- \wt{X}_0\Big\|_2^{2k}\bigg]} \nonumber \\
	& \numpf{ii}{\lesssim} C_k \bigg(\frac{d}{\tau(1-\tau)}\bigg)^{2k} \bigg(\frac{\tau d }{1-\tau}\bigg)^{k/2} \nonumber\\
	& \asymp C_k \frac{1}{\tau^{3k/2}}\bigg(\frac{d }{1-\tau}\bigg)^{5k/2} \label{eq:c2-l2norm-ub},
\end{align}
where (i) results from Jensen's inequality, the tower property, the Cauchy-Schwarz inequality, and $\sqrt{a+b}\leq \sqrt{a}+\sqrt{b}$ for any $a,b>0$; (ii) applies \eqref{eq:b1-l2norm-ub}, \eqref{eq:a2-ub}, and \eqref{eq:x-star-x0-dist-l2norm} in Lemma~\ref{lemma:x-star-x0-dist-l2norm}.
This allows us to bound
\begin{align}
	& \bE \big[\|h_2-g_1h_1-g_2h_0\|_2^k\big] \notag \\
	& \qquad \lesssim C_k\bigg\{\bE \big[\|h_2\|_2^k\big] + \sqrt{\bE\big[g_1^{2k}\big]\bE\big[\|h_1\|_2^{2k}\big]} + \sqrt{\bE\big[g_2^{2k}\big]\bE\big[\|h_0\|_2^{2k}\big]}\bigg\} \nonumber \\
	& \qquad\lesssim C_k\bigg\{ \frac{1}{\tau^{3k/2}}\bigg(\frac{d }{1-\tau}\bigg)^{5k/2} + \bigg(\frac{d}{\tau(1-\tau)}\bigg)^k \frac{1}{\tau^{k/2}}\bigg(\frac{d}{1-\tau}\bigg)^{3k/2} + \bigg(\frac{d}{\tau(1-\tau)}\bigg)^{2k}\bigg(\frac{\tau d}{1-\tau}\bigg)^{k/2} \bigg\} \nonumber \\
	& \qquad \asymp C_k \frac{1}{\tau^{3k/2}}\bigg(\frac{d }{1-\tau}\bigg)^{5k/2}. \label{eq:c2-b1c1-b2c0-l2norm-ub}
\end{align}

Finally, substituting bounds \eqref{eq:u1-l2norm-ub}, \eqref{eq:s-star-l2norm-ub}, \eqref{eq:c2-b1c1-b2c0-l2norm-ub}, \eqref{eq:c1-b1c0-l2norm-ub}, and \eqref{eq:c0-l2norm-ub} into \eqref{eq:u2-expression} leads to
\begin{align*}
	\bE\big[\|u_2\|_2^k\big] &\lesssim C_k \bigg\{ \frac{\bE\big[\|u_1\|_2^k\big]}{\tau^k(1-\tau)^k}+\frac{\bE\big[\|\theta^\star\|_2^k\big]}{\tau^{2k}(1-\tau)^{7k/2}} \\
	& \qquad\qquad + \frac{\bE\big[\|h_2-g_1h_1-g_2h_0\|_2^k\big]}{\tau^k(1-\tau)^k} + \frac{\bE\big[\|h_1-g_1h_0\|_2^k\big]}{\tau^{2k}(1-\tau)^{2k}} + \frac{\bE\big[\|h_0\|_2^k\big]}{\tau^{3k}(1-\tau)^{3k}} \bigg\} \\
	& \lesssim C_k \bigg\{\frac{1}{\tau^k(1-\tau)^k} \frac{1}{(1-\tau)^k}\bigg(\frac{d}{\tau(1-\tau)}\bigg)^{3k/2} 
	+\frac{1}{\tau^{2k}(1-\tau)^{7k/2}} \bigg(\frac{d}{\tau}\bigg)^{k/2}  \notag\\
	& \qquad \qquad + \frac{1}{\tau^k(1-\tau)^k} \frac{1}{\tau^{3k/2}}\bigg(\frac{d }{1-\tau}\bigg)^{5k/2}
	 + \frac{1}{\tau^{2k}(1-\tau)^{2k}} \frac{1}{\tau^{k/2}}\bigg(\frac{d}{1-\tau}\bigg)^{3k/2} 
	\nonumber \\
	& \qquad \qquad+ \frac{1}{\tau^{3k}(1-\tau)^{3k}} \bigg(\frac{\tau d}{1-\tau}\bigg)^{k/2} \bigg\} \\
	& \asymp C_k \frac{1}{(1-\tau)^k}\bigg(\frac{d }{\tau(1-\tau)}\bigg)^{5k/2},
\end{align*}
as claimed in \eqref{eq:u2-l2norm-ub-lemma}.
\end{itemize}
\paragraph{Step 3: a high-probability bound for the $\ell_2$ norm of the first derivative.}
Fix an arbitrary $\tau\in[0,1)$ such that $-\log p_{\wt{X}_\tau}(x_{\tau}^{\star}) \lesssim d\log T$.
By Lemma \ref{lem:bound_cond}, we know that
\begin{align*}
\mathbb{E}\Bigg[\bigg\|\frac{x^\star_\tau}{\sqrt{1-\tau}} - \wt{X}_0 \bigg\|_2^k\,\Big|\, \wt{X}_\tau = x_{\tau}^{\star}\Bigg] &\leq C_k \bigg(\frac{d\tau\log T}{1-\tau}\bigg)^{k/2}
\end{align*}
for some constant $C_k>0$ that only depends on $k$.
Further, recalling $\theta^\star(\tau) \defn \tilde{s}^\star(x_\tau^\star,\tau)$ is the score function of $p_{\wt X_\tau}$, we know from \eqref{def-score-cont} that
\begin{align*}
	\big\|\theta^\star(\tau)\big\|_2 & = \frac{1}{\tau} \mathbb{E}\Big[\big\|x^\star_\tau - \sqrt{1-\tau} \wt{X}_0 \big\|_2\mid \wt{X}_\tau = x_{\tau}^{\star}\Big] \lesssim \sqrt{\frac{d\log T}{\tau}}.
\end{align*}
As a result, plugging these bounds into the expressions in \eqref{eq:f} and \eqref{eq:gh} yields that
\begin{align*}
	\big\|h_0(\tau)\big\| &\lesssim \frac{\tau}{\sqrt{1-\tau}} \sqrt{\frac{d\log T}{\tau}} = \sqrt{\frac{d\tau\log T}{1-\tau}};
\end{align*}
\begin{align*}
	\big|g_1(\tau)\big| & \lesssim \frac{1}{\tau^2} \int_{x_0}  p_{\wt{X}_0 \mid \wt{X}_\tau}(x_0 \mid x_{\tau}^{\star}) \bigg\|\frac{x_{\tau}^{\star}}{\sqrt{1-\tau}} - x_0\bigg\|_2^2  \diff x_0  + \frac{1}{\tau} \int_{x_0}  p_{\wt{X}_0 \mid \wt{X}_\tau}(x_0 \mid x_{\tau}^{\star}) \bigg\|\frac{x_{\tau}^{\star}}{\sqrt{1-\tau}} - x_0\bigg\|_2 \frac{\| \theta^\star(\tau)\|_2}{\sqrt{1-\tau}}  \diff x_0 \\
& = \frac{1}{\tau^2} \frac{d\tau\log T}{1-\tau} + \frac{1}{\tau} \sqrt{\frac{d\tau\log T}{1-\tau}} \sqrt{\frac{d\log T}{\tau(1-\tau)}} \asymp \frac{d\log T}{\tau(1-\tau)};
\end{align*}
and
\begin{align*}
\big\|h_1(\tau)\big\|_2 & \lesssim \frac{1}{\tau^2} \int_{x_0}  p_{\wt{X}_0 \mid \wt{X}_\tau}(x_0 \mid x_{\tau}^{\star}) \bigg\|\frac{x_{\tau}^{\star}}{\sqrt{1-\tau}} - x_0\bigg\|_2^3  \diff x_0 \\
& \quad + \frac{1}{\tau} \int_{x_0}  p_{\wt{X}_0 \mid \wt{X}_\tau}(x_0 \mid x_{\tau}^{\star}) \bigg\|\frac{x_{\tau}^{\star}}{\sqrt{1-\tau}} - x_0\bigg\|_2^2 \frac{\| \theta^\star(\tau)\|_2}{\sqrt{1-\tau}}  \diff x_0 \\
& = \frac{1}{\tau^2} \bigg(\frac{d\tau\log T}{1-\tau}\bigg)^{3/2} + \frac{1}{\tau} \frac{d\tau\log T}{1-\tau} \sqrt{\frac{d\log T}{\tau(1-\tau)}} \asymp \frac{1}{\sqrt{\tau}} \bigg(\frac{d\log T}{1-\tau}\bigg)^{3/2}.
\end{align*}
Taking these bounds collectively with \eqref{eq:u1-expression}, we arrive at
\begin{align*}
	\big\|u_1(\tau)\big\|_2^k & \lesssim C_k\bigg\{ \frac{\big\|\theta^\star(\tau)\big\|_2^k}{\tau^k(1-\tau)^{5k/2}} + \frac{\big\| h_1(\tau)\big\|_2^k + |g_1(\tau)|^k\big\|h_0(\tau) \big\|_2^k}{\tau^k(1-\tau)^k} + \frac{\big\| h_0(\tau) \big\|_2^k}{\tau^{2k}(1-\tau)^{2k}} \bigg\} \\
	& \lesssim C_k \bigg\{ \frac{1}{\tau^k(1-\tau)^{5k/2}} \bigg(\frac{d\log T}{\tau} \bigg)^{k/2} + \frac{1}{\tau^k(1-\tau)^{k}} \frac{1}{\tau^{k/2}} \bigg(\frac{d\log T}{1-\tau}\bigg)^{3k/2} \notag \\
	& \qquad \qquad+ \frac{1}{\tau^{2k}(1-\tau)^{2k}} \bigg(\frac{d\tau\log T}{1-\tau}\bigg)^{k/2} \bigg\} \\
	& \asymp C_k \frac{1}{(1-\tau)^{k}} \bigg(\frac{d\log T}{\tau(1-\tau)}\bigg)^{3k/2},
\end{align*}
as claimed in \eqref{eq:u1-l2norm-high-prob-ub}.
This completes the proof of Lemma~\ref{lemma:u1-u2-l2norm}.

\subsection{Proof of Lemma~\ref{lemma:x-star-x0-dist-l2norm}}
\label{sec:pf-lemma:x-star-x0-dist-l2norm}

We begin with the first claim. Note that conditional on $\wt{X}_0$, $\wt{X}_\tau$ satisfies $\wt{X}_\tau \sim \cN\big(\sqrt{1-\tau}\wt{X}_0,\tau I_d\big)$. For any integer $k\geq 2$, the standard Gaussian random vector $Z\sim\cN(0,I_d)$ satisfies
\begin{align*}
	\bE \big[ \|Z\|_2^k \big] & = \bE \bigg[ \Big(\sum_{i=1}^d Z_i^2\Big)^{k/2} \bigg] \numpf{i}{\leq} d^{k/2-1} \bE \bigg[ \sum_{i=1}^d| Z_i|^k \bigg] = d^{k/2} \bE \big[|Z_1|^k\big]
	 \numpf{ii}{\leq} \frac{2^{k/2} \Gamma\big( \frac{k+1}{2} \big)}{\sqrt{\pi}}  d^{k/2} ,
\end{align*}
where (i) uses Jensen's equality that the convexity of $x\mapsto x^{k/2}$ when $k\geq 2$; (ii) uses the moment property of the Gaussian random variable and $\Gamma(\cdot)$ is the Gamma function. As for $k=1$, we can also invoke Jensen's inequality to find
\begin{align*}
	\bE \big[ \|Z\|_2 \big] \leq \sqrt{\bE \big[ \|Z\|_2^2 \big]} = \sqrt{d}.
\end{align*}
Consequently, it follows from the tower property that
\begin{align*}
	\bE \Big[ \big\|\wt{X}_\tau -\sqrt{1-\tau}\wt{X}_0\big\|_2^k \Big] = \bE \bigg[\bE \Big[ \big\|\wt{X}_\tau -\sqrt{1-\tau}\wt{X}_0\big\|_2^k \,\big|\, \wt{X}_0 \Big] \bigg] \leq C_k (\tau d)^{k/2}
\end{align*}
for some constant $C_k$ that only relies on $k$.

Turning to the second claim, note that $\tilde s^\star(x,{\tau}) = -\frac1\tau \bE \big[ \wt{X}_\tau -\sqrt{1-\tau}\wt{X}_0  \,\big|\, \wt{X}_\tau =x \big] $. As a result, one can apply Jensen's inequality and the tower property to get
\begin{align*}
	\bE\Big[ \big\|\tilde s^\star\big(\wt{X}_\tau,\tau\big)\big\|_2^k \Big] & = \frac{1}{\tau^k} \bE\bigg[ \Big\|\bE \big[ \wt{X}_\tau -\sqrt{1-\tau}\wt{X}_0  \,\Big|\, \wt{X}_\tau  \big]\Big\|_2^k \bigg]
	\leq \frac{1}{\tau^k} \bE\bigg[ \bE \Big[ \big\| \wt{X}_\tau -\sqrt{1-\tau}\wt{X}_0  \big\|_2^k \,\big|\, \wt{X}_\tau  \Big] \bigg] \\
	& = \frac{1}{\tau^k} \bE \Big[ \big\|\wt{X}_\tau -\sqrt{1-\tau}\wt{X}_0\big\|_2^k \Big] \leq C_k \bigg(\frac d\tau\bigg)^{k/2}.
\end{align*}

\subsection{Proof of Lemma~\ref{lem:bound_cond}}
\label{sec:pf-lem:bound_cond}
The claim \eqref{eq:bound_cond} can be established using the same analysis as in~\citet[Lemma 1]{li2024d}. For brevity of presentation, we omit the detailed proof.
In what follows, we shall focus on proving \eqref{lem:x-tau-pdf-lb}.

Let us assume that there exists some $\wt \tau\in(0,1)$ with $|\wt \tau - \tau| \leq c_0 \tau(1-\tau)$ such that
\begin{align*}
\theta' \defn \sup_{\wt \tau \wedge \tau \le \tau' \le \wt \tau \vee \tau} \frac{-\log p_{\wt{X}_{\tau'}}(x_{\tau'}^{\star})}{d\log T} > 2\theta. 
\end{align*}
Otherwise, \eqref{lem:x-tau-pdf-lb} holds directly. 
In particular, by \eqref{eq:bound_cond} and \eqref{eq:trace-J-ub-derivation}, this assumption implies that
\begin{align*}
	\sup_{\wt \tau \wedge \tau \le \tau' \le \wt \tau \vee \tau} \tr \big(\wt J(x^\star_{\tau'},\tau') \big) \leq \frac{C_7\theta'd\log T}{\tau'(1-\tau')} - \frac{d}{\tau'(1-\tau')} \leq \frac{C_7}{2} \frac{\theta'd\log T}{\tau'(1-\tau')},
\end{align*}
for some sufficiently large absolute constant $C_7>0$, provided $T$ is sufficiently large.

Now, by the probability flow ODE \eqref{eq:ODE-star}, it is straightforward to calculate that
\begin{align*}
	\frac{\diff}{\diff \tau'}  \log p_{\wt{X}_{\tau'}/\sqrt{1-\tau'}}\big(x_{\tau'}^{\star}/\sqrt{1-\tau'}\big) 
	& = \frac{\diff}{\diff \tau'}\log \frac{p_{\wt{X}_{\tau'}/\sqrt{1-\tau'}}\big(x_{\tau'}^{\star}/\sqrt{1-\tau'}\big)}{p_{\wt{X}_\tau/\sqrt{1-\tau}}\big(x_{\tau}^{\star}/\sqrt{1-\tau}\big)} \\
	& = \frac{\diff}{\diff \tau'} \log \deter^{-1}\bigg(\frac{\partial x_{\tau'}^{\star}/\sqrt{1-\tau'}}{\partial x_{\tau}^{\star}/\sqrt{1-\tau}}\bigg) \\
	& = - \frac{\diff}{\diff \tau'} \log \deter \bigg\{\exp\biggl(\int_{\tau}^{\tau'} -\frac12 \wt J(x_{\tau''}^\star,\tau'') \diff \tau'' \bigg)\bigg\} \\
	& = \frac12 \tr\big(\wt J(x_{\tau'}^\star,\tau')\big),
\end{align*}
where the penultimate line uses \eqref{eq:partial-derivative-x}.

Also, for any $\tau'\in(0,1)$ such that $|\tau' - \tau|\leq |\wt \tau - \tau| \leq \tau(1-\tau)/2$, one has
\begin{align*}
	\bigg|\frac{1}{\tau'(1-\tau')}-\frac{1}{\tau(1-\tau)}\bigg| = \frac{|(\tau'-\tau)(1-\tau'-\tau)|}{\tau(1-\tau)} \frac{1}{\tau'(1-\tau')} \leq \frac{1}{2\tau'(1-\tau')}.
\end{align*}
Therefore, combined with $p_{\wt{X}_\tau/\sqrt{1-\tau}}(x_{\tau}^{\star}/\sqrt{1-\tau}) = (1-\tau)^{d/2} p_{X_{\tau}}(x_{\tau}^{\star})$,
this tells us that any $\tau' \in [\wt \tau \wedge \tau, \wt \tau \vee \tau]$, the following holds for $c_0 $ sufficiently small:
\begin{align*}
-\log p_{\wt{X}_{\tau'}}(x_{\tau'}^{\star}) & \leq -\log p_{\wt{X}_\tau}(x_{\tau}^{\star}) + \frac d2 \big|\log(1-\tau')-\log(1-\tau)\big| + |\tau' - \tau|\frac{C_7}{2}\frac{\theta' d\log T}{\tau(1-\tau)} \\
& \numpf{i}{\leq} \theta d \log T + \frac d2 \frac{|\tau'-\tau|}{1-\tau} + |\tau' - \tau|\frac{C_7}{2}\frac{\theta' d\log T}{\tau(1-\tau)} \\
& \numpf{ii}{\leq} \theta d \log T + \frac{c_0 }{2}d + \frac{c_0 C_7}{2}\theta' d\log T
\numpf{iii}{\leq} \frac34 \theta' d \log T,
\end{align*}
where we use $\log(1+x)\leq |x|$ for any $x$, (ii) is true as $|\tau' - \tau|\leq |\wt \tau - \tau| \leq c_0 \tau(1-\tau) \leq c_0 (1-\tau)$, (iii) holds by the assumption $\theta' > 2\theta$.
This leads to a contradiction. Hence, the proof of \eqref{lem:x-tau-pdf-lb} is complete.

\end{document}